\newtheorem{Def}{Definition}[section]
\newtheorem{Thm}{Theorem}[section]
\newtheorem{Lem}{Lemma}[section]
\newtheorem{Cor}{Corollary}[section]
\newtheorem{Asu}{Assumption}[section]
\newtheorem{Rem}{Remark}[section]
\DeclareMathOperator*{\argmin}{arg\,min}
\newcommand{\BigO}{\mathcal O}
\newcommand{\polylog}{\textup{polylog}}
\newcommand{\SOR}{\texttt{SOR}}
\newcommand{\SSOR}{\texttt{SSOR}}
\newcommand{\CG}{\texttt{CG}}
\newcommand{\SOLVE}{\texttt{SOLVE}}
\newcommand{\Regret}{\mathrm{Regret}}
\newcommand{\A}{\mathcal A}
\newcommand{\D}{\mathcal D}
\newcommand{\E}{\mathbb E}
\newcommand{\Hyp}{\mathcal H}
\newcommand{\R}{\mathbb R}
\newcommand{\X}{\mathcal X}
\newcommand{\Z}{\mathbb Z}
\let\svsqrt\sqrt
\newsavebox\Nsqrt
\def\sr#1{\ThisStyle{%
		\savebox\Nsqrt{\scalebox{.5}[1]{$\SavedStyle\svsqrt{\phantom{\cramped{#1#1}}}$}}%
		\ooalign{\usebox{\Nsqrt}\cr\kern.2pt\usebox{\Nsqrt}\cr\hfil$\SavedStyle\cramped{#1}$}}}
\newcommand{\smax}{\textup{\tiny max}}
\newcommand{\smin}{\textup{\tiny min}}
\def\*#1{\mathbf{#1}}
\title{
	Learning to Relax:
	Setting Solver Parameters Across a
	Sequence of Linear System Instances
}
\author{
	\hspace{-3mm}Mikhail Khodak \\ 
	\hspace{-3mm}~{\footnotesize CMU} \\ \hspace{-3mm}~{\footnotesize\texttt{khodak@cmu.edu}} \And
	\hspace{-1.mm}Edmond Chow \\ 
	\hspace{-1mm}{\footnotesize Georgia Tech.}\\ 
	\hspace{-1mm}{\footnotesize\texttt{echow@cc.gatech.edu}} \And
	\hspace{-1mm}Maria-Florina Balcan \\ 
	\hspace{-1mm}{\footnotesize CMU} \\ 
	\hspace{-1mm}{\footnotesize\texttt{ninamf@cs.cmu.edu}} \And
	\hspace{-1mm}Ameet Talwalkar \\ 
	\hspace{-1mm}{\footnotesize CMU} \\ \hspace{-1mm}{\footnotesize\texttt{talwalkar@cmu.edu}}
}
\begin{document}

\maketitle

\vspace{-2.5mm}
\begin{abstract}\vspace{-1.5mm}
	Solving a linear system $\*A\*x=\*b$ is a fundamental scientific computing primitive for which numerous solvers and preconditioners have been developed. 
	These come with parameters whose optimal values depend on the system being solved and are often impossible or too expensive to identify;
	thus in practice sub-optimal heuristics are used.
	We consider the common setting in which many related linear systems need to be solved, e.g. during a single numerical simulation.
	In this scenario, can we sequentially choose parameters that attain a near-optimal overall number of iterations, without extra matrix computations?
	We answer in the affirmative for Successive Over-Relaxation~(SOR), a standard solver whose parameter $\omega$ has a strong impact on its runtime.
	For this method, we prove that a bandit online learning algorithm---using only the number of iterations as feedback---can select parameters for a sequence of instances such that the overall cost approaches that of the best fixed $\omega$ as the sequence length increases.
	Furthermore, when given additional structural information, we show that a {\em contextual} bandit method asymptotically achieves the performance of the {\em instance-optimal} policy, which selects the best $\omega$ for each instance.
	Our work provides the first learning-theoretic treatment of high-precision linear system solvers and the first end-to-end guarantees for data-driven scientific computing, demonstrating theoretically the potential to speed up numerical methods using well-understood learning algorithms.\looseness-1
\end{abstract}


\vspace{-1.5mm}
\section{Introduction}
\vspace{-1.5mm}

The bottleneck subroutine in many scientific computations is a solver returning an approximate solution to a linear system.
For example, simulating a partial differential equation (PDE) often involves solving sequences of high-dimensional systems to very high precision~\citep{thomas1999numerical}.
A vast array of solvers and preconditioners have thus been developed, many of which have tunable parameters that significantly affect runtime~\citep{greenbaum1997iterative,hackbusch2016iterative}.
There is a long literature analyzing these algorithms, and indeed for some problems we have a strong understanding of the optimal parameters for a given matrix.
However, computing them can be more costly than solving the original system, leading to an assortment of heuristics for setting good parameters~\citep{ehrlich1981adhoc,golub1999inexact}.\looseness-1\vspace{-.75mm}

We provide an alternative to such heuristics by taking advantage of the fact that we often sequentially solve {\em many} linear systems.
In addition to numerical simulation, this occurs in graphics computations such as mean-curvature flow~\citep{kazhdan2012can}, nonlinear system solvers~\citep{marquardt1963algorithm}, and beyond.
A natural approach is to treat these instances as data to be passed to a machine learning~(ML) algorithm;
in particular the framework of online learning~\citep{cesa-bianchi2006prediction} provides a language to reason about such sequential learning problems.
For example, if we otherwise would solve a sequence of linear systems $(\*A_1,\*b_1),\dots,(\*A_T,\*b_T)$ using a given solver with a fixed parameter, can we use ML to do as well as the best choice of that parameter, i.e. can we {\em minimize regret}?
Or, if the matrices are all diagonal shifts of single matrix $\*A$, can we learn the functional relationship between the shift $c_t$ and the optimal solver parameter for $\*A_t=\*A+c_t\*I_n$, i.e. can we predict using {\em context}?\looseness-1\vspace{-.75mm}

We investigate these questions for the Successive Over-Relaxation~(SOR) solver, a generalization of Gauss-Seidel whose relaxation parameter $\omega\in(0,2)$ dramatically affects the number of iterations (c.f. Figure~\ref{fig:asymptotic}, noting the log-scale).
SOR and its symmetric variant are well-studied and often used as preconditioners for Krylov methods such as conjugate gradient (CG), as bases for semi-iterative schemes, and as multigrid smoothers.
Analogous to some past setups in data-driven algorithms~\citep{balcan2018dispersion,khodak2022awp}, we sequentially set the parameter $\omega_t$ for SOR to use when solving each linear system $(\*A_t,\*b_t)$.
Unlike past theoretical studies of related methods~\citep{gupta2017pac,bartlett2022gj,balcan2022provably}, we aim to provide {\em end-to-end} guarantees---covering the full pipeline from data-intake to efficient learning to execution---while minimizing dependence on the dimension ($n$ can be $10^5$ or higher) and precision ($1/\varepsilon$ can be $10^8$ or higher).
We emphasize that we do {\em not} seek to immediately improve the empirical state of the art, and also that existing research on saving computation when solving sequences of linear systems (recycling Krylov subspaces, reusing preconditioners, etc.) is complementary to our own, i.e. it can be used in addition to the ideas presented here.\looseness-1

\vspace{-1.5mm}
\subsection{Core contributions}
\vspace{-1.5mm}

We study two distinct theoretical settings, corresponding to views on the problem from two different approaches to data-driven algorithms.
In the first we have a deterministic sequence of instances and study the spectral radius of the iteration matrix, the main quantity of interest in classical analysis of SOR~\citep{young1971iterative}.
We show how to convert its asymptotic guarantee into a surrogate loss that upper bounds the number of iterations via a quality measure of the chosen parameter, in the style of {\em algorithms with predictions}~\citep{mitzenmacher2021awp}.
The bound holds under a {\em near-asymptotic} condition implying that convergence occurs near the asymptotic regime, i.e. when the spectral radius of the iteration matrix governs the convergence.
We verify the assumption and show that one can learn the surrogate losses using only bandit feedback from the original costs;
notably, despite being non-Lipschitz, we take advantage of the losses' unimodal structure to match the optimal $\tilde\BigO(T^{2/3})$ regret for Lipschitz bandits~\citep{kleinberg2004nearly}.
Our bound also depends only logarithmically on the precision and not at all on the dimension.
Furthermore, we extend to the diagonally shifted setting described before, showing that an efficient, albeit pessimistic, contextual bandit (CB) method has $\tilde\BigO(T^{3/4})$ regret w.r.t. the instance-optimal policy that always picks the best $\omega_t$.
Finally, we show a similar analysis of learning a relaxation parameter for the more popular (symmetric SOR-preconditioned) CG method.\looseness-1\vspace{-.75mm}

Our second setting is {\em semi-stochastic}, with target vectors $\*b_t$ drawn i.i.d. from a (radially truncated) Gaussian.
This is a reasonable simplification, as convergence usually depends more strongly on $\*A_t$, on which we make no extra assumptions.
We show that the expected cost of running a symmetric variant of SOR (SSOR) is $\BigO(\sqrt n)\polylog(\frac n\varepsilon)$-Lipschitz w.r.t. $\omega$, so we can (a)~compete with the optimal number of iterations---rather than with the best upper bound---and (b)~analyze more practical, regression-based CB algorithms~\citep{foster2020squarecb,simchi-levi2021bypassing}.
We then show $\tilde\BigO(\sqrt[3]{T^2\sqrt{n}})$ regret when comparing to the single best $\omega$ and $\tilde\BigO(T^{9/11}\sqrt n)$ regret w.r.t. the instance-optimal policy in the diagonally shifted setting using a novel, Chebyshev regression-based CB algorithm. 
While the results do depend on the dimension $n$, the dependence is much weaker than that of past work on data-driven tuning of a related regression problem~\citep{balcan2022provably}.\looseness-1\vspace{-.5mm}

\begin{Rem}
	Likely the most popular algorithms for linear systems are Krylov subspace methods such as CG.
	While an eventual aim of our line of work is to understand how to tune (many) parameters of (preconditioned) CG and other algorithms, SOR is a well-studied method and serves as a meaningful starting point.
	In fact, we show that our near-asymptotic analysis extends directly, and in the semi-stochastic setting there is a natural path to (e.g.) SSOR-preconditioned CG, as it can be viewed as computing polynomials of iteration matrices where SSOR just takes powers.
	Lastly, apart from its use as a preconditioner and smoother, SOR is still sometimes preferred for direct use as well~\citep{fried1978sor,vanvleck1985succesive,king1987comparative,woznicki1993numerical,woznicki2001performance}.\looseness-1
\end{Rem}

\vspace{-1.5mm}
\subsection{Technical and theoretical contributions}
\vspace{-1.5mm}

By studying a scientific computing problem through the lens of data-driven algorithms and online learning, we also make the following contributions to the latter two fields:
\begin{enumerate}[leftmargin=*,topsep=-1mm,noitemsep]
	\item Ours is the first head-to-head comparison of two leading theoretical approaches to data-driven algorithms applied to the same problem.
	While the algorithms with predictions approach in Section~\ref{sec:asymptotic} takes better advantage of the scientific computing literature to obtain (arguably) more interpretable and dimension-independent bounds, data-driven algorithm design~\citep{balcan2021data} competes directly with the quantity of interest in Section~\ref{sec:stochastic} and enables guarantees for modern CB algorithms.\looseness-1
	\item For algorithms with predictions, our near-asymptotic approach may be extendable to other iterative solvers, as we demonstrate with CG.
	We also show that such performance bounds on a (partially-observable) cost are learnable even when the bounds themselves are too expensive to compute.\looseness-1
	\item In data-driven algorithm design, we take the novel theoretical approach of proving continuity of {\em the expectation of} a discrete cost, rather than showing dispersion of its discontinuities~\citep{balcan2018dispersion} or bounding predicate complexity~\citep{bartlett2022gj}.\looseness-1
	\item We introduce the idea of using CB to set {\em instance-adaptive} algorithmic parameters;
	while (linear) instance-adaptivity was also shown via convexity by~\citet{khodak2022awp}, we go further by taking advantage of multi-instance structure to asymptotically do as well as the {\em instance-optimal} policy.\looseness-1
	\item We show that standard discretization-based bandit algorithms are optimal for sequences of adversarially chosen {\em semi-Lipschitz} losses that generalize regular Lipschitz functions (c.f.~Appendix~\ref{sec:semi-lipschitz}).\looseness-1
	\item We introduce a new CB method that combines SquareCB~\citep{foster2020squarecb} with Chebyshev polynomial regression to get sublinear regret on Lipschitz losses~(c.f.~Appendix~\ref{sec:chebyshev}).\looseness-1
\end{enumerate}

\vspace{-1.5mm}
\subsection{Related work and comparisons}
\vspace{-1.5mm}

 We discuss the existing literature on solving sequences of linear systems~\citep{parks2006recycling,tebbens2007efficient,elbouyahyaoui2021restarted}, work integrating ML with scientific computing to amortize cost~\citep{amos2023tutorial,arisaka2023principled}, and past theoretical studies of data-driven algorithms~\citep{gupta2017pac,balcan2022provably} in Appendix~\ref{sec:related}.
 For the latter we include a detailed comparison of the generalization implications of our work with the GJ framework~\citep{bartlett2022gj}.
 Lastly, we address the baseline of approximating the spectral radius of the Jacobi iteration matrix.\looseness-1

\vspace{-1.5mm}
\section{Asymptotic analysis of learning the relaxation parameter}\label{sec:asymptotic}
\vspace{-1.5mm}

We start this section by going over the problem setup and the SOR solver.
Then we consider the asymptotic analysis of the method to derive a reasonable performance upper bound to target as a surrogate loss for the true cost function.
Finally, we prove and analyze online learning guarantees.\looseness-1

\vspace{-1.5mm}
\subsection{Setup}\label{sec:setup}
\vspace{-1.5mm}

At each step $t=1,\dots,T$ of (say) a numerical simulation we get a linear system instance, defined by a matrix-vector pair $(\*A_t,\*b_t)\in\R^{n\times n}\times\R^n$, and are asked for a vector $\*x\in\R^n$ such that the norm of its {\em residual} or {\em defect} $\*r=\*b_t-\*A_t\*x$ is small.
For now we define ``small'' in a relative sense, specifically $\|\*A_t\*x-\*b_t\|_2\le\varepsilon\|\*b_t\|_2$ for some {\em tolerance} $\varepsilon\in(0,1)$;
note that when using an iterative method initialized at $\*x=\*0_n$ this corresponds to reducing the residual by a factor $1/\varepsilon$, which we call the {\em precision}.
In applications it can be quite high, and so we will show results whose dependence on it is at worst logarithmic.
To make the analysis tractable, we make two assumptions (for now) about the matrices $\*A$: 
they are symmetric positive-definite and consistently-ordered (c.f.~\citet[Definition~4.23]{hackbusch2016iterative}).
We emphasize that, while not necessary for convergence, both are standard in the analysis of SOR~\citep{young1971iterative}; 
see \citet[Criterion~4.24]{hackbusch2016iterative} for multiple settings where they holds.\looseness-1\vspace{-.75mm}

To find a suitable $\*x$ for each instance in the sequence we apply Algorithm~\ref{alg:sor} (SOR), which at a high-level works by multiplying the current residual $\*r$ by the inverse of a matrix $\*W_\omega$---derived from the diagonal $\*D$ and lower-triangular component $\*L$ of $\*A$---and then adding the result to the current iterate $\*x$.
Note that multiplication by $\*W_\omega^{-1}$ is efficient because $\*W_\omega$ is triangular.
We will measure the cost of this algorithm by the number of iterations it takes to reach convergence, which we denote by $\SOR(\*A,\*b,\omega)$, or $\SOR_t(\omega)$ for short when it is run on the instance $(\*A_t,\*b_t)$.
For simplicity, we will assume that the algorithm is always initialized at $\*x=\*0_n$, and so the first residual is just $\*b$.\vspace{-.75mm}

\begin{algorithm}[!t]
	\DontPrintSemicolon
	\KwIn{$\*A\in\R^{n\times n}$, $\*b\in\R^n$, parameter $\omega\in(0,2)$, initial vector $\*x\in\R^n$, tolerance $\varepsilon>0$}
	$\*D+\*L+\*L^T\gets\*A$ \tcp*{$\*D$ is diagonal, $\*L$ is strictly lower triangular}
	$\*W_\omega\gets\*D/\omega+\*L$\tcp*{compute the third normal form}
	$\*r_0\gets\*b-\*A\*x$\tcp*{compute initial residual}
	\For{$k=0,\dots$}{
		\If{$\|\*r_k\|_2\le\varepsilon\|\*r_0\|_2$}{\KwRet{$k$}\tcp*{return iteration count (for use in learning)\vspace{-1mm}}}
		$\*x=\*x+\*W_\omega^{-1}\*r_k$\tcp*{update vector after solving triangular system}
		$\*r_{k+1}\gets\*b-\*A\*x$\tcp*{compute the next residual}
	}
	\caption{\label{alg:sor}
		Successive over-relaxation (SOR) with a relative convergence condition.\vspace{-1mm}
	}\vspace{-1.5mm}
\end{algorithm}

Having specified the computational setting, we now turn to the learning objective, which is to sequentially set the parameters $\omega_1,\dots,\omega_T$ so as to minimize the total number of iterations:\looseness-1\vspace{-1mm}
\begin{equation}\label{eq:objective}
	\sum\nolimits_{t=1}^T\SOR_t(\omega_t)
	=\sum\nolimits_{t=1}^T\SOR(\*A_t,\*b_t,\omega_t)
\end{equation}
To set $\omega_t$ at some time $t>1$, we allow the learning algorithm access to the costs $\SOR_s(\omega_s)$ incurred at the previous steps $s=1,\dots,t-1$;
in the literature on online learning this is referred to as the {\em bandit} or {\em partial feedback} setting, to distinguish from the (easier, but unreasonable for us) {\em full information} case where we have access to the cost function $\SOR_s$ at every $\omega$ in its domain.\vspace{-.75mm}

Selecting the optimal $\omega_t$ using no information about $\*A_t$ is impossible, so we must use a {\em comparator} to obtain an achievable measure of performance.
In online learning this is done by comparing the total cost incurred \eqref{eq:objective} to the counterfactual cost had we used a {\em single}, best-in-hindsight $\omega$ at every timestep $t$.
We take the minimum over some domain $\Omega\subset(0,2)$, as SOR diverges outside it.
While in some settings we will compete with every $\omega\in(0,2)$, we will often algorithmically use $[1,\omega_{\smax}]$ for some $\omega_{\smax}<2$.
The upper limit ensures a bound on the number of iterations---required by bandit algorithms---and the lower limit excludes $\omega<1$, which is rarely used because theoretical convergence of vanilla SOR is worse there for realistic problems, e.g. those satisfying our assumptions.\looseness-1\vspace{-.75mm}

This comparison-based approach for measuring performance is standard in online learning and effectively assumes a good $\omega\in\Omega$ that does well-enough on all problems;
in Figure~\ref{fig:asymptotic}~(center-left) we show that this is sometimes the case.
However, the center-right plot in the same figure shows we might do better by using additional knowledge about the instance;
in online learning this is termed a {\em context} and there has been extensive development of contextual bandit algorithms that do as well as the best fixed policy mapping contexts to predictions.
We will study an example of this in the {\em  diagonally shifted} setting, in which $\*A_t=\*A+c_t\*I_n$ for scalars $c_t\in\R$;
while mathematically simple, this structure arises in natural settings, e.g. solving the heat equation with temporally variable diffusivity, and is well-motivated by other applications~\citep{frommer1998restarted,bellavia2011efficient,baumann2015nested,anzt2016updating,wang2019solving}.
Furthermore, the same learning algorithms can also be extended to make use of other context information, e.g. rough spectral estimates.\looseness-1

\vspace{-1.5mm}
\subsection{Establishing a surrogate upper bound}
\vspace{-1.5mm}

Our first goal is to solve $T$ linear systems almost as fast as if we had used the best fixed $\omega\in\Omega$.
In online learning, this corresponds to minimizing {\em regret}, which for cost functions $\ell_t:\Omega\mapsto\R$ is defined as\looseness-1\vspace{-1mm}
\begin{equation}
	\Regret_\Omega(\{\ell_t\}_{t=1}^T)
	=\sum\nolimits_{t=1}^T\ell_t(\omega_t)-\min_{\omega\in\Omega}\sum\nolimits_{t=1}^T\ell_t(\omega)\vspace{-1mm}
\end{equation}
In particular, since we can upper-bound the objective~\eqref{eq:objective} by $\Regret_\Omega(\{\SOR_t\}_{t=1}^T)$ plus the optimal cost $\min_{\omega\in\Omega}\sum_{t=1}^T\SOR_t(\omega)$, if we show that regret is {\em sublinear} in $T$ then the leading-order term in the upper bound corresponds to the cost incurred by the optimal fixed $\omega$.\vspace{-.75mm}

Many algorithms attaining sublinear regret under different conditions on the losses $\ell_t$ have been developed~\citep{cesa-bianchi2006prediction,bubeck2012regret}.
However, few handle losses with discontinuities---i.e. most algorithmic costs---and those that do (necessarily) need additional conditions on their locations~\citep{balcan2018dispersion,balcan2020semibandit}.
At the same time, numerical analysis often deals more directly with continuous asymptotic surrogates for cost, such as convergence rates.
Taking inspiration from this, and from the algorithms with predictions idea of deriving surrogate loss functions for algorithmic costs~\citep{khodak2022awp}, in this section we instead focus on finding {\em upper bounds} $U_t$ on $\SOR_t$ that are both (a)~learnable and (b)~reasonably tight in-practice.
We can then aim for overall performance nearly as good as the optimal $\omega\in\Omega$ as measured by these upper bounds:\looseness-1\vspace{-1mm}
\begin{equation}
	\sum_{t=1}^T\SOR_t(\omega_t)
	\le\sum_{t=1}^TU_t(\omega_t)
	=\Regret_\Omega(\{U_t\}_{t=1}^T)+\min_{\omega\in\Omega}\sum_{t=1}^TU_t(\omega)
	=o(T)+\min_{\omega\in\Omega}\sum_{t=1}^TU_t(\omega)\vspace{-1mm}
\end{equation}

A natural approach to get a bound $U_t$ is via the {\em defect reduction matrix} $\*C_\omega=\*I_n-\*A(\*D/\omega+\*L)^{-1}$, so named because the residual at iteration $k$ is equal to $\*C_\omega^k\*b$ and $\*b$ is the first residual.
Under our assumptions on $\*A$, \citet{young1971iterative} shows that the spectral radius $\rho(\*C_\omega)$ of $\*C_\omega$ is a (nontrivial to compute) piecewise function of $\omega$ with a unique minimum in $[1,2)$.
Since we have error $\|\*C_\omega^k\*b\|_2/\|\*b\|_2\le\|\*C_\omega^k\|_2$ at iteration $k$,  $\rho(\*C_\omega)=\lim_{k\to\infty}\sqrt[k]{\|\*C_\omega^k\|_2}$  asymptotically bounds how much the error is reduced at each step.
It is thus often called the {\em asymptotic convergence rate} and the number of iterations is said to be roughly bounded by $\frac{-\log\varepsilon}{-\log\rho(\*C_\omega)}$ (e.g.~\citet[Equation~2.31b]{hackbusch2016iterative}).
However, while it is tempting to use this as our upper bound $U$, in fact it may not upper bound the number of iterations at all, since $\*C_\omega$ is not normal and so in-practice the iteration often goes through a transient phase where the residual norm first {\em increases} before decreasing~\citep[Figure~25.6]{trefethen2005spectra}.\looseness-1\vspace{-.75mm}

Thus we must either take a different approach or make some assumptions. 
Note that one can in-fact show an $\omega$-dependent, finite-time convergence bound for SOR via the energy norm~\citep[Corollary~3.45]{hackbusch2016iterative}, but this can give rather loose upper bounds on the number of iterations (c.f. Figure~\ref{fig:asymptotic} (left)).
Instead, we make the following assumption, which roughly states that convergence always occurs {\em near} the asymptotic regime, where nearness is measured by a parameter $\tau\in(0,1)$:\looseness-1
\begin{Asu}\label{asu:asymptotic}
	There exists $\tau\in(0,1)$ s.t. $\forall~\omega\in\Omega$ the matrix $\*C_\omega=\*I_n-\*A(\*D/\omega+\*L)^{-1}$ satisfies $\|\*C_\omega^k\|_2\le(\rho(\*C_\omega)+\tau(1-\rho(\*C_\omega)))^k$ at $k=\min_{\|\*C_\omega^{i+1}\*b\|_2<\varepsilon\|\*b\|_2}i$.\vspace{-1.5mm}
\end{Asu}

\begin{figure}[!t]
	\centering
	\hfill
	\includegraphics[trim={8mm 8mm 8mm 8mm}, width=0.24\linewidth]{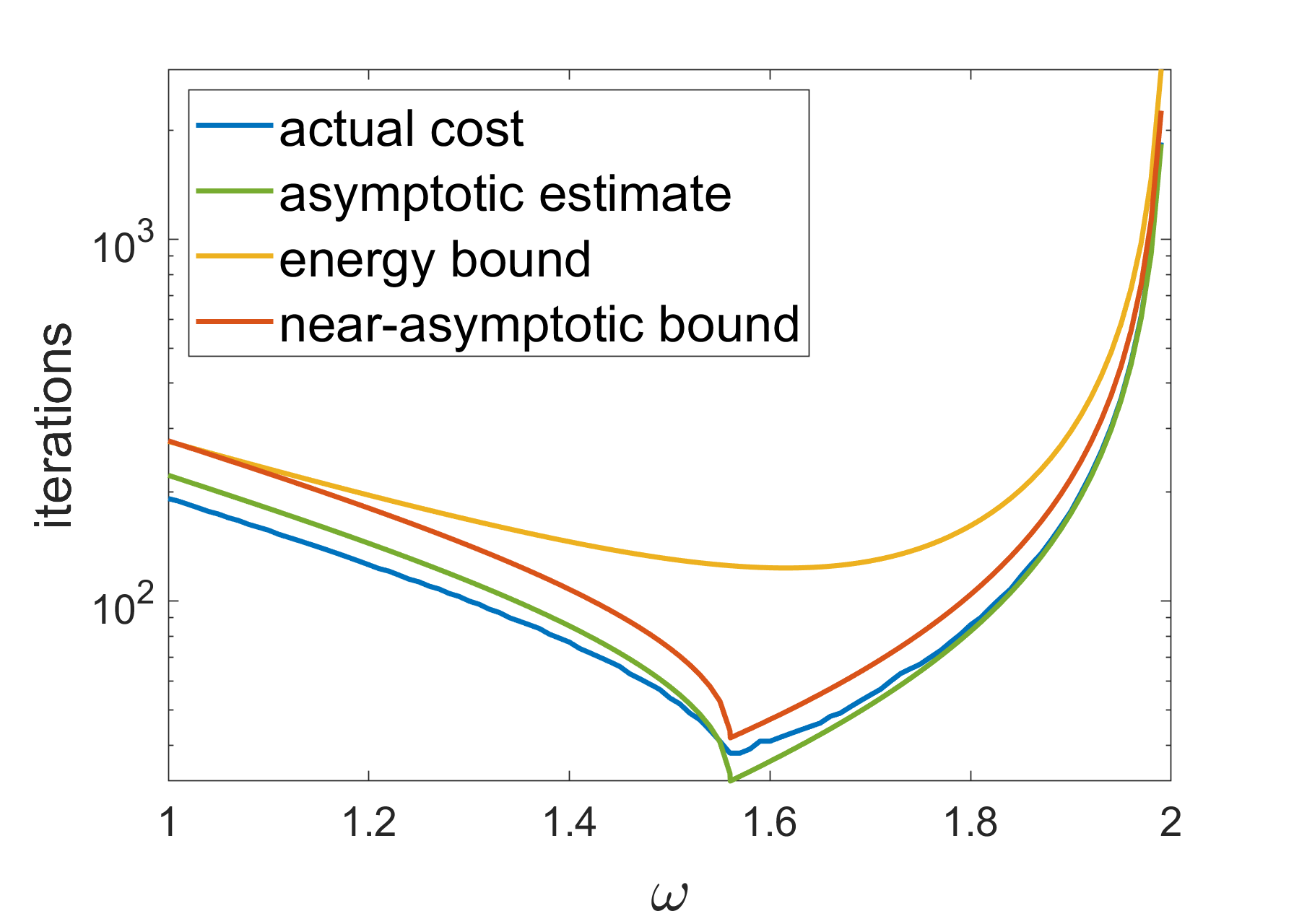}
	\includegraphics[trim={8mm 8mm 8mm 8mm}, width=0.24\linewidth]{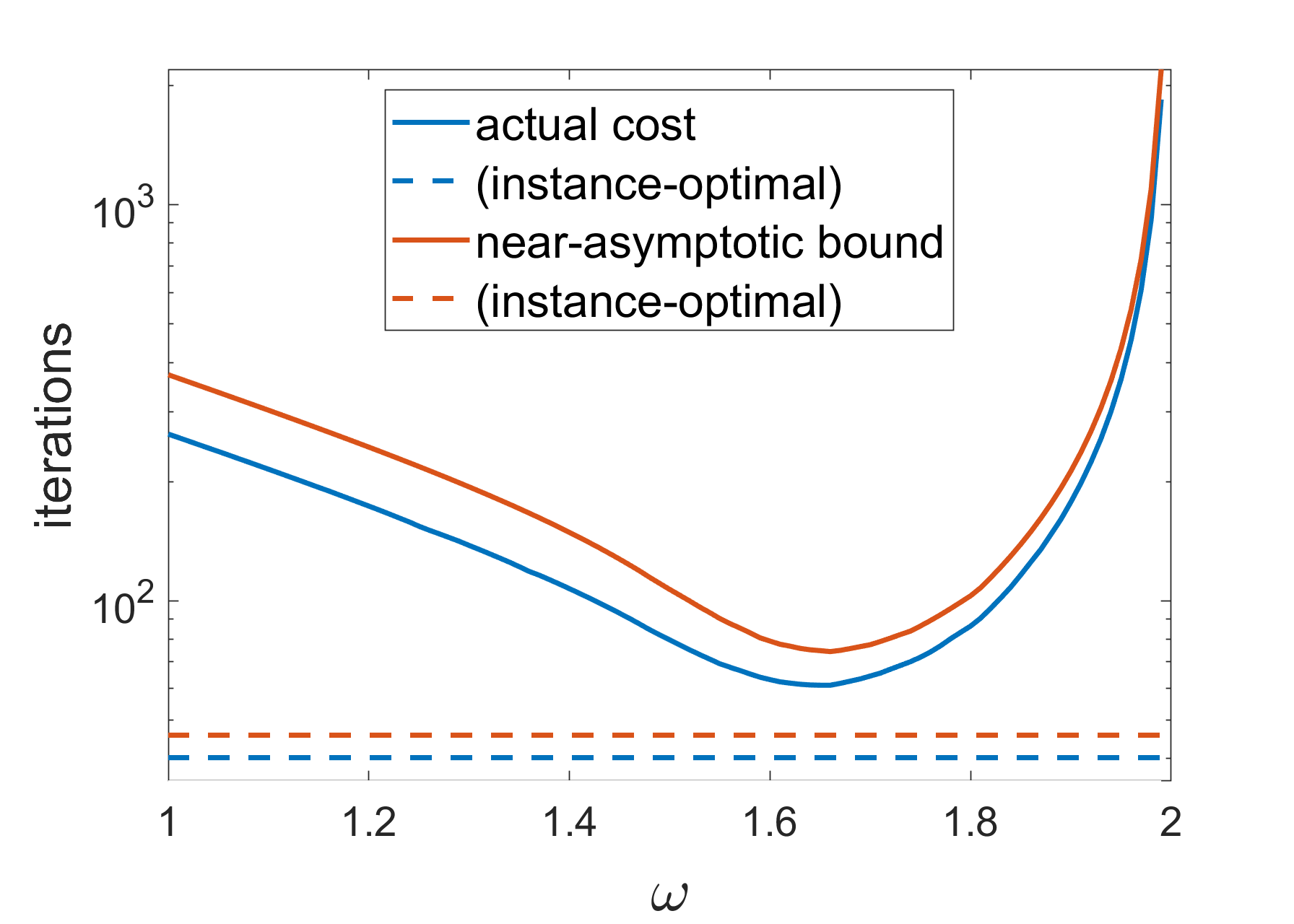}
	\includegraphics[trim={8mm 8mm 8mm 8mm}, width=0.24\linewidth]{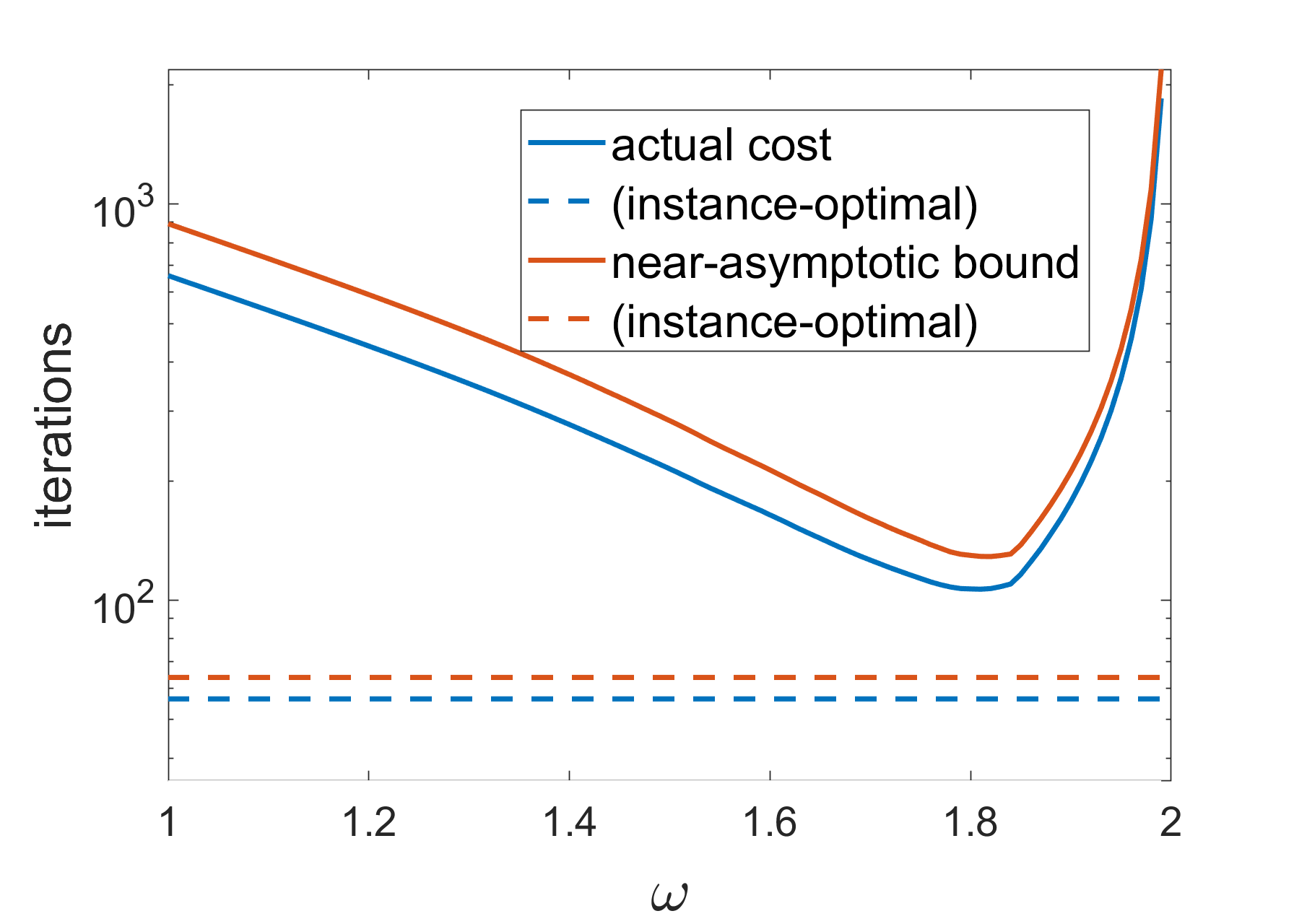}\vspace{-1.5mm}
	\includegraphics[trim={8mm 8mm 8mm 8mm}, width=0.24\linewidth]{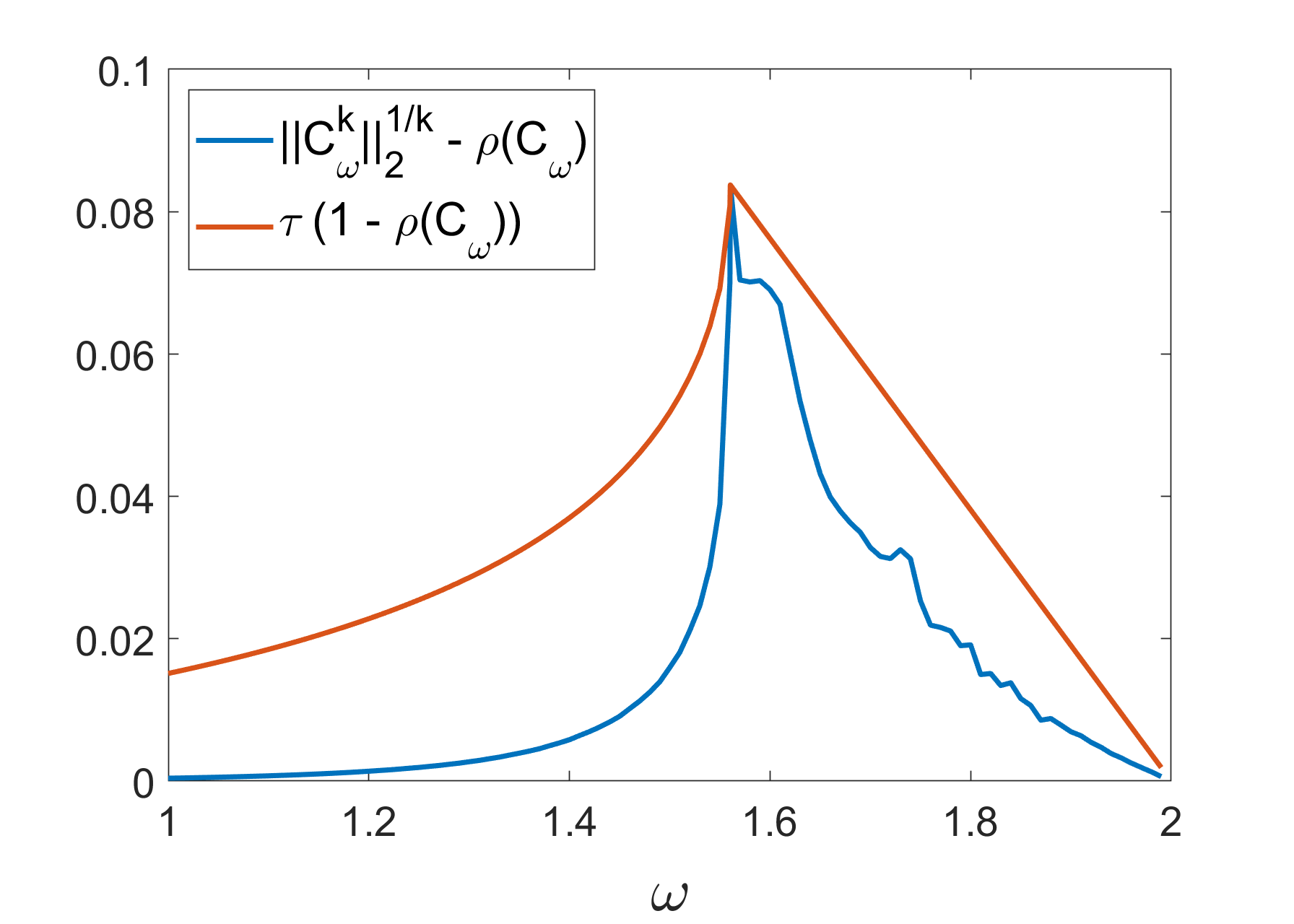}
	\caption{\label{fig:asymptotic}
		{\bf Left:} comparison of different cost estimates. 
		{\bf Center-left:} mean performance of different parameters across forty instances of form $\*A+\frac{12c-3}{20}\*I_n$, where $c\sim$ Beta$(2,6)$.
		{\bf Center-right:} the same but for $c\sim$ Beta$(1/2,3/2)$, which is relatively higher-variance.
		In both cases the dashed line indicates instance-optimal performance, the matrix $\*A$ is a discrete Laplacian of a $100\times100$ square domain, and the targets $\*b$ are truncated Gaussians.
		{\bf Right:} asymptocity as measured by the difference between the spectral norm at iteration $k$ and the spectral radius, together with its upper bound $\tau(1-\rho(\*C_\omega))$.\looseness-1
	}\vspace{-2mm}
\end{figure}

This effectively assumes an upper bound  $\rho(\*C_\omega)+\tau(1-\rho(\*C_\omega))$ on the empirically observed convergence rate, which gives us a measure of the quality of each parameter $\omega$ for the given instance $(\*A,\*b)$.
Note that the specific form of the surrogate convergence rate was chosen both because it is convenient mathematically---it is a convex combination of 1 and the asymptotic rate $\rho(\*C_\omega)$---and because empirically we found the degree of ``asymptocity'' as measured by $\|\*C_\omega^k\|_2^{1/k}-\rho(\*C_\omega)$ for $k$ right before convergence  to vary reasonably similarly to a fraction of $1-\rho(\*C_\omega)$ (c.f. Figure~\ref{fig:asymptotic} (right)).
This makes intuitive sense, as the parameters $\omega$ for which convergence is fastest have the least time to reach the asymptotic regime.
Finally, note that since $\lim_{k\to\infty}\|\*C_\omega^k\|_2^{1/k}=\rho(\*C_\omega)$, for every $\gamma>0$ there always exists $k'$ s.t. $\|\*C_\omega^k\|_2\le(\rho(\*C_\omega)+\gamma)^k~\forall~k\ge k'$;
therefore, since $1-\rho(\*C_\omega)>0$, we view Assumption~\ref{asu:asymptotic} not as a restriction on $\*C_\omega$ (and thus on $\*A$), but rather as an an assumption on $\varepsilon$ and $\*b$.
Specifically, the former should be small enough that $\*C_\omega^i$ reaches that asymptotic regime for some $i$ before the criterion $\|\*C_\omega^k\*b\|_2\le\varepsilon\|\*b\|_2$ is met;
for similar reasons, the latter should not happen to be an eigenvector corresponding to a tiny eigenvalue of $\*C_\omega$ (c.f. Figure~\ref{fig:learning} (left)).\vspace{-.75mm}

Having established this surrogate of the spectral radius, we can use it to obtain a reasonably tight upper bound $U$ on the cost (c.f. Figure~\ref{fig:asymptotic} (left)).
Crucially for learning, we can also establish the following properties via the functional form of $\rho(\*C_\omega)$ derived by \citet{young1971iterative}:\looseness-1
\begin{Lem}\label{lem:asymptotic}
	Define $U(\omega)=1+\frac{-\log\varepsilon}{-\log(\rho(\*C_\omega)+\tau(1-\rho(\*C_\omega)))}$, $\alpha=\tau+(1-\tau)\max\{\beta^2,\omega_{\smax}-1\}$, and $\omega^\ast=1+\beta^2/(1+\sqrt{1-\beta^2})^2$, where $\beta=\rho(\*I_n-\*D^{-1}\*A)$.
	Then the following holds:
	\begin{enumerate}[leftmargin=*,topsep=-1mm,noitemsep]
		\item $U$ bounds the number of iterations and is itself bounded: $\SOR(\*A,\*b,\omega)
		<U(\omega)
		\le1+\frac{-\log\varepsilon}{-\log\alpha}$
		\item $U$ is decreasing towards $\omega^\ast$, and $\frac{-(1-\tau)\log\varepsilon}{\alpha\log^2\alpha}$-Lipschitz on $\omega\ge\omega^\ast$ if $\tau\ge\frac1{e^2}$ or $\beta^2\ge\frac4{e^2}(1-\frac1{e^2})$
	\end{enumerate}
\end{Lem}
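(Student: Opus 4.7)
The plan is to use Assumption~\ref{asu:asymptotic} to convert the asymptotic behavior of $\*C_\omega^k$ into a finite-time bound on $\SOR$, and then lean on Young's functional characterization of $\rho(\*C_\omega)$ for consistently-ordered SPD matrices to control both the magnitude and the regularity of $U$. Throughout I write $\tilde\rho(\omega):=\rho(\*C_\omega)+\tau(1-\rho(\*C_\omega))$ so that $U(\omega)=1+\frac{-\log\varepsilon}{-\log\tilde\rho(\omega)}$, and I use repeatedly that $\rho\mapsto\tilde\rho\mapsto U$ is a composition of monotone maps on $(0,1)$: $\tilde\rho$ is an increasing affine function of $\rho$, and $U$ is a decreasing function of $\tilde\rho\in(0,1)$.

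For the iteration bound $\SOR(\*A,\*b,\omega)<U(\omega)$, let $K=\SOR(\*A,\*b,\omega)$ and observe that $\*r_k=\*C_\omega^k\*b$ because the algorithm is initialized at $\*0_n$. By the definition of $K$ the stopping criterion fails at step $K-1$, so $\|\*C_\omega^{K-1}\*b\|_2>\varepsilon\|\*b\|_2$, while Assumption~\ref{asu:asymptotic} applied at the index $k=K-1$ (matching the minimizer in its statement) gives $\|\*C_\omega^{K-1}\|_2\le\tilde\rho(\omega)^{K-1}$. Chaining these inequalities and taking logarithms rearranges to $K<U(\omega)$. For the universal bound $U(\omega)\le1+\frac{-\log\varepsilon}{-\log\alpha}$, monotonicity of $U$ in $\tilde\rho$ reduces the task to showing $\rho(\*C_\omega)\le\max\{\beta^2,\omega_{\smax}-1\}$ on $\Omega=[1,\omega_{\smax}]$. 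By Young's theorem $\rho(\*C_\omega)=\omega-1$ for $\omega\in[\omega^\ast,\omega_{\smax}]$ (so bounded by $\omega_{\smax}-1$), and $\rho(\*C_\omega)$ decreases from $\rho(\*C_1)=\beta^2$ to $\omega^\ast-1$ on $[1,\omega^\ast]$ (so bounded by $\beta^2$); taking the max over the two branches finishes this step.

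For the second part, unimodality of $U$ about $\omega^\ast$ is inherited directly from that of $\rho(\*C_\omega)$ via the monotone-composition observation above. For Lipschitzness on $[\omega^\ast,\omega_{\smax}]$, substituting $\rho(\*C_\omega)=\omega-1$ makes $\tilde\rho(\omega)=\tau+(1-\tau)(\omega-1)$ affine in $\omega$, and a direct differentiation yields
\[
U'(\omega)=\frac{-(1-\tau)\log\varepsilon}{\tilde\rho(\omega)\log^2\tilde\rho(\omega)}.
\]
Since $\tilde\rho(\omega)\le\alpha$ on this interval by the first part, the claimed Lipschitz constant is equivalent to the inequality $h(\tilde\rho(\omega))\ge h(\alpha)$ for $h(x):=x\log^2 x$. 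A short calculus argument shows that $h$ is unimodal on $(0,1)$ with a unique maximum at $1/e^2$ and is decreasing on $[1/e^2,1)$, so whenever the entire range $[\tilde\rho(\omega^\ast),\alpha]$ lies in $[1/e^2,1)$---i.e.\ as soon as $\tilde\rho(\omega^\ast)\ge 1/e^2$---the desired inequality holds.

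The main obstacle is verifying that the two stated side conditions actually produce $\tilde\rho(\omega^\ast)\ge 1/e^2$. The case $\tau\ge 1/e^2$ is immediate from $\tilde\rho(\omega^\ast)\ge\tau$. For the case $\beta^2\ge\frac{4}{e^2}(1-1/e^2)$, the task reduces to proving $\omega^\ast-1\ge 1/e^2$, after which $\tilde\rho(\omega^\ast)\ge(1-\tau)(\omega^\ast-1)+\tau$ closes the bound. Substituting $s=\sqrt{1-\beta^2}$ into $\omega^\ast-1=\beta^2/(1+s)^2=(1-s)/(1+s)$ turns $\omega^\ast-1\ge 1/e^2$ into the algebraic inequality $s\le(e^2-1)/(e^2+1)$, equivalently $\beta^2\ge 4e^2/(e^2+1)^2$; a short comparison (e.g.\ checking $e^4-e^2-1>0$) shows this is implied, with slack, by the stated threshold $\frac{4}{e^2}(1-1/e^2)$. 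This algebraic bookkeeping, together with the care needed to line up the index in Assumption~\ref{asu:asymptotic} with the returned iteration $K$, is where the proof demands the most attention.
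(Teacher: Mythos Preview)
Your proof is correct and follows essentially the same route as the paper's: both invoke Assumption~\ref{asu:asymptotic} at the last pre-convergence index to get $\SOR<U$, use Young's branch formula for $\rho(\*C_\omega)$ to bound $U$ by $1+\frac{-\log\varepsilon}{-\log\alpha}$ and to inherit the decrease toward $\omega^\ast$, and then differentiate $U$ on $\omega\ge\omega^\ast$ and control $\tilde\rho\log^2\tilde\rho$ via its monotonicity on $[1/e^2,1)$. Two minor notes: you wrote that $U$ is ``decreasing'' in $\tilde\rho$ when it is in fact increasing (your downstream deductions use the correct direction, so nothing breaks), and your verification of the $\beta^2$-threshold proceeds via $\omega^\ast-1\ge 1/e^2$ whereas the paper instead bounds $(1-\tfrac{1}{e^2})(\omega^\ast-1)\ge 1/e^2$---both chains are valid and lead to the same conclusion.
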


Lemma~\ref{lem:asymptotic} introduces a quantity $\alpha=\tau+(1-\tau)\max\{\beta^2,\omega_{\smax}-1\}$ that appears in the upper bounds on $U(\omega)$ and in its Lipschitz constant.
This quantity will in some sense measure the difficulty of learning: 
if $\alpha$ is close to 1 for many of the instances under consideration then learning will be harder.
Crucially, all quantities in the result are spectral and do not depend on the dimensionality of the matrix.\looseness-1

\vspace{-1.5mm}
\subsection{Performing as well as the best fixed $\omega$}
\vspace{-1.5mm}

\begin{algorithm}[!t]
	\DontPrintSemicolon
	\KwIn{solver $\SOLVE:\R^{n\times n}\hspace{-.35mm}\times\hspace{-.35mm}\R^n\hspace{-.35mm}\times\hspace{-.35mm}\Omega\mapsto\Z_{>0}$, instance sequence $\{(\*A_t,\*b_t)\}_{t=1}^T\hspace{-.35mm}\subset\hspace{-.35mm}\R^{n\times n}\hspace{-.35mm}\times\hspace{-.35mm}\R^n$, normalization $K>0$, parameter grid $\*g\in\Omega^d$, step-size $\eta>0$}
	$\*k\gets\*0_d$\tcp*{initialize vector of cumulative costs}
	\For{$t=1,\dots,T$}{
		$\*p\gets\argmin_{\*p\in\triangle_d}\langle\*k,\*p\rangle-\frac{4K}\eta\sum_{i=1}^d\sqrt{\*p_{[i]}}$\tcp*{compute probabilities}
		sample $i_t\in[d]$ w.p. $\*p_{[i_t]}$ and set $\omega_t=\*g_{[i_t]}$\tcp*{sample action from grid}
		$\*k_{[i_t]}\gets\*k_{[i_t]}+(\SOLVE(\*A_t,\*b_t,\omega_t)-1)/\*p_{[i_t]}$\tcp*{run solver and update cost}
	}
	\caption{\label{alg:tsallis-inf}
		Online tuning of a linear system solver using Tsallis-INF.
		The probabilities can be computed using Newton's method (e.g. \citet[Algorithm~2]{zimmert2021tsallis}).\vspace{-1mm}
	}\vspace{-1.5mm}
\end{algorithm}

Having shown these properties of $U$, we now show that it is learnable via Tsallis-INF~\citep{abernethy2015fighting,zimmert2021tsallis}, a bandit algorithm which at each instance $t$ samples $\omega_t$ from a discrete probability distribution over a grid of $d$ relaxation parameters, runs SOR with $\omega_t$ on the linear system $(\*A_t,\*b_t)$, and uses the number of iterations required $\SOR_t(\omega_t)$ as feedback to update the probability distribution over the grid.
The scheme is described in full in Algorithm~\ref{alg:tsallis-inf}.
Note that it is a relative of the simpler and more familiar Exp3 algorithm~\citep{auer2002exp3}, but has a slightly better dependence on the grid size $d$.
In Theorem~\ref{thm:asymptotic}, we bound the cost of using the parameters $\omega_t$ suggested by Tsallis-INF by the total cost of using the best fixed parameter $\omega\in\Omega$ at all iterations---as measured by the surrogate bounds $U_t$---plus a term that increases sublinearly in $T$ and a term that decreases in the size of the grid.\looseness-1\vspace{-.5mm}

\begin{Thm}\label{thm:asymptotic}
	Define $\alpha_t=\tau_t+(1-\tau_t)\max\{\beta_t^2,\omega_{\smax}-1\}$, where $\beta_t=\rho(\*I_n-\*D_t^{-1}\*A_t)$ and $\tau_t$ is the minimal $\tau$ satisfying Assumption~\ref{asu:asymptotic} and the second part of Lemma~\ref{lem:asymptotic}.
	If we run Algorithm~\ref{alg:tsallis-inf} using SOR initialized at $\*x=\*0_n$ as the solver, $\*g_{[i]}=1+(\omega_{\smax}-1)\frac id$ as the parameter grid, normalization $K\ge\frac{-\log\varepsilon}{-\log\alpha_{\smax}}$ for $\alpha_{\smax}=\max_t\alpha_t$, and step-size $\eta=1/\sqrt T$ then the expected number of iterations is bounded as\looseness-1\vspace{-1mm}
	\begin{equation}
		\E\sum_{t=1}^T\SOR_t(\omega_t)
		\le2K\sqrt{2dT}+\sum_{t=1}^T\frac{-\log\varepsilon}{d\log^2\alpha_t}+\min_{\omega\in(0,\omega_{\smax}]}\sum_{t=1}^TU_t(\omega)\vspace{-1mm}
	\end{equation}
	Using $\omega_{\smax}\hspace{-.6mm}=\hspace{-.6mm}1\hspace{-.1mm}+\hspace{-.1mm}\max\limits_t\hspace{-.6mm}\left(\hspace{-1mm}\frac{\beta_t}{1+\sr{1-\beta_t^2}}\hspace{-1mm}\right)^2$,
	 $K\hspace{-.6mm}=\hspace{-.6mm}\frac{-\hspace{-.5mm}\log\varepsilon}{-\hspace{-.5mm}\log\alpha_{\smax}}$, and $d\hspace{-.6mm}=\hspace{-.6mm}\sqrt[3]{\hspace{-.5mm}\frac T2\bar\gamma^2\log^2\hspace{-.5mm}\alpha_{\smax}}$, for $\bar\gamma\hspace{-.6mm}=\hspace{-.6mm}\frac1T\hspace{-1mm}\sum\limits_{t=1}^T\hspace{-1mm}\frac1{\log^2\hspace{-.5mm}\alpha_t}$, yields\looseness-1\vspace{-1mm}
	\begin{equation}\label{eq:simplified}
		\E\hspace{-.5mm}\sum_{t=1}^T\hspace{-.5mm}\SOR_t(\hspace{-.25mm}\omega_t\hspace{-.25mm})
		\le3\log\hspace{-.25mm}\frac1\varepsilon\hspace{-.25mm}\sqrt[3]{\hspace{-1mm}\frac{2\bar\gamma T^2}{\log^2\hspace{-.5mm}\alpha_{\smax}}}+\hspace{-.25mm}\min_{\omega\in(\hspace{-.25mm}0,2\hspace{-.25mm})}\hspace{-.25mm}\sum_{t=1}^T\hspace{-.25mm}U_t(\hspace{-.25mm}\omega\hspace{-.25mm})
		\le3\log\hspace{-.25mm}\frac1\varepsilon\hspace{-.25mm}\sqrt[3]{\hspace{-1mm}\frac{2T^2}{\log^4\hspace{-.5mm}\alpha_{\smax}}}+\hspace{-.25mm}\min_{\omega\in(\hspace{-.25mm}0,2\hspace{-.25mm})}\hspace{-.25mm}\sum_{t=1}^T\hspace{-.25mm}U_t(\hspace{-.25mm}\omega\hspace{-.25mm})\vspace{-1mm}
	\end{equation}
\end{Thm}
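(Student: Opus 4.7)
The plan is to decompose the target bound into three ingredients applied in sequence: a standard bandit regret guarantee for Tsallis-INF against the best fixed grid arm; Lemma~\ref{lem:asymptotic}.1 to translate the grid comparator from $\SOR_t$ to the surrogate $U_t$; and a Lipschitz-based discretization step to go from the best grid arm to the best $\omega\in(0,\omega_{\smax}]$. To invoke Tsallis-INF I would first verify that the shifted losses $\SOR_t(\omega)-1$ take values in $[0,K]$: nonnegativity is immediate from $\SOR_t\ge1$, and the upper bound follows from Lemma~\ref{lem:asymptotic}.1 together with the choice $K\ge(-\log\varepsilon)/(-\log\alpha_{\smax})\ge(-\log\varepsilon)/(-\log\alpha_t)$. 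Invoking a standard Tsallis-INF regret bound (e.g.\ \citet[Theorem~1]{zimmert2021tsallis}, scaled by $K$, with $\eta=1/\sqrt T$) then gives $\E\sum_t\SOR_t(\omega_t)\le 2K\sqrt{2dT}+\min_{i\in[d]}\sum_t\SOR_t(\*g_{[i]})$ after canceling the constant shifts, and Lemma~\ref{lem:asymptotic}.1 lets me replace $\SOR_t(\*g_{[i]})$ by $U_t(\*g_{[i]})$ inside the grid minimum.

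The discretization step is the main technical obstacle, since $U_t$ is only known to be Lipschitz on $[\omega^\ast_t,\omega_{\smax}]$ and merely decreasing on $(0,\omega^\ast_t]$, and the grid covers $\{1+(\omega_{\smax}-1)i/d\}_{i=1}^d$ rather than all of $[1,\omega_{\smax}]$. Fixing a minimizer $\omega^\dagger\in\argmin_{\omega\in(0,\omega_{\smax}]}\sum_tU_t(\omega)$, I can take $\omega^\dagger\ge1$ without loss of generality because each $U_t$ is decreasing on $(0,1]\subseteq(0,\omega^\ast_t]$, so $\sum_tU_t(1)\le\sum_tU_t(\omega^\dagger)$ whenever $\omega^\dagger\le1$. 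Letting $g$ denote the smallest grid point $\ge\omega^\dagger$ guarantees $g-\omega^\dagger\le(\omega_{\smax}-1)/d$, and a per-instance case split on the position of $\omega^\ast_t$ relative to $[\omega^\dagger,g]$ yields $U_t(g)-U_t(\omega^\dagger)\le L_t(\omega_{\smax}-1)/d$ in every case: if $g\le\omega^\ast_t$ the gap is nonpositive by monotonicity; if $\omega^\dagger\ge\omega^\ast_t$ Lemma~\ref{lem:asymptotic}.2 applies directly; and if $\omega^\dagger<\omega^\ast_t<g$ I split at $\omega^\ast_t$, using Lipschitz on $[\omega^\ast_t,g]$ together with $U_t(\omega^\ast_t)\le U_t(\omega^\dagger)$ on the decreasing side. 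Substituting $L_t=-(1-\tau_t)(\log\varepsilon)/(\alpha_t\log^2\alpha_t)$ from Lemma~\ref{lem:asymptotic}.2 and using the immediate inequality $\alpha_t\ge(1-\tau_t)(\omega_{\smax}-1)$ (from the definition of $\alpha_t$) collapses the per-instance error to at most $(-\log\varepsilon)/(d\log^2\alpha_t)$, giving the second summand after summing over $t$.

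The simplified form \eqref{eq:simplified} then follows by substituting $K=(-\log\varepsilon)/(-\log\alpha_{\smax})$ and $d=\sqrt[3]{T\bar\gamma^2\log^2\alpha_{\smax}/2}$. Routine algebra shows that both $2K\sqrt{2dT}$ and $T\bar\gamma(-\log\varepsilon)/d=\sum_t(-\log\varepsilon)/(d\log^2\alpha_t)$ collapse to $\log(1/\varepsilon)\sqrt[3]{2T^2\bar\gamma/\log^2\alpha_{\smax}}$, for a combined leading coefficient of $3$. Replacing the comparator domain by $(0,2)$ is justified because the chosen $\omega_{\smax}=\max_t\omega^\ast_t$ upper-bounds every individual minimizer of $U_t$, so $\sum_tU_t$ is nondecreasing on $[\omega_{\smax},2)$; the final inequality in \eqref{eq:simplified} uses $\bar\gamma\le1/\log^2\alpha_{\smax}$, which holds because $\alpha_t\le\alpha_{\smax}<1$ implies $|\log\alpha_t|\ge|\log\alpha_{\smax}|$.
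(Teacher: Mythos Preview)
Your proof is correct and follows essentially the same route as the paper: the paper packages your three-case discretization argument into a general ``semi-Lipschitz bandit'' result (Theorem~\ref{thm:semi-lipschitz}) and then invokes it, while you carry out the same rounding-toward-$\omega_{\smax}$ case split inline and make the simplification $(1-\tau_t)(\omega_{\smax}-1)\le\alpha_t$ explicit. One minor point: the $2K\sqrt{2dT}$ bound with fixed $\eta=1/\sqrt T$ is the Abernethy--Lee--Tewari result (Theorem~\ref{thm:tsallis-inf}), not Zimmert--Seldin, whose anytime guarantee uses $\eta_t=2/\sqrt t$ and a slightly different constant.
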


Thus {\em asymptotically} (as $T\to\infty$) the {\em average} cost on each instance is that of the best fixed $\omega\in(0,2)$, as measured by the surrogate loss functions $U_t(\omega)$.
The result clearly shows that the difficulty of the learning problem can be measured by how close the values of $\alpha_t$ are to one.
As a quantitative example, for the somewhat ``easy'' case of $\tau_t\le0.2$ and $\beta_t\le0.9$, the first term is $<T\log\frac1\varepsilon$---i.e. we take at most $\log\frac1\varepsilon$ excess iterations on average---after around 73K instances.\looseness-1\vspace{-.75mm}

The proof of Theorem~\ref{thm:asymptotic} (c.f. Section~\ref{sec:near-asymptotic}) takes advantage of the fact that the upper bounds $U_t$ are always decreasing wherever they are not locally Lipschitz;
thus for any $\omega\in(0,\omega_{\smax}]$ the next highest grid value in $\*g$ will either be better or $\BigO(1/d)$ worse.
This allows us to obtain the same $\BigO(T^{2/3})$ rate as the optimal Lipschitz-bandit regret~\citep{kleinberg2004nearly}, despite $U_t$ being only semi-Lipschitz.
One important note is that setting $\omega_{\smax}$, $K$, and $d$ to obtain this rate involves knowing bounds on spectral properties of the instances.
The optimal $\omega_{\smax}$ requires a bound on $\max_t\beta_t$ akin to that used by solvers like Chebyshev semi-iteration;
assuming this and a reasonable sense of how many iterations are typically required is enough to estimate $\alpha_{\smax}$ and then set $d=\sqrt[3]{\frac{T/2}{\log^2\alpha_{\smax}}}$, yielding the right-hand bound in~\eqref{eq:simplified}.
Lastly, we note that Tsallis-INF adds quite little computational overhead: it has a per-instance update cost of $\BigO(d)$, which for $d=\BigO(\sqrt[3]T)$ is likely to be negligible in practice.\looseness-1

\vspace{-1.5mm}
\subsection{The diagonally shifted setting}\label{sec:diagonally}
\vspace{-1.5mm}

The previous analysis is useful when a fixed $\omega$ is good for most instances $(\*A_t,\*b_t)$.
A non-fixed comparator can have much stronger performance (c.f. the dashed lines in Figure~\ref{fig:asymptotic} (center)), so in this section we study how to use additional, known structure in the form of diagonal shifts: 
at all $t\in[T]$, $\*A_t=\*A+c_t\*I_n$ for some fixed $\*A$ and scalar $c_t$.
It is easy to see that selecting instance-dependent $\omega_t$ using the value of the shift is exactly the {\em contextual bandit} setting~\citep{beygelzeimer2011contextual}, in which the comparator is a fixed {\em policy} $f:\R\mapsto\Omega$ that maps the given scalars to parameters for them.
Here the regret is defined by $\Regret_f(\{\ell_t\}_{t=1}^T)
=\sum_{t=1}^T\ell_t(\omega_t)-\sum_{t=1}^T\ell_t(f(c_t))$.
Notably, if $f$ is the optimal mapping from $c_t$ to $\omega$ then sublinear regret implies doing nearly optimally at every instance.
In our case, the policy $\omega^\ast$ minimizing $U_t$ is a well-defined function of $\*A_t$ (c.f. Lemma~\ref{lem:asymptotic}) and thus of $c_t$~\citep{young1971iterative};
in fact, we can show that the policy is Lipschitz w.r.t. $c_t$ (c.f. Lemma~\ref{lem:lipschitz}).
This allows us to use a very simple algorithm---discretizing the space of offsets $c_t$ into $m$ intervals and running Tsallis-INF separately on each---to obtain $\BigO(T^{3/4})$ regret w.r.t. the instance-optimal policy $\omega^\ast$:\looseness-1\vspace{-.5mm}
\begin{Thm}[c.f.~Theorem~\ref{thm:contextual-tsallis-inf}]\label{thm:contextual-tsallis-inf-main}
	Suppose all offsets $c_t$ lie in $[c_{\min},c_{\min}+C]$ for some $c_{\min}>-\lambda_{\min}(\*A)$, and define $L=\frac{1+\beta_{\smax}}{\beta_{\smax}\sqrt{1-\beta_{\smax}^2}}\left(\frac{\lambda_{\smin}(\*D)+c_{\smin}+1}{\lambda_{\smin}(\*D)+c_{\smin}}\right)^2$ for $\beta_{\smax}$ as in Theorem~\ref{thm:asymptotic}.
	Then there is a discretization of this interval s.t. running Algorithm~\ref{alg:tsallis-inf} separately on each sequence of contexts in each bin with appropriate parameters results in expected cost\looseness-1\vspace{-1mm}
	\begin{equation}
		\E\sum_{t=1}^T\SOR_t(\omega_t)
		\le\sqrt[4]{\frac{54C^3L^3T}{\log^2{\alpha_{\smax}}}}+\frac{4\log\frac1\varepsilon}{\log\frac1{\alpha_{\smax}}}\sqrt[4]{24CLT^3}+\sum_{t=1}^TU_t(\omega^\ast(c_t))\vspace{-1mm}
	\end{equation}
\end{Thm}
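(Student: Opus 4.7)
The plan is to reduce the contextual-bandit problem over offsets $c_t$ to $m$ parallel non-contextual instances handled by Theorem~\ref{thm:asymptotic}, while paying a small discretization bias for representing the continuous policy $\omega^\ast(\cdot)$ by a single action per bin. Concretely, partition $[c_{\smin},c_{\smin}+C]$ into $m$ equal bins $B_1,\dots,B_m$ of width $C/m$, let $S_j=\{t\in[T]:c_t\in B_j\}$, and set $T_j=|S_j|$. Run a separate copy of Algorithm~\ref{alg:tsallis-inf} on each subsequence $\{(\*A_t,\*b_t)\}_{t\in S_j}$, with the grid and step-size as in Theorem~\ref{thm:asymptotic} but using the worst-case upper bound $T/m$ on $T_j$ (or a standard doubling trick if $T_j$ is unknown in advance). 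That theorem then bounds the expected bin-$j$ cost by a regret piece of order $\log(1/\varepsilon)(T_j^2/\log^4\alpha_{\smax})^{1/3}$ plus $\min_{\omega\in\Omega}\sum_{t\in S_j}U_t(\omega)$.

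For each bin $j$, fix a representative $\omega_j^\circ=\omega^\ast(\tilde c_j)$ at the midpoint $\tilde c_j$. By Lemma~\ref{lem:lipschitz}, $\omega^\ast$ is $L$-Lipschitz on $[c_{\smin},c_{\smin}+C]$ with the stated $L$, so $|\omega_j^\circ-\omega^\ast(c_t)|\le LC/(2m)$ for all $t\in S_j$. Since $\omega_j^\circ$ and $\omega^\ast(c_t)$ both sit in the decreasing, Lipschitz branch of $U_t$ identified in Lemma~\ref{lem:asymptotic}.2---whose Lipschitz constant is at most $O(\log(1/\varepsilon)/\log^2\alpha_{\smax})$---we get $U_t(\omega_j^\circ)\le U_t(\omega^\ast(c_t))+O(LC\log(1/\varepsilon)/(m\log^2\alpha_{\smax}))$. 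Bounding $\min_{\omega}\sum_{t\in S_j}U_t(\omega)\le\sum_{t\in S_j}U_t(\omega_j^\circ)$, summing over $j$, and applying H\"older's inequality $\sum_j T_j^{2/3}\le m^{1/3}T^{2/3}$ yields
\[
\E\sum_{t=1}^T\SOR_t(\omega_t)\le O\!\left(\frac{\log\frac{1}{\varepsilon}\,m^{1/3}T^{2/3}}{\log^{4/3}\alpha_{\smax}}\right)+O\!\left(\frac{LCT\log\frac{1}{\varepsilon}}{m\log^2\alpha_{\smax}}\right)+\sum_{t=1}^TU_t(\omega^\ast(c_t)).
\]
Optimizing $m\asymp(LC)^{3/4}T^{1/4}/\log^{1/2}\alpha_{\smax}$ balances the first two terms and produces the $T^{1/4}$ and $T^{3/4}$ summands that appear in the theorem statement.

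The most delicate ingredient is Lemma~\ref{lem:lipschitz}, i.e.\ controlling the derivative of $\omega^\ast(c)=1+\beta(c)^2/(1+\sqrt{1-\beta(c)^2})^2$ as a function of the shift, where $\beta(c)=\rho(\*I_n-(\*D+c\*I_n)^{-1}(\*A+c\*I_n))$. This calls for bounding both $\partial\beta/\partial c$---using that shifting $\*A$ and $\*D$ by $c$ perturbs the Jacobi iteration matrix smoothly away from $c=-\lambda_{\smin}(\*A)$---and the derivative of the outer scalar function, and is precisely what generates the factor $((\lambda_{\smin}(\*D)+c_{\smin}+1)/(\lambda_{\smin}(\*D)+c_{\smin}))^2$ in $L$. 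A secondary but important verification is that $\omega_j^\circ$ and $\omega^\ast(c_t)$ both lie in $[1,\omega_{\smax}]$ and in the Lipschitz regime of $U_t$: the former follows from choosing $\omega_{\smax}$ as in Theorem~\ref{thm:asymptotic} (which takes the max over $t$ of $\omega^\ast(c_t)$), and the latter from the unique-minimum monotonicity of $U_t$ established in Lemma~\ref{lem:asymptotic}.
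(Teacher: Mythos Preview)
Your overall architecture---discretize the context interval into $m$ bins, run a separate bandit instance per bin, compare the per-bin optimum to the Lipschitz policy $\omega^\ast$, then balance $m$---matches the paper exactly. But there is a genuine gap in how you handle the per-bin comparator.

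You pick $\omega_j^\circ=\omega^\ast(\tilde c_j)$ at the bin midpoint and assert that ``$\omega_j^\circ$ and $\omega^\ast(c_t)$ both sit in the decreasing, Lipschitz branch of $U_t$.'' This conflates the two regimes of Lemma~\ref{lem:asymptotic}.2: $U_t$ is decreasing on $\omega<\omega^\ast(c_t)$ and Lipschitz on $\omega\ge\omega^\ast(c_t)$; these are \emph{different} intervals. On the left of $\omega^\ast(c_t)$ the spectral radius has the form $\rho(\*C_\omega)=\frac14(\omega\beta_t+\sqrt{\omega^2\beta_t^2-4(\omega-1)})^2$, and since the discriminant vanishes linearly at $\omega^\ast(c_t)$ the square root makes $\rho$---and hence $U_t$---only $\tfrac12$-H\"older there, not Lipschitz. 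Your midpoint choice gives no control over the sign of $\omega_j^\circ-\omega^\ast(c_t)$, so when $\omega_j^\circ<\omega^\ast(c_t)$ you only get $U_t(\omega_j^\circ)-U_t(\omega^\ast(c_t))=\BigO(\sqrt{LC/m})$, which after optimizing $m$ would degrade the rate to $T^{4/5}$ rather than $T^{3/4}$.

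The fix is exactly what the paper's semi-Lipschitz machinery (Theorem~\ref{thm:contextual-semi-lipschitz}) encodes: round the policy value \emph{upward} by a buffer of $CL^\ast/(2m)$ before snapping to the action grid, so that the per-bin comparator is guaranteed to satisfy $\omega_j^\circ\ge\omega^\ast(c_t)$ for every $t$ in the bin. Then the $(L_t,\omega_{\smax})$-semi-Lipschitz property of $U_t$ gives the $\BigO(1/m)$ discretization error you claimed, and the rest of your argument (H\"older on $\sum_jT_j^{2/3}$, balancing $m$) goes through. Equivalently, you could take $\omega_j^\circ=\sup_{c\in B_j}\omega^\ast(c)$. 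Note also that the paper uses the anytime Tsallis-INF step-size $\eta_t=2/\sqrt t$ per bin rather than a doubling trick, which is cleaner than assuming $T_j\le T/m$.
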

Observe that, in addition to $\alpha_t$, the difficulty of this learning problem also depends on the maximum spectral radius $\beta_{\smax}$ of the Jacobi matrices $\*I_n-\*D^{-1}(c_t)\*A(c_t)$ via the Lipschitz constant $L$ of $\omega^\ast$.\looseness-1

\begin{figure}[!t]
	\centering
	\hfill
	\includegraphics[trim={8mm 6mm 8mm 8mm}, width=0.24\linewidth]{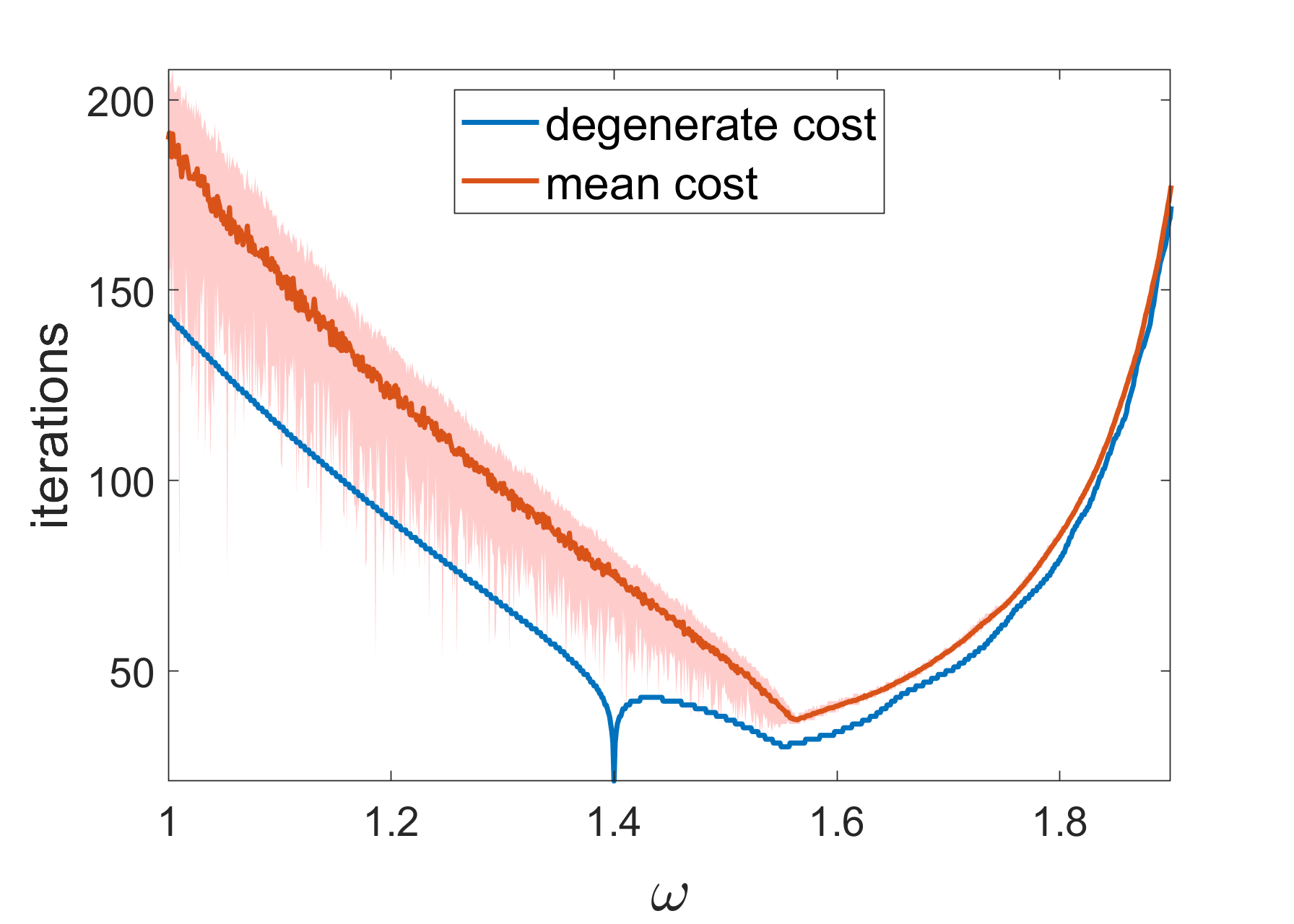}
	\includegraphics[trim={8mm 6mm 8mm 8mm}, width=0.24\linewidth]{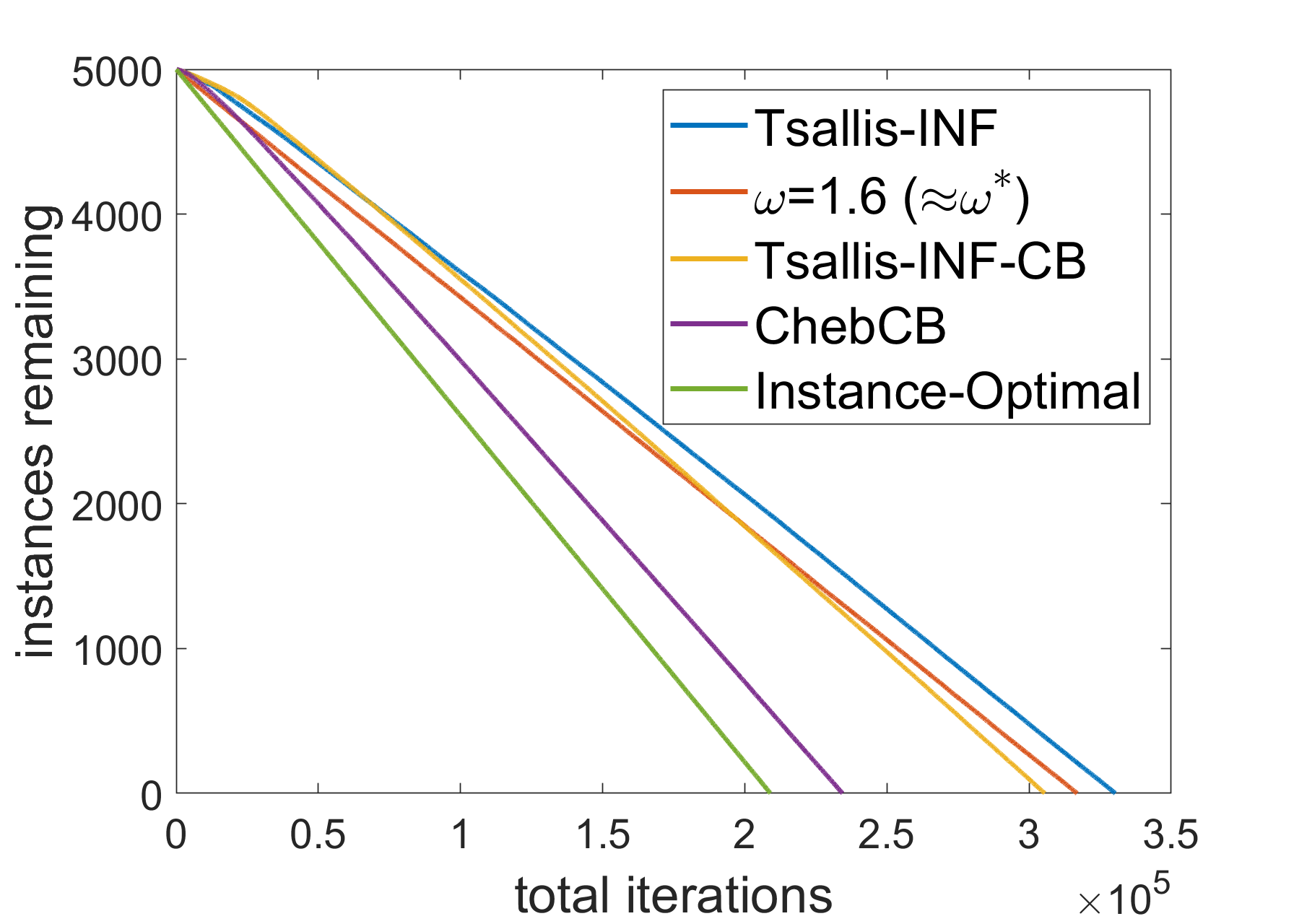}
	\includegraphics[trim={8mm 8mm 8mm 8mm}, width=0.24\linewidth]{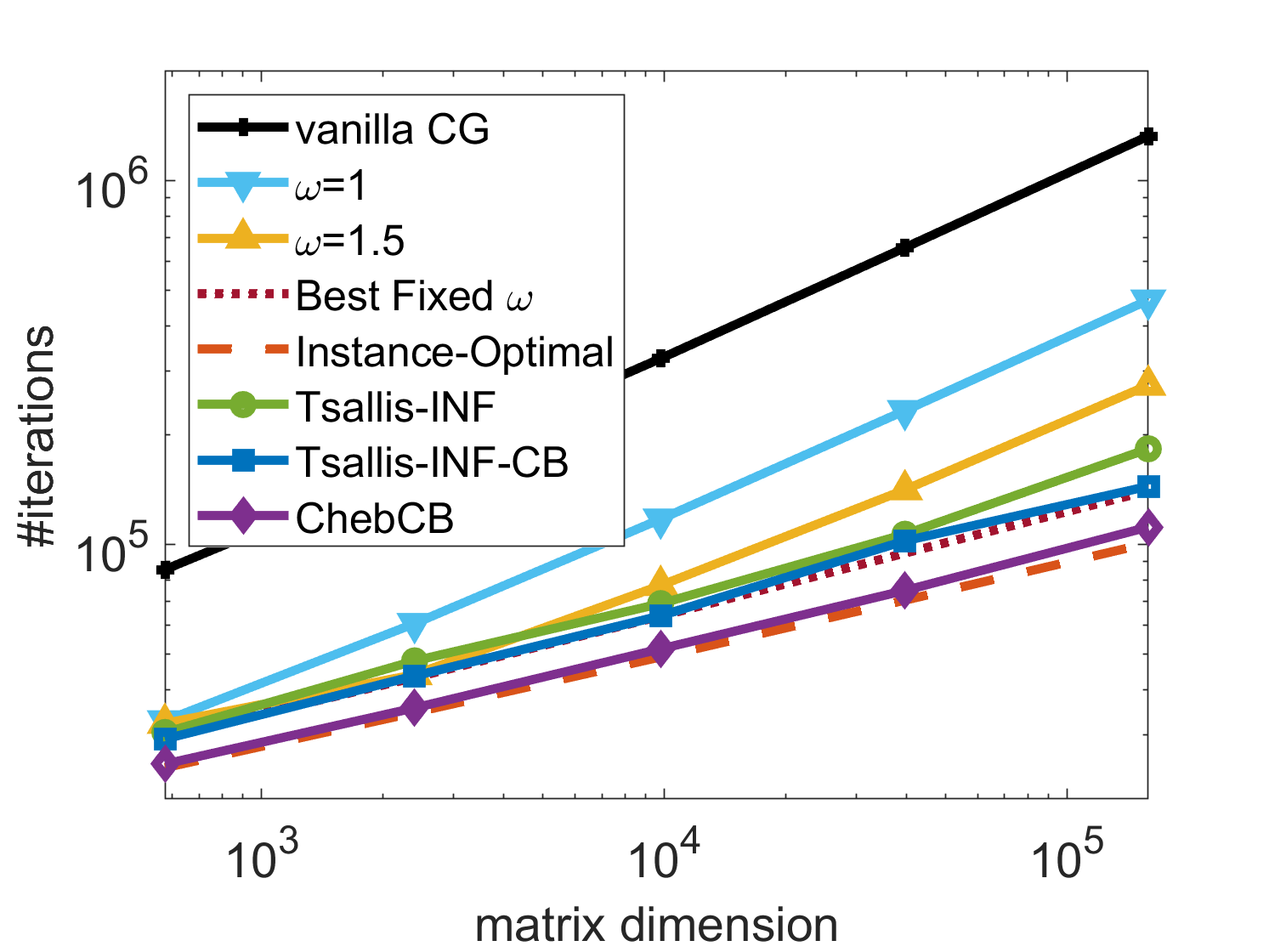}
	\includegraphics[trim={8mm 8mm 8mm 8mm}, width=0.24\linewidth]{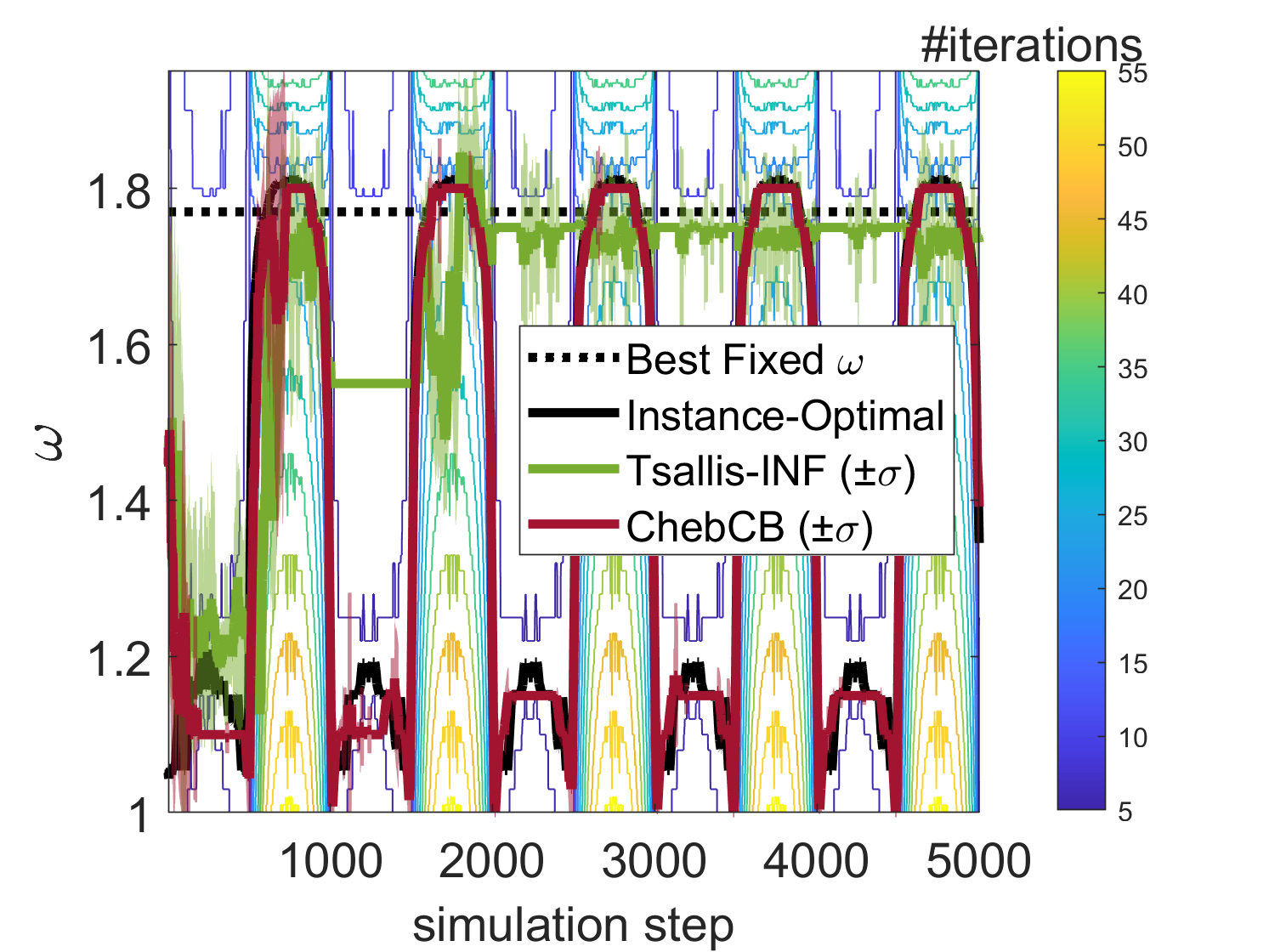}
	\vspace{-1.5mm}
	\caption{\label{fig:learning}
		{\bf Left:} solver cost for $\*b$ drawn from a truncated Gaussian v.s. $\*b$ a small eigenvector of~$\*C_{1.4}$.
		{\bf Center-left:} cost to solve 5K diagonally shifted systems  $\*A_t=\*A+\frac{12c_t-3}{20}\*I_n$ for $c_t\sim$ Beta$(2,6)$.
		{\bf Center-right:} total SSOR-preconditioned CG iterations taken while solving the 2D heat equation with a time-varying diffusion coefficient (used as context) on different grids, as a function of the linear system dimension.
		{\bf Right:} (smoothed) parameters chosen at each timestep of one such simulation, overlaid on a contour plot of the cost of solving the system at step $t$ with parameter $\omega$ (c.f. Appendix~\ref{app:experimental-details}).\looseness-1
	}\vspace{-2mm}
\end{figure}

\vspace{-2mm}
\subsection{Tuning preconditioned conjugate gradient}
\vspace{-1mm}

CG is perhaps the most-used solver for positive definite systems; 
while it can be run without tuning, in practice significant acceleration can be realized via a good preconditioner such as (symmetric) SOR.
The effect of $\omega$ on CG performance can be somewhat distinct from that of regular SOR, requiring a separate analysis.
We use the condition number analysis of~\citet[Theorem~7.17]{axelsson1994iterative} to obtain an upper bound $U^\CG(\omega)$ on the number of iterations required $\CG(\*A,\*b,\omega)$ to solve a system.
While the resulting bounds match the shape of the true performance less exactly than the SOR bounds (c.f.~Figure~\ref{fig:cg-bounds}), they still provide a somewhat reasonable surrogate.
After showing that these functions are also semi-Lipschitz (c.f. Lemma~\ref{lem:cg}), we can bound the cost of tuning CG using Tsallis-INF:\looseness-1\vspace{-.5mm}
\begin{Thm}\label{thm:cg}
	Set $\mu_t=\rho(\*D_t\*A_t^{-1})$, $\mu_{\smax}=\max_t\mu_t$, $\overline{\sqrt\mu}=\frac1T\sum_{t=1}^T\sqrt\mu_t$, and $\kappa_{\smax}=\max_t\kappa(\*A_t)$.
	If $\min_t\mu_t-1$ is a positive constant then for Algorithm~\ref{alg:tsallis-inf} using preconditioned CG as the solver there exists a parameter grid $\*g\in[2\sqrt2+2,\omega_{\smax}]^d$ and normalization $K>0$ such that\looseness-1\vspace{-1mm}
	\begin{equation}
		\E\sum_{t=1}^T\CG_t(\omega_t)
		=\BigO\left(\sqrt[3]{\frac{\log^2\frac{\sqrt{\kappa_{\smax}}}\varepsilon}{\log^2\frac{\sqrt{\mu_{\smax}}-1}{\sqrt{\mu_{\smax}}+1}}\overline{\sqrt\mu}T^2}\right)+\min_{\omega\in(0,2)}\sum_{t=1}^TU_t(\omega)\vspace{-1mm}
	\end{equation}
\end{Thm}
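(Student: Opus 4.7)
The plan is to follow the same template as the proof of Theorem~\ref{thm:asymptotic}, substituting the CG surrogate $U_t^\CG$ from Lemma~\ref{lem:cg} for the SOR surrogate $U_t$. First I would invoke the Tsallis-INF regret bound to upper bound $\E\sum_{t=1}^T\CG_t(\omega_t)$ by $2K\sqrt{2dT}$ plus the cumulative cost of the best arm on the grid $\*g$, provided that the normalization satisfies $K\ge\max_{t,\omega\in\Omega}U_t^\CG(\omega)$. The remaining work is then to pass from the best grid value to the best $\omega\in(0,2)$, paying a per-instance discretization penalty that is controlled by the semi-Lipschitz structure of $U_t^\CG$ guaranteed by Lemma~\ref{lem:cg}.

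To carry this out I would extract two ingredients from Lemma~\ref{lem:cg}: (a) a uniform upper bound on $U_t^\CG(\omega)$ of order $\sqrt{\mu_t}\log(\sqrt{\kappa_t}/\varepsilon)/\log\frac{\sqrt{\mu_t}+1}{\sqrt{\mu_t}-1}$, and (b) on the interval $[2\sqrt2+2,\omega_{\smax}]$, monotonicity up to some instance-optimal $\omega_t^\ast$ together with a local Lipschitz constant of order $\sqrt{\mu_t}\log(\sqrt{\kappa_t}/\varepsilon)/\log^2\frac{\sqrt{\mu_t}+1}{\sqrt{\mu_t}-1}$ to the right of $\omega_t^\ast$. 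Setting $K$ to the value in (a) evaluated at $\mu_{\smax}$, $\kappa_{\smax}$ makes the first term of the Tsallis-INF bound scale like $\sqrt{dT}\cdot\log(\sqrt{\kappa_{\smax}}/\varepsilon)\sqrt{\mu_{\smax}}/\log\frac{\sqrt{\mu_{\smax}}+1}{\sqrt{\mu_{\smax}}-1}$. The monotonicity in (b) lets me argue exactly as in Theorem~\ref{thm:asymptotic} that the grid point just above any $\omega\in(0,\omega_{\smax}]$ is either at least as good as $\omega$ or else within $O(L_t/d)$ of it, where $L_t$ is the per-instance Lipschitz constant. Summing this error and using $\frac1T\sum_t L_t=O(\overline{\sqrt\mu}\log(\sqrt{\kappa_{\smax}}/\varepsilon)/\log^2\frac{\sqrt{\mu_{\smax}}+1}{\sqrt{\mu_{\smax}}-1})$ bounds the discretization overhead by $T\overline{\sqrt\mu}\log(\sqrt{\kappa_{\smax}}/\varepsilon)/(d\log^2\frac{\sqrt{\mu_{\smax}}+1}{\sqrt{\mu_{\smax}}-1})$.

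Finally I would balance the two contributions in $d$ by taking $d=\Theta\bigl(\sqrt[3]{T\overline{\sqrt\mu}/(\sqrt{\mu_{\smax}}\log\frac{\sqrt{\mu_{\smax}}+1}{\sqrt{\mu_{\smax}}-1})}\bigr)$. Substituting back collapses the two terms into a single expression of the form advertised in the theorem, with the factor $\overline{\sqrt\mu}$ appearing inside the cube root and the logarithmic factors producing the claimed $\log^2(\sqrt{\kappa_{\smax}}/\varepsilon)/\log^2\frac{\sqrt{\mu_{\smax}}-1}{\sqrt{\mu_{\smax}}+1}$ prefactor. The grid endpoint $2\sqrt2+2$ has to be chosen to lie below every $\omega_t^\ast$ so the monotonicity half of (b) applies globally on $[2\sqrt2+2,\omega_t^\ast]$; one can verify from the Axelsson condition-number formula that the optimum of the surrogate lies above this value whenever $\mu_t>1$, which is why the hypothesis $\min_t\mu_t-1=\Omega(1)$ enters.

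The main obstacle I expect is not the regret reduction, which is essentially mechanical once the surrogate properties are in hand, but rather establishing Lemma~\ref{lem:cg} itself. The Axelsson upper bound is a rational function of $\omega$ composed with $\log(\sqrt{\kappa(\omega)}/\varepsilon)/\log\frac{\sqrt{\mu(\omega)}+1}{\sqrt{\mu(\omega)}-1}$, where $\kappa(\omega)$ and $\mu(\omega)$ are the condition number and spectral radius of the SSOR-preconditioned operator; showing that this is monotone on $[2\sqrt2+2,\omega_t^\ast]$ and has the stated Lipschitz constant on $[\omega_t^\ast,\omega_{\smax}]$ requires careful differentiation and the lower bound on $\mu_t-1$ to avoid a singularity in the denominator. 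Once that analytic work is done, the rest of the argument is a direct analogue of Theorem~\ref{thm:asymptotic}.
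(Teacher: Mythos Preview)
Your high-level template---apply Tsallis-INF on a grid, use a semi-Lipschitz property of the CG surrogate $U_t^{\CG}$ to pass from the best grid point to the best $\omega\in(0,2)$, then balance in $d$---is exactly what the paper does, invoking Theorem~\ref{thm:semi-lipschitz} directly. However, you have the orientation of the semi-Lipschitz structure reversed, and your stated constants for $K$ and $L_t$ do not match Lemma~\ref{lem:cg}.

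Concretely, Lemma~\ref{lem:cg}.3 establishes that $U^{\CG}$ is $(L_t,2\sqrt2-2)$-semi-Lipschitz, i.e.\ anchored at the \emph{left} endpoint. The derivative bound in its proof is obtained only on $[2\sqrt2-2,\omega_t^\ast]$, where $U^{\CG}$ is decreasing; on $[\omega_t^\ast,2)$ the function is increasing, so rounding \emph{down} (toward $2\sqrt2-2$) is harmless there. Thus the grid in the paper rounds toward the left, not ``just above'' as you propose. Your version---monotone on the left, Lipschitz on the right---would require a derivative bound on $[\omega_t^\ast,\omega_{\smax}]$, which Lemma~\ref{lem:cg} does not supply (and which degrades as $\omega_{\smax}\to2$). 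Relatedly, you justify the choice of the left endpoint by saying it must lie below every $\omega_t^\ast$ so that monotonicity applies there; in fact that endpoint is where the Lipschitz control is needed, and monotonicity is what lets the comparator range extend to $(\omega_{\smax},2)$ on the right.

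Your constants are also off. From Lemma~\ref{lem:cg}.4 the uniform bound is $K=\BigO\!\bigl(\log(\tfrac2\varepsilon\sqrt{\kappa_{\smax}})\big/\log\tfrac{\sqrt{\mu_{\smax}}+1}{\sqrt{\mu_{\smax}}-1}\bigr)$, with no $\sqrt{\mu_t}$ prefactor; and the semi-Lipschitz constant from Lemma~\ref{lem:cg}.3 is $L_t=\frac{\mu_t+4\nu_t+4}{4\mu_t\nu_t+2\mu_t-1}\tau_t\sqrt{\mu_t\sqrt2}=\BigO(\sqrt{\mu_t})$ under the hypothesis $\min_t\mu_t-1=\Omega(1)$ (this is exactly where that hypothesis is used---to keep the denominator $4\mu_t\nu_t+2\mu_t-1$ bounded away from zero), with no $\log(\sqrt{\kappa_t}/\varepsilon)$ or $\log^2$ factors. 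Plugging these into the $\sqrt[3]{(b-a)\bar LK^2T^2}$ bound of Theorem~\ref{thm:semi-lipschitz} gives the advertised rate directly; with your constants the powers of $\mu_{\smax}$ and of the logarithms would not line up.
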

Observe that the rate in $T$ remains the same as for SOR, but the difficulty of learning now scales mainly with the spectral radii of the matrices $\*D_t\*A_t^{-1}$.\looseness-1

\vspace{-1.5mm}
\section{A stochastic analysis of symmetric SOR}\label{sec:stochastic}
\vspace{-1.5mm}

Assumption~\ref{asu:asymptotic} in the previous section effectively encodes the idea that convergence will not be too quick for a typical target vector $\*b$, e.g. it will not be a low-eigenvalue eigenvector of $\*C_\omega$ for some otherwise suboptimal $\omega$ (e.g. Figure~\ref{fig:learning} (left)).
Another way of staying in a ``typical regime'' is randomness, which is what we assume in this section.
Specifically, we assume that $\*b_t=m_t\*u_t~\forall~t\in[T]$, where $\*u_t\in\R^n$ is uniform on the unit sphere and $m_t^2$ is a $\chi^2$ random variable with $n$ degrees of freedom truncated to $[0,n]$.
Since the standard $n$-dimensional Gaussian is exactly the case of untruncated $m_t^2$, $\*b$ can be described as coming from a radially truncated normal distribution.
Note also that the exact choice of truncation was done for convenience;
any finite bound $\ge n$ yields similar results.\looseness-1\vspace{-.75mm}

We also make two other changes:
(1)~we study {\em symmetric} SOR (SSOR) and (2)~we use an absolute convergence criterion, i.e. $\|\*r_k\|_2\le\varepsilon$, not $\|\*r_k\|_2\le\varepsilon\|\*r_0\|_2$.
Symmetric SOR (c.f. Algorithm~\ref{alg:ssor}) is very similar to the original, except the linear system being solved at every step is now symmetric:
$\*{\breve W}_\omega=\frac\omega{2-\omega}\*W_\omega\*D^{-1}\*W_\omega^T$.
Note that the defect reduction matrix $\*{\breve C}_\omega=\*I_n-\*A\*{\breve W}_\omega^{-1}$ is still not normal, but it {\em is} (non-orthogonally) similar to a symmetric matrix, $\*A^{-1/2}\*{\breve C}_\omega\*A^{1/2}$.
SSOR is twice as expensive per-iteration, but often converges in fewer steps, and is commonly used as a base method because of its spectral properties (e.g. by the Chebyshev semi-iteration, c.f. \citet[Section~8.4.1]{hackbusch2016iterative}).\looseness-1

\vspace{-1.5mm}
\subsection{Regularity of the expected cost function}\label{sec:regularity}
\vspace{-1.5mm}

We can then show that the expected cost $\E_{\*b}\SSOR(\*A,\*b,\omega)$ is Lipschitz w.r.t. $\omega$ (c.f.~Corollary~\ref{cor:lipschitz-omega}).
Our main idea is the observation that, whenever the error $\|\*{\breve C}_\omega^k\*b\|_2$ falls below the tolerance $\varepsilon$, randomness should ensure that it does not fall so close to the threshold that the error $\|\*{\breve C}_{\omega'}^k\*b\|_2$ of a nearby $\omega'$ is not also below $\varepsilon$.
Although clearly related to dispersion~\citep{balcan2018dispersion}, here we study the behavior of a continuous function around a threshold, rather than the locations of the costs' discontinuities.\looseness-1\vspace{-.75mm}

Our approach has two ingredients, the first being Lipschitzness of the error $\|\*{\breve C}_\omega^k\*b\|_2$ at each iteration $k$ w.r.t. $\omega$, which ensures $\|\*{\breve C}_{\omega'}^k\*b\|_2\in(\varepsilon,\varepsilon+\BigO(|\omega-\omega'|)]$ if $\|\*{\breve C}_\omega^k\*b\|_2\le\varepsilon<\|\*{\breve C}_{\omega'}^k\*b\|_2$.
The second ingredient is anti-concentration, specifically that the probability that $\|\*{\breve C}_\omega^k\*b\|_2$ lands in $(\varepsilon,\varepsilon+\BigO(|\omega-\omega'|)]$ is $\BigO(|\omega-\omega'|)$.
While intuitive, both steps are made difficult by powering:
for high $k$ the random variable $\|\*{\breve C}_\omega^k\*b\|_2$ is highly concentrated because $\rho(\*{\breve C}_\omega)\ll1$; 
in fact its measure over the interval is $\BigO(|\omega-\omega'|/\rho(\*{\breve C}_\omega)^k)$.
To cancel this, the Lipschitz constant of $\|\*{\breve C}_\omega^k\*b\|_2$ must scale with $\rho(\*{\breve C}_\omega)^k$, which we can show because switching to SSOR makes $\*{\breve C}_\omega^k$ is similar to a normal matrix.
The other algorithmic modification we make---using absolute rather than relative tolerance---is so that $\|\*{\breve C}_\omega^k\*b\|_2^2$ is (roughly) a sum of i.i.d. $\chi^2$ random variables;
note that the square of relative tolerance criterion $\|\*{\breve C}_\omega^k\*b\|_2^2/\|\*b\|_2^2$ does not admit such a result.
At the same time, absolute tolerance does not imply an a.s. bound on the number of iterations if $\|\*b\|_2$ is unbounded, which is why we truncate its distribution.\looseness-1\vspace{-.75mm}

Lipschitzness follows because $|\E_{\*b}\SSOR(\omega)-\E_{\*b}\SSOR(\omega')|$ can be bounded using Jensen's inequality by the probability that $\omega$ and $\omega'$ have different costs $k\ne l$, which is at most the probability that $\|\*{\breve C}_\omega^k\*b\|_2$ or $\|\*{\breve C}_{\omega'}^l\*b\|_2$ land in an interval of length $\BigO(|\omega-\omega'|)$.
Note that the Lipschitz bound includes an $\tilde\BigO(\sqrt n)$ factor, which results from $\*{\breve C}_\omega^k$ having stable rank $\ll n$ due to powering.
Regularity of $\E_{\*b}\SSOR$ leads directly to regret guarantee for the same algorithm as before, Tsallis-INF:\looseness-1\vspace{-.5mm}
\begin{Thm}\label{thm:tsallis-lipschitz}
	Define $\kappa_{\max}=\max_t\kappa(\*A_t)$ to be the largest condition number and $\beta_{\min}=\min_t\rho(\*I_n-\*D_t^{-1}\*A_t)$.
	Then there exists $K=\Omega(\log\frac n\varepsilon)$ s.t. running Algorithm~\ref{alg:tsallis-inf} with SSOR has regret\looseness-1\vspace{-1mm}
	\begin{equation}
		\E\sum_{t=1}^T\SSOR_t(\omega_t)-\min_{\omega\in[1,\omega_{\smax}]}\sum_{t=1}^T\SSOR_t(\omega)
		\le 2K\sqrt{2dT}+\frac{32K^4T}{\beta_{\min}^4d}\sqrt{\frac{2n\kappa_{\max}}\pi}\vspace{-1mm}
	\end{equation}
\end{Thm}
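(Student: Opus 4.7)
The strategy is to reduce the problem to a standard stochastic bandit on a finite grid via discretization, and then bridge the discretization gap using the Lipschitz-in-expectation property of $\SSOR$ from Corollary~\ref{cor:lipschitz-omega}. This has three ingredients: an almost-sure bound on the number of SSOR iterations (so that losses are bounded), the standard pseudo-regret bound for Tsallis-INF on a $d$-point grid, and the Lipschitz argument sketched in Section~\ref{sec:regularity}.

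First, I would verify that there is some $K = \Omega(\log(n/\varepsilon))$ that almost surely upper bounds $\SSOR_t(\omega)$ for all $t$ and all $\omega\in[1,\omega_{\smax}]$. Since the truncation of $m_t^2$ to $[0,n]$ implies $\|\*b_t\|_2\le\sqrt n$ a.s.\ and SSOR terminates once $\|\*{\breve C}_\omega^k\*b_t\|_2\le\varepsilon$, the similarity of $\*{\breve C}_\omega$ to a symmetric matrix (noted above Section~\ref{sec:regularity}) together with the classical bound on $\rho(\*{\breve C}_\omega)$ in terms of $\beta_t$ and $\omega_{\smax}$ gives an iteration count of order $\log(\sqrt n/\varepsilon)/\log(1/\rho_{\smax})$ for an appropriate $\rho_{\smax}<1$, which is indeed $\Omega(\log(n/\varepsilon))$. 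Setting the normalization to this $K$ satisfies the requirement of Algorithm~\ref{alg:tsallis-inf}.

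Next, Algorithm~\ref{alg:tsallis-inf} reduces the problem to a $d$-armed stochastic bandit with losses in $[0,K]$, where the arms are $\*g_{[i]}=1+(\omega_{\smax}-1)i/d$. Because the randomness of the loss on each arm comes from the i.i.d.\ draws $\*b_t$, the standard Tsallis-INF pseudo-regret bound~(\citet{zimmert2021tsallis}), rescaled by $K$ to account for the loss range, yields
\begin{equation*}
\E\sum_{t=1}^T \SSOR_t(\omega_t)-\min_{i\in[d]}\sum_{t=1}^T\E\SSOR_t(\*g_{[i]})\le 2K\sqrt{2dT},
\end{equation*}
which is the first summand of the claimed bound. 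To obtain the second summand I would bridge the best grid arm and the best $\omega\in[1,\omega_{\smax}]$: the grid resolution is $(\omega_{\smax}-1)/d\le 1/d$, so invoking Corollary~\ref{cor:lipschitz-omega} with Lipschitz constant $L=\frac{32K^4}{\beta_{\smin}^4}\sqrt{2n\kappa_{\max}/\pi}$ gives
\begin{equation*}
\min_{i\in[d]}\sum_{t=1}^T\E\SSOR_t(\*g_{[i]})\le \min_{\omega\in[1,\omega_{\smax}]}\sum_{t=1}^T\E\SSOR_t(\omega)+\frac{LT}{d}.
\end{equation*}
Adding the two inequalities and using the fact that $\min_\omega\E\sum_t\SSOR_t(\omega)\ge\E\min_\omega\sum_t\SSOR_t(\omega)$ is not needed here since we compare directly to the expected in-hindsight optimum; the result then matches the theorem.

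The main obstacle is the Lipschitz constant $L$ itself, which is provided by Corollary~\ref{cor:lipschitz-omega} and whose proof carries out the two-step anti-concentration argument sketched in Section~\ref{sec:regularity}: (i) establishing Lipschitzness of $\|\*{\breve C}_\omega^k\*b\|_2$ in $\omega$ at each iteration $k$, where the non-normality is circumvented via the similarity of $\*{\breve C}_\omega$ to a symmetric matrix and the stable-rank-$\ll n$ behavior of its powers supplies the $\sqrt n$; and (ii) an anti-concentration bound on the density of $\|\*{\breve C}_\omega^k\*b\|_2$ near the threshold $\varepsilon$, which is tractable because the radial truncation makes $\|\*{\breve C}_\omega^k\*b\|_2^2$ a weighted sum of independent $\chi^2$-type variables. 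The $\beta_{\smin}^{-4}$ and $K^4$ factors arise from applying these density bounds uniformly over $\omega$ and over the (at most $K$) possible termination times, while $\kappa_{\max}$ enters through the similarity transform $\*A^{-1/2}\*{\breve C}_\omega\*A^{1/2}$ used to reduce to the symmetric case. Given Corollary~\ref{cor:lipschitz-omega}, the remaining steps above are routine.
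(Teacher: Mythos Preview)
Your proposal is correct and follows essentially the same route the paper takes: bound the iteration count almost surely via Corollary~\ref{cor:ssor-iterations} to get a valid normalization $K=\Omega(\log(n/\varepsilon))$, apply the Tsallis-INF regret bound on the $d$-point grid, and then use the Lipschitzness of $\omega\mapsto\E_{\*b}\SSOR_t(\omega)$ from Corollary~\ref{cor:lipschitz-omega} to pass from the best grid arm to the best $\omega\in[1,\omega_{\smax}]$. The paper does not spell out a separate proof, but the sentence ``Regularity of $\E_{\*b}\SSOR$ leads directly to regret guarantee'' is exactly this three-step argument, and your decomposition matches it. Your remark that the Jensen inequality $\min_\omega\E\ge\E\min_\omega$ is not needed is consistent with the comparator being the pseudo-regret benchmark $\min_\omega\sum_t\E_{\*b}\SSOR_t(\omega)$, which is indeed how the Lipschitz-in-expectation result is meant to be used here.
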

Setting $d=\Theta(K^2\sqrt[3]{nT})$ yields a regret bound of $\BigO(\log^2\frac n\varepsilon\sqrt[3]{T^2\sqrt n})$.
Note that, while this shows convergence to the true optimal parameter, the constants in the regret term are much worse, not just due to the dependence on $n$ but also in the powers of the number of iterations.
Thus this result can be viewed as a proof of the asymptotic ($T\to\infty$) correctness of Tsallis-INF for tuning SSOR.

\vspace{-1.5mm}
\subsection{Chebyshev regression for diagonal shifts}
\vspace{-1.5mm}

\begin{algorithm}[!t]
	\DontPrintSemicolon
	\KwIn{solver $\SOLVE:\R^{n\times n}\hspace{-.5mm}\times\hspace{-.5mm}\R^n\hspace{-.5mm}\times\hspace{-.5mm}\Omega\mapsto\Z_{>0}$, instance sequence $\{(\*A_t,\*b_t)\}_{t=1}^T\hspace{-.5mm}\subset\hspace{-.5mm}\R^{n\times n}\hspace{-.5mm}\times\hspace{-.5mm}\R^n$, context sequence $\{c_t\}_{t=1}^T\hspace{-.5mm}\subset\hspace{-.5mm}[c_{\smin},c_{\smin}\hspace{-.5mm}+\hspace{-.5mm}C]$, learning rate $\eta>0$, parameter grid $\*g\in\Omega^d$, Chebyshev polynomial features $\*f:[c_{\smin},c_{\smin}\hspace{-.5mm}+\hspace{-.5mm}C]\mapsto\R^{m+1}$, normalizations $K,L,N>0$}
	\For{$t=1,\dots,T$}{
		$\theta_i\gets\argmin\limits_{|\theta_{[0]}|\le\frac1N,|\theta_{[j]}|\le\frac{2CL}{KNj}}\sum\limits_{\begin{smallmatrix}s=1\\i_s=i\end{smallmatrix}}^{t-1}\left(\langle\theta,\*f(c_s)\rangle-\frac{k_s}{KN}\right)^2~\forall~i\in[d]$\tcp*{update models}
		$\*s_{[i]}\gets\langle\theta_i,\*f(c_t)\rangle~\forall~i\in[d]$\tcp*{compute model predictions}
		$i^\ast\gets\argmin_{i\in[d]}\* s_{[i]}$\\
		$\*p_{[i]}\gets\frac1{d+\eta(\*s_{[i]}-\*s_{[i^\ast]})}~\forall~i\ne i^\ast$\tcp*{compute probability of each action}
		$\*p_{[i^\ast]}\gets1-\sum_{i\ne i^\ast}\*p_{[i]}$\\
		sample $i_t\in[d]$ w.p. $\*p_{[i_t]}$ and set $\omega_t=\*g_{[i_t]}$\tcp*{sample action}
		$k_t\gets\SOLVE(\*A_t,\*b_t,\omega_t)-1$\tcp*{run solver and update cost}
	}
	\caption{\label{alg:squarecb-ftl}
		ChebCB:
		SquareCB with a follow-the-leader oracle and polynomial regressor class.\looseness-1\vspace{-1mm}
	}\vspace{-1.5mm}
\end{algorithm}

For the shifted setting, we can use the same approach to prove that $\E_{\*b}\SSOR(\*A+c\*I_n,\*b,\omega)$ is Lipschitz w.r.t. the diagonal offset $c$ (c.f. Corollary~\ref{cor:lipschitz-c});
for $n=O(1)$ this implies regret $\tilde\BigO(T^{3/4}\sqrt n)$ for the same discretization-based algorithm as in Section~\ref{sec:diagonally}.
While optimal for Lipschitz functions, the method does not readily adapt to nice data, leading to various smoothed comparators~\citep{krishnamurthy2019contextual,majzoubi2020efficient,zhu2022contextual};
however, as we wish to compete with the true optimal policy, we stay in the original setting and instead highlight how this section's semi-stochastic analysis allows us to study a very different class of bandit algorithms.\looseness-1\vspace{-.5mm}

In particular, since we are now working directly with the cost function rather than an upper bound, we are able to utilize a more practical regression-oracle algorithm, SquareCB~\citep{foster2020squarecb}.
It assumes a class of regressors $h:[c_{\smin},c_{\smin}+C]\times[d]\mapsto[0,1]$ with at least one function that perfectly predicts the expected performance $\E_{\*b}\SSOR(\*A+c\*I_n,\*b,\*g_{[i]})$ of each action $\*g_{[i]}$ given the context $c$;
a small amount of model misspecification is allowed.
If there exists an online algorithm that can obtain low regret w.r.t. this function class, then SquareCB can obtain low regret w.r.t. any policy.\looseness-1\vspace{-.75mm}

To apply it we must specify a suitable class of regressors, bound its approximation error, and specify an algorithm attaining low regret over this class.
Since $m$ terms of the Chebyshev series suffice to approximate a Lipschitz function with error $\tilde\BigO(1/m)$, we use Chebyshev polynomials in $c$ with learned coefficients---i.e. models $\langle\theta,\*f(c)\rangle=\sum_{j=0}^m\theta_{[j]}P_j(c)$, where $P_j$ is the $j$th Chebyshev polynomial---as our regressors for each action.
To keep predictions bounded, we add constraints $|\theta_{[j]}|=\BigO(1/j)$, which we can do without losing approximation power due to the decay of Chebyshev series coefficients.
This allows us to show $\BigO(dm\log T)$ regret for Follow-The-Leader via \citet[Theorem~5]{hazan2007logarithmic} and then apply \citet[Theorem~5]{foster2020squarecb} to obtain the following guarantee:\looseness-1\vspace{-.5mm}
\begin{Thm}[Corollary of Theorem~\ref{thm:squarecb-ftl-general}]\label{thm:squarecb-ftl}
	Suppose $c_{\smin}>-\lambda_{\min}(\*A)$.
	Then Algorithm~\ref{alg:squarecb-ftl} with appropriate parameters has regret w.r.t. any policy $f:[c_{\smin},c_{\smin}+C]\mapsto\Omega$ of\looseness-1\vspace{-1mm}
	\begin{equation}
		\E\sum_{t=1}^T\SSOR_t(\omega_t)-\sum_{t=1}^T\SSOR_t(f(c_t))
		\le\tilde\BigO\left(d\sqrt{mnT}+\frac{T\sqrt{dn}}m+\frac{T\sqrt n}d\right)\vspace{-1mm}
	\end{equation}
\end{Thm}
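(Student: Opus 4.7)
The plan is to instantiate the SquareCB reduction of \citet{foster2020squarecb} with Follow-the-Leader as the online regression oracle, taking as the regressor class one bounded Chebyshev polynomial in $c$ per action on the grid $\*g$, and then to pay an extra discretization penalty for approximating the continuous target policy by one supported on $\*g$. The proof has three ingredients: a bound on the misspecification of the Chebyshev class, a squared-loss regret bound for FTL, and the standard SquareCB conversion from regression regret to contextual-bandit regret.

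\textbf{Misspecification via Chebyshev approximation.} By Corollary~\ref{cor:lipschitz-c}, the expected cost $F_i(c):=\E_{\*b}\SSOR(\*A+c\*I_n,\*b,\*g_{[i]})$ is $L$-Lipschitz in $c$ with $L=\tilde\BigO(\sqrt n)$ on $[c_{\smin},c_{\smin}+C]$; by Corollary~\ref{cor:lipschitz-omega} the same holds for its dependence on $\omega$. Classical Jackson-type bounds for Chebyshev series then guarantee that an $L$-Lipschitz function on an interval of length $C$ is approximated uniformly to error $\tilde\BigO(CL/m)$ by its degree-$m$ Chebyshev partial sum, and that the $j$th coefficient has magnitude $\BigO(CL/j)$ up to log factors. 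Taking $K=\tilde\BigO(\log\tfrac n\varepsilon)$ as an almost-sure upper bound on the SSOR iteration count and choosing $N$ so that normalized predictions lie in $[0,1]$, the box constraints $|\theta_{[0]}|\le\tfrac1N$ and $|\theta_{[j]}|\le\tfrac{2CL}{KNj}$ imposed in Algorithm~\ref{alg:squarecb-ftl} still admit a regressor within normalized sup-error $\varepsilon_{\text{mis}}=\tilde\BigO(\sqrt n/(mKN))$ of each $F_i/(KN)$.

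\textbf{Regression regret and SquareCB conversion.} Since the least-squares update in Algorithm~\ref{alg:squarecb-ftl} is FTL on bounded quadratic losses run independently per action, \citet[Theorem~5]{hazan2007logarithmic} applied to each action yields total cumulative squared-loss regret $R_{\text{reg}}=\BigO(dm\log T)$ across the full horizon. Plugging this oracle regret and $\varepsilon_{\text{mis}}$ into \citet[Theorem~5]{foster2020squarecb}---the reduction from online regression to contextual bandits---and unnormalizing by $KN$ yields an expected regret bound of $\tilde\BigO(d\sqrt{mnT}+T\sqrt{dn}/m)$ against the best fixed action in $\*g$. Finally, the $\tilde\BigO(\sqrt n)$-Lipschitzness in $\omega$ implies that any comparator policy $f:[c_{\smin},c_{\smin}+C]\mapsto\Omega$ can be rounded to its nearest grid point at a total extra cost of at most $\tilde\BigO(T\sqrt n/d)$, producing the three-term bound in the theorem.

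\textbf{Main obstacle.} The hard part will be choreographing the normalizations $K$ and $N$ (and the Lipschitz constant $L$) coherently so that simultaneously (i)~the box-constrained regressor class genuinely contains a near-optimal Chebyshev approximation of each $F_i$, (ii)~all predictions stay in $[0,1]$ so SquareCB applies as stated, and (iii)~undoing the normalization at the end recovers the claimed powers of $n$, $C$, $m$, and $d$. In particular, the $\sqrt n$ factor from the dimension-dependent Lipschitz constant proved in Section~\ref{sec:regularity} must thread correctly through both the Chebyshev coefficient bounds and the SquareCB regret decomposition, and the FTL analysis should use a strongly-convex regularizer so that the $\log T$ rate actually kicks in despite the box constraints.
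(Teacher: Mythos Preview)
Your proposal is correct and follows essentially the same approach as the paper: Chebyshev approximation bounds the misspecification, FTL on the box-constrained coefficients gives $\BigO(dm\log T)$ oracle regret, SquareCB converts this to contextual-bandit regret, and action-grid discretization contributes the final $\tilde\BigO(T\sqrt n/d)$ term. Two small corrections: SquareCB's guarantee is against the optimal \emph{grid-valued policy} (not just the best fixed action in $\*g$, though your rounding step afterward is stated correctly), and no separate strongly-convex regularizer is needed for the FTL step---the curvature of the squared loss itself is what drives the $\log T$ rate in \citet[Theorem~5]{hazan2007logarithmic}.
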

Setting $d=\Theta(T^{2/11})$ and $m=\Theta(T^{3/11})$ yields $\tilde\BigO(T^{9/11}\sqrt n)$ regret, so we asymptotically attain instance-optimal performance, albeit at a rather slow rate.
The rate in $n$ is also worse than e.g. our semi-stochastic result for comparing to a fixed $\omega$ (c.f. Theorem~\ref{thm:tsallis-lipschitz}), although to obtain this the latter algorithm uses $d=\BigO(\sqrt[3]n)$ grid points, making its overhead nontrivial.
We compare ChebCB to the Section~\ref{sec:diagonally} algorithm based on Tsallis-INF (among other methods), and find that, despite the former's worse guarantees, it seems able to converge to an instance-optimal policy much faster than the latter.

\vspace{-1.5mm}
\section{Conclusion and limitations}
\vspace{-1.5mm}

We have shown that bandit algorithms provably learn to parameterize SOR, an iterative linear system solver, and do as well asymptotically as the best fixed $\omega$ in terms of either (a) a near-asymptotic measure of cost or (b) expected cost.
We further show that a modern {\em contextual} bandit method attains near-instance-optimal performance. 
Both procedures require only the iteration count as feedback and have limited computational overhead settings, making them practical to deploy.
Furthermore, the theoretical ideas in this work---especially the use of contextual bandits for taking advantage of instance structure and Section~\ref{sec:regularity}'s conversion of anti-concentrated Lipschitz criteria to Lipschitz expected costs---have the strong potential to be applicable to other domains of data-driven algorithm design.\looseness-1\vspace{-.75mm}

At the same time, only the near-asymptotic results yield reasonable bound on the instances needed to attain good performance, with the rest having large spectral and dimension-dependent factors;
the latter is the most obvious area for improvement.
Furthermore, the near-asymptotic upper bounds are somewhat loose for sub-optimal $\omega$ and for preconditioned CG, and as discussed in Section~\ref{sec:diagonally} do not seem amenable to regression-based CB. 
Beyond this, a natural direction is to attain semi-stochastic results for non-stationary solvers like preconditioned CG, or either type of result for the many other algorithms in scientific computing.
Practically speaking, work on multiple parameters---e.g. the spectral bounds used for Chebyshev semi-iteration, or multiple relaxation parameters for Block-SOR---would likely be most useful.
A final direction is to design online learning algorithms that exploit properties of the losses beyond Lipschitzness, or CB algorithms that  take better advantage of such functions.\looseness-1

\section*{Acknowledgments}

We thank Akshay Krishnamurthy and Ainesh Bakshi for helpful feedback.
This work was supported in part by National Science Foundation grants IIS-1705121, IIS-1838017, IIS-1901403, IIS-2046613, IIS-2112471, and OAC-2203821, the Defense Advanced Research Projects Agency under cooperative agreement HR00112020003, a TCS Presidential Fellowship, and funding from Meta, Morgan Stanley, Amazon, Google, and Jane Street. Any opinions, findings and conclusions or recommendations expressed in this material are those of the author(s) and do not necessarily reflect the views of any of these funding agencies.\looseness-1

\bibliography{refs}
\bibliographystyle{iclr2024_conference}

\appendix

\newpage
\vspace{-1mm}
\section{Related work and comparisons}\label{sec:related}
\vspace{-1mm}

Our analysis falls mainly into the framework of data-driven algorithm design, which has a long history~\citep{gupta2017pac,balcan2021data}.
Closely related is the study by \citet{gupta2017pac} of the sample complexity of learning the step-size of gradient descent, which can also be used to solve linear systems.
While their sample complexity guarantee is logarithmic in the precision $1/\varepsilon$, directly applying their Lipschitz-like analysis in a bandit setting yields regret with a polynomial dependence;
note that a typical setting of $\varepsilon$ is $10^{-8}$.
Mathematically, their analysis relies crucially on the iteration reducing error at every step, which is well-known {\em not} to be the case for SOR (e.g. \citet[Figure~25.6]{trefethen2005spectra}).
Data-driven numerical linear algebra was studied most explicitly by \citet{bartlett2022gj}, who provided sample complexity framework applicable to many algorithms;
their focus is on the offline setting where an algorithm is learned from a batch of samples.
While they do not consider linear systems directly, in Appendix~\ref{sec:sample-complexity} we do compare to the guarantee their framework implies for SOR;
we obtain similar sample complexity with an efficient learning procedure, at the cost of a strong distributional assumption on the target vector.
Note that generalization guarantees have been shown for convex quadratic programming---which subsumes linear systems---by \citet{sambharya2023endtoend};
they focus on learning-to-initialize, which we do not consider because for high precisions the initialization quality usually does not have a strong impact on cost.
Note that all of the above work also does not provide end-to-end guarantees, only e.g. sample complexity bounds.\looseness-1

Online learning guarantees were shown for the related problem of tuning regularized regression by \citet{balcan2022provably}, albeit in the easier full information setting and with the target of reducing error rather than computation.
Their approach relies on the dispersion technique~\citep{balcan2018dispersion}, which often involves showing that discontinuities in the cost are defined by bounded-degree polynomials~\citep{balcan2020semibandit}. 
While possibly applicable in our setting, we suspect using it would lead to unacceptably high dependence on the dimension and precision, as the power of the polynomials defining our decision boundaries is $\BigO(n^{-\log\varepsilon})$.
Lastly, we believe our work is notable within this field as a first example of using contextual bandits, and in doing so competing with the provably instance-optimal policy.\looseness-1

Iterative (discrete) optimization has been studied in the related area of learning-augmented algorithms (a.k.a. algorithms with predictions)~\citep{dinitz2021duals,chen2022faster,sakaue2022dca}, which shows data-dependent performance guarantees as a function of (learned) predictions~\citep{mitzenmacher2021awp};
these can then be used as surrogate losses for learning~\citep{khodak2022awp}.
Our construction of an upper bound under asymptotic convergence is inspired by this, although unlike previous work we do not assume access to the bound directly because it depends on hard-to-compute spectral properties.
Algorithms with predictions often involve initializing a computation with a prediction of its outcome, e.g. a vector near the solution $\*A^{-1}\*b$;
we do not consider this because the runtime of SOR and other solvers depends fairly weakly on the distance to the initialization.\looseness-1

A last theoretical area is that of gradient-based meta-learning, which studies how to initialize and tune other parameters of gradient descent and related methods~\citep{khodak2019adaptive,denevi2019ltlsgd,saunshi2020meta,chen2023nonstochastic}.
This field focuses on learning-theoretic notions of cost such as regret or statistical risk.
Furthermore, their guarantees are usually on the error after a fixed number of gradient steps rather than the number of iterations required to converge;
targeting the former can be highly suboptimal in scientific computing applications~\citep{arisaka2023principled}.
This latter work, which connects meta-learning and data-driven scientific computing, analyzes specific case studies for accelerating numerical solvers, whereas we focus on a general learning guarantee.\looseness-1

Empirically, there are many learned solvers~\citep{luz2020learning,taghibakhshi2021optimization,li2023learning} and even full simulation replacements~\citep{karniadakis2021physics,li2021fno};
to our knowledge, theoretical studies of the latter have focused on expressivity~\citep{marwah2021parametric}.
Amortizing the cost on future simulations~\citep{amos2023tutorial}, these approaches use offline computation to train models that integrate directly with solvers or avoid solving linear systems altogether.
In contrast, the methods we propose are online and lightweight, both computationally and in terms of implementation;
unlike many deep learning approaches, the additional computation scales slowly with dimension and needs only black-box access to existing solvers.
As a result, our methods can be viewed as reasonable baselines, and we discuss an indirect comparison with the CG-preconditioner-learning approach of \citet{li2023learning} in Appendix~\ref{app:experimental-details}.
Finally, note that improving the performance of linear solvers across a sequence of related instances has seen a lot of study in the scientific computing literature~\citep{parks2006recycling,tebbens2007efficient,elbouyahyaoui2021restarted}.
To our knowledge, this work does not give explicit guarantees on the number of iterations, and so a direct theoretical comparison is challenging.\looseness-1

\newpage
\subsection{Sample complexity and comparison with the Goldberg-Jerrum framework}\label{sec:sample-complexity}

While not the focus of our work, we briefly note the generalization implications of our semi-stochastic analysis.
Suppose for any $\alpha>0$ we have $T=\tilde\BigO(\frac1{\alpha^2}\polylog\frac n\delta)$ i.i.d. samples from a distribution $\D$ over matrices $\*A_t$ satisfying the assumptions in Section~\ref{sec:setup} and truncated Gaussian targets $\*b_t$.
Then empirical risk minimization $\hat\omega=\argmin_{\hat\omega\in\*g}\sum_{t=1}^T\SSOR(\*A_t,\*b_t,\omega)$ over a uniform grid $\*g\in[1,\omega_{\smax}]^d$ of size $d=\tilde\BigO(\sqrt{nT})$ 
will be $\alpha$-suboptimal w.p. $\ge1-\delta$:

\begin{Cor}\label{cor:generalization}
	Let $\D$ be a distribution over matrix-vector pairs $(\*A,\*b)\in\R^{n\times n}\times\R^n$ where $\*A$ satisfies the SOR conditions and for every $\*A$ the conditional distribution of $\D$ given $\*A$ over $\R^n$ is the truncated Gaussian.
	For every $T\ge1$ consider the algorithm that draws $T$ samples $(\*A_t,\*b_t)\sim\D$ and outputs $\hat\omega=\argmin_{\hat\omega\in\*g}\sum_{t=1}^T\SSOR_t(\omega)$, where $\*g_{[i]}=1+(\omega_{\max}-1)\frac{i-1/2}d$ and $d=\frac{L\sqrt T}K$ for $L$ as in Corollary~\ref{cor:lipschitz-omega}.
	Then $T=\tilde\BigO\left(\frac1{\alpha^2}\polylog\frac n{\varepsilon\delta}\right)$ samples suffice to ensure $\E_\D\SSOR(\*A,\*b,\hat\omega)\le\min_{\omega\in[1,\omega_{\max}]}\SSOR(\*A,\*b,\omega)+\alpha$ holds w.p. $\ge1-\delta$.\looseness-1
\end{Cor}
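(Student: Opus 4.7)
\textbf{Proof proposal for Corollary \ref{cor:generalization}.}

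The plan is to combine three ingredients: (i) Lipschitzness of the expected cost from Corollary \ref{cor:lipschitz-omega}, (ii) an a.s.\ upper bound $K$ on the iteration count for SSOR on $\omega\in[1,\omega_{\smax}]$ under the truncated Gaussian target (the same $K=\Omega(\log\frac n\varepsilon)$ used in Theorem \ref{thm:tsallis-lipschitz}), and (iii) standard empirical-risk-minimization over a finite hypothesis class via Hoeffding plus a union bound. First I would argue that $\omega\mapsto F(\omega):=\E_\D\SSOR(\*A,\*b,\omega)$ is $L$-Lipschitz on $[1,\omega_{\smax}]$: Corollary \ref{cor:lipschitz-omega} gives this Lipschitz constant for $\E_{\*b\mid\*A}\SSOR(\*A,\*b,\omega)$ matrix-wise, and Lipschitzness is preserved under the outer expectation over $\*A\sim\D$.

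Next, since the random variables $\SSOR_t(\omega)\in[1,K]$ for each fixed $\omega\in\*g$, Hoeffding's inequality together with a union bound over the $d$ grid points gives, with probability $\ge 1-\delta$,
\begin{equation*}
\max_{i\in[d]}\left|\tfrac1T\sum_{t=1}^T\SSOR_t(\*g_{[i]})-F(\*g_{[i]})\right|\le K\sqrt{\tfrac{2\log(2d/\delta)}{T}}\;=:\;\Delta.
\end{equation*}
For any $\omega^\star\in\argmin_{\omega\in[1,\omega_{\smax}]}F(\omega)$, the centered uniform grid guarantees some $\*g_{[i^\star]}$ with $|\omega^\star-\*g_{[i^\star]}|\le(\omega_{\smax}-1)/(2d)$, so by Lipschitzness $F(\*g_{[i^\star]})\le F(\omega^\star)+L(\omega_{\smax}-1)/(2d)$. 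Chaining through $\hat\omega$ (the empirical minimizer) in the usual ERM two-sided fashion,
\begin{equation*}
F(\hat\omega)\le\tfrac1T\sum_{t=1}^T\SSOR_t(\hat\omega)+\Delta\le\tfrac1T\sum_{t=1}^T\SSOR_t(\*g_{[i^\star]})+\Delta\le F(\*g_{[i^\star]})+2\Delta\le F(\omega^\star)+\tfrac{L(\omega_{\smax}-1)}{2d}+2\Delta.
\end{equation*}

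Finally, plug in $d=L\sqrt T/K$ (as in the statement) to make the discretization term $\tfrac{L(\omega_{\smax}-1)}{2d}=\BigO(K/\sqrt T)$ match the statistical term $\Delta=\tilde\BigO(K/\sqrt T)$, so the excess risk is $\tilde\BigO(K\sqrt{\log(1/\delta)/T})$. Demanding this be at most $\alpha$ and substituting $K=\tilde\BigO(\log\frac n\varepsilon)$ and $L=\tilde\BigO(\sqrt n)$ (from the relevant invocations in Section \ref{sec:stochastic}) yields $T=\tilde\BigO(\alpha^{-2}\,\polylog\tfrac n{\varepsilon\delta})$, as claimed.

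The only non-routine step is verifying that the constants $K$ and $L$ from Section \ref{sec:stochastic} apply unchanged here, in particular that the a.s.\ iteration bound $K$ used for Hoeffding is the same as (or no larger than) the one arising in Corollary \ref{cor:lipschitz-omega}; everything else is standard ERM bookkeeping. A minor secondary concern is that the logarithmic factor $\log d=\log(L\sqrt T/K)$ introduces a $\log T$ dependence inside $\Delta$, which must be absorbed into the $\polylog$ on the right-hand side by the usual fixed-point argument for $T$.
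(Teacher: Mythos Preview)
Your proposal is correct and follows essentially the same route as the paper: uniform convergence over the finite grid $\*g$ via Hoeffding plus a union bound (the paper cites this as a ``standard covering bound''), then the Lipschitz estimate from Corollary~\ref{cor:lipschitz-omega} to pass from the grid minimum to the continuous minimum, and finally the substitution $d=L\sqrt T/K$ to balance the two error terms. Your worries at the end are minor and handled exactly as you anticipate: the paper uses the same $K$ from Corollary~\ref{cor:ssor-iterations} throughout, and the $\log T$ inside the deviation term is simply absorbed into the $\tilde\BigO$.
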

\begin{proof}
	A standard covering bound (see e.g.~\citet[Theorem~7.82]{lafferty2010statml}) followed by an application of Corollary~\ref{cor:lipschitz-omega} implies that w.p. $\ge1-\delta$
	\begin{align}
		\begin{split}
			\E_\D\SSOR(\*A,\*b,\hat\omega)
			&\le\min_{\omega\in\*g}\E_\D\SSOR(\*A,\*b,\omega)+3K\sqrt{\frac2T\log\frac{2d}\delta}\\
			&=\min_{\omega\in\*g}\E_\*A[\E_\*b\SSOR(\*A,\*b,\omega)|\*A]+3K\sqrt{\frac2T\log\frac{2d}\delta}\\
			&\le\min_{\omega\in[1,\omega_{\max}]}\E_\*A\left[\E_\*b\SSOR(\*A,\*b,\omega)+\frac Ld\bigg|\*A\right]+3K\sqrt{\frac2T\log\frac{2d}\delta}\\
			&=\min_{\omega\in[1,\omega_{\max}]}\E_\D\SSOR(\*A,\*b,\omega)+\frac Ld+3K\sqrt{\frac2T\log\frac{2d}\delta}\\
			&\le\min_{\omega\in[1,\omega_{\max}]}\E_\D\SSOR(\*A,\*b,\omega)+4K\sqrt{\frac2T\log\frac{2LT}{K\delta}}
		\end{split}
	\end{align}
	Noting that by Corollary~\ref{cor:lipschitz-omega} we have $L
	=\BigO(K^4\sqrt n)
	=\BigO(\sqrt n\log^4\frac n\varepsilon)$ yields the result.\looseness-1
\end{proof}

This matches directly applying the GJ framework of \citet[Theorem~3.3]{bartlett2022gj} to our problem:\looseness-1

\begin{Cor}\label{cor:gj}
	In the same setting as Corollary~\ref{cor:generalization} but generalizing the distribution to any one whose target vector support is $\sqrt n$-bounded, empirical risk minimization (running $\hat\omega=\argmin_{\omega\in[1,\omega_{\smax}]}\sum_{t=1}^T\SSOR_t(\omega)$) has sample complexity $\tilde\BigO\left(\frac1{\alpha^2}\polylog\frac n{\varepsilon\delta}\right)$.
\end{Cor}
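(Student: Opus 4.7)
The plan is to directly invoke the Goldberg--Jerrum (GJ) sample complexity bound of \citet[Theorem~3.3]{bartlett2022gj}, which controls the uniform convergence rate of empirical risk minimization over a parameterized algorithm class in terms of (i)~an almost-sure upper bound on the loss, (ii)~the number of algebraic predicates that determine the algorithm's output on a given instance, and (iii)~the degrees of those predicates. So the entire proof reduces to producing acceptable bounds on these three quantities for the SSOR loss $\SSOR(\*A,\*b,\omega)$ viewed as a function of $\omega$.

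First I would establish a deterministic upper bound $K=\BigO(\log(n/\varepsilon))$ on the number of SSOR iterations that holds for every $\omega\in[1,\omega_{\smax}]$, every SPD $\*A$ satisfying the setup, and every $\*b$ with $\|\*b\|_2\le\sqrt n$. Because $\*A^{-1/2}\*{\breve C}_\omega\*A^{1/2}$ is symmetric and its spectral radius is uniformly bounded away from $1$ on $[1,\omega_{\smax}]$ (this is the $\omega$-range where SSOR is known to converge geometrically), the absolute residual $\|\*{\breve C}_\omega^k\*b\|_2$ drops below $\varepsilon$ in at most $K$ steps, where $K$ absorbs the condition-number factor coming from the similarity transform. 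This gives $R=K=\tilde\BigO(\log(n/\varepsilon))$ as the loss range.

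Next I would characterize the predicates. Termination at step $k$ is decided by the inequality $\|\*{\breve C}_\omega^k\*b\|_2^2\le\varepsilon^2$. Writing $\omega(2-\omega)\*{\breve W}_\omega=(\*D+\omega\*L)\*D^{-1}(\*D+\omega\*L)^T$, this matrix is quadratic in $\omega$, so Cramer's rule expresses each entry of $\*{\breve W}_\omega^{-1}$ as a rational function of $\omega$ with numerator and denominator of degree $\BigO(n)$. Consequently each entry of $\*{\breve C}_\omega^k\*b$ is rational of degree $\BigO(kn)$, and after clearing denominators the termination condition becomes a polynomial inequality in $\omega$ of degree $\Delta=\BigO(Kn)=\tilde\BigO(n)$. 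The mapping $\omega\mapsto\SSOR(\*A,\*b,\omega)$ is therefore computable from the signs of $\Gamma\le K=\tilde\BigO(1)$ such polynomials, so the GJ complexity of the class $\{\SSOR(\cdot,\cdot,\omega):\omega\in[1,\omega_{\smax}]\}$ is $\polylog(n/\varepsilon)$.

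Plugging $R$, $\Gamma$, and $\Delta$ into \citet[Theorem~3.3]{bartlett2022gj} yields pseudo-dimension $\BigO(\log(\Gamma\Delta))=\BigO(\log n+\log\log\frac1\varepsilon)$ and hence a uniform convergence rate of $\tilde\BigO(K\sqrt{\polylog(n/\varepsilon)/T}+K\sqrt{\log(1/\delta)/T})$, from which standard manipulation gives the claimed $\tilde\BigO(\polylog(n/(\varepsilon\delta))/\alpha^2)$ sample complexity for ERM. The main technical obstacle is the second step: carefully tracking how rational-function arithmetic in $\omega$ accumulates through $k$ matrix--vector products so that the degree stays $\BigO(Kn)$ rather than exploding multiplicatively, and confirming that the a.s. iteration bound $K=\tilde\BigO(\log(n/\varepsilon))$ holds uniformly in $\omega$ on $[1,\omega_{\smax}]$ using only the $\sqrt n$-boundedness of the target and symmetric positive-definiteness of $\*A$; once those are in place, the GJ framework is applied as a black box.
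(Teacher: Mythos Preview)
Your proposal is correct and follows essentially the same route as the paper: bound the loss by $K=\BigO(\log(n/\varepsilon))$, express the termination test $\|\*{\breve C}_\omega^k\*b\|_2^2\le\varepsilon^2$ as a polynomial predicate in $\omega$ for each $k\le K$, and feed the resulting predicate count and degree into \citet[Theorem~3.3]{bartlett2022gj} to get polylogarithmic pseudo-dimension. The only difference is that the paper asserts the predicate at step $k$ has degree $2k$ (so degree $\le 2K$ overall), whereas you track the rational-function arithmetic through the triangular solves and obtain $\Delta=\BigO(Kn)$; your accounting is more careful---the inverse of $\*D+\omega\*L$ really does have entries of degree up to $n-1$ in $\omega$---but since the pseudo-dimension depends only on $\log(\Gamma\Delta)$, both estimates yield $\polylog(n/\varepsilon)$ and the stated sample complexity.
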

\begin{proof}
	For every $(\*A,\*b)$ pair in the support of $\D$ and any $r\in\R$ it is straightforward to define a GJ algorithm~\citep[Definition~3.1]{bartlett2022gj} that checks if $\SSOR(\*A,\*b,\omega)>r$ by computing $\|\*r_k(\omega)\|_2^2=\|\*{\breve C}_\omega^k\*b\|_2^2$---a degree $2k$ polynomial---for every $k\le\lfloor r\rfloor$ and returning ``True'' if one of them satisfies $\|\*r_k(\omega)\|_2^2\le\varepsilon^2$ and ''False'' otherwise (and automatically return ``True'' for $r\ge K$ and ``False'' for $r<1$).
	Since the degree of this algorithm is at most $2K$, the predicate complexity is at most $K$, and the parameter size is $1$, by \citet[Theorem~3.3]{bartlett2022gj} the pseudodimension of $\{\SSOR(\cdot,\cdot,\omega):\omega\in[1,\omega_{\smax}]\}$ is $\BigO(\log K)$.
	Using the bounded assumption on the target vector---$\SSOR\le K=\BigO(\log\frac n\varepsilon)$----completes the proof.
\end{proof}

At the same, recent generalization guarantees for tuning regularization parameters of linear regression by \citet[Theorem~3.2]{balcan2022provably}---who applied dual function analysis~\citep{balcan2021dual}---have a quadratic dependence on the instance dimension.
Unlike both results---which use uniform convergence---our bound also uses a (theoretically) efficient learning procedure, at the cost of a strong (but in our view reasonable) distributional assumption on the target vectors.\looseness-1

\newpage
\subsection{Approximating the spectral radius of the Jacobi iteration matrix}\label{sec:approx}

Because the asymptotically optimal $\omega$ is a function of the spectral radius $\beta=\rho(\*M_1)$ of the Jacobi iteration matrix, a reasonable baseline is to simply approximate $\beta$ using an eigenvalue solver and then run SOR with the corresponding approximately best $\omega$.
It is difficult to compare our results to this approach directly, since the baseline will always run extra matrix iterations while bandit algorithms will asymptotically run no more than the comparator.
Furthermore, $\*M_1$ is not a normal matrix, a class for which it turns out to be surprisingly difficult to find bounds on the number of iterations required to approximate its largest eigenvalue within some tolerance $\alpha>0$.

A comparison can be made in the diagonal offset setting by modifying this baseline somewhat and making the assumption that $\*A$ has a constant diagonal, so that $\*M_1$ is symmetric and we can use randomized block-Krylov to obtain a $\hat\beta$ satisfying $|\hat\beta^2-\beta^2|=\BigO(\varepsilon)$ in $\tilde\BigO(1/\sqrt\varepsilon)$ iterations w.h.p.~\citep[Theorem~1]{musco2015randomized}.
To modify the baseline, we consider a {\em preprocessing} algorithm which discretizes $[c_{\smin},c_{\smin}+C]$ into $d$ grid points, runs $k$ iterations of randomized block-Krylov on the Jacobi iteration matrix of each matrix $\*A+c\*I_n$ corresponding to offsets $c$ in this grid, and then for each new offset $c_t$ we set $\omega_t$ using the optimal parameter implied by the approximate spectral radius of the Jacobi iteration matrix of $\*A+c\*I_n$ corresponding to the closest $c$ in the grid.
This algorithm thus does $\tilde\BigO(dk)$ matrix-vector products of preprocessing, and since the upper bounds $U_t$ are $\frac12$-H\"older w.r.t. $\omega$ while the optimal policy is Lipschitz w.r.t. $\beta^2$ over an appropriate domain $[1,\omega_{\smax}]$ it will w.h.p. use at most $\tilde\BigO(\sqrt{1/k^2+1/d})$ more iterations at each step $t\in[T]$ compared to the optimal policy.
Thus w.h.p. the total regret compared to the optimal policy $\omega^\ast$ is
\begin{equation}
	\sum_{t=1}^T\SOR_t(\omega_t)
	=\tilde\BigO\left(dk+T/d+T/\sqrt k\right)+\sum_{t=1}^TU_t(\omega^\ast(c_t))
\end{equation}
Setting $d=\sqrt[4]T$ and $k=\sqrt T$ yields the rate $\tilde\BigO(T^{3/4})$, which can be compared directly to our $\tilde\BigO(T^{3/4})$ rate for the discretized Tsallis-INF algorithm in Theorem~\ref{thm:contextual-tsallis-inf-main}.
The rate of approximating $\rho(\*M_1)$ thus matches that of our simplest approach, although unlike the latter (and also unlike ChebCB) it does not guarantee performance as good as the optimal policy in the semi-stochastic setting, where $\omega^\ast$ might not be optimal.
Intuitively, the randomized block-Krylov baseline will also suffer from spending computation on points $c\in[c_{\smin},c_{\smin}+C]$ that it does not end up seeing.\looseness-1

\newpage
\section{Semi-Lipschitz bandits}\label{sec:semi-lipschitz}

\begin{algorithm}[!t]
	\DontPrintSemicolon
	\KwIn{loss sequence $\{\ell_t:[a,b]\mapsto[0,K]\}_{t=1}^T$, action set $\*g\in[a,b]^d$, step-sizes $\eta_1,\dots,\eta_T>0$}
	$\*k\gets\*0_d$\tcp*{initialize vector of cumulative losses}
	\For{$t=1,\dots,T$}{
		$\*p\gets\argmin_{\*p\in\triangle_d}\langle\*k,\*p\rangle-\frac{4K}{\eta_t}\sum_{i=1}^d\sqrt{\*p_{[i]}}$\tcp*{compute probabilities}
		sample $i_t\in[d]$ with probability $\*p_{[i_t]}$\tcp*{sample index of an action}
		$\*k_{[i_t]}\gets\*k_{[i_t]}+\ell_t(\*g_{[i_t]})/\*p_{[i_t]}$\tcp*{play action and update losses}
	}
	\caption{\label{alg:tsallis-inf-original}
		General form of Tsallis-INF.
		The probabilities can be computed using Newton's method (e.g. \citet[Algorithm~2]{zimmert2021tsallis}).
	}
\end{algorithm}

We consider a sequence of adaptively chosen loss functions $\ell_1,\dots,\ell_T:[a,b]\mapsto[0,K]$ on an interval $[a,b]\subset\R$ and upper bounds $u_1,\dots,u_T:[a,b]\mapsto\R$ satisfying $u_t(x)\ge\ell_t(x)~\forall~t\in[T],x\in[a,b]$, where $[T]$ denotes the set of integers from $1$ to $T$.
Our analysis will focus on the Tsallis-INF algorithm of \citet{abernethy2015fighting}, which we write in its general form in Algorithm~\ref{alg:tsallis-inf-original}, although the analysis extends easily to the better-known (but sub-optimal) Exp3~\citep{auer2002exp3}.
For Tsallis-INF, the following two facts follow directly from known results:

\begin{Thm}[{Corollary of \citet[Corollary~3.2]{abernethy2015fighting}}]\label{thm:tsallis-inf}
	If $\eta_t=1/\sqrt T~\forall~t\in[T]$ then Algorithm~\ref{alg:tsallis-inf-original} has regret $\E\sum_{t=1}^T\ell_t(\*g_{[i_t]})-\min_{i\in[d]}\sum_{t=1}^T\ell_t(\*g_{[i]})
	\le2K\sqrt{2dT}$.
\end{Thm}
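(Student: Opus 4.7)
The plan is to reduce this directly to the standard regret bound for Tsallis-INF with the $\tfrac12$-Tsallis entropy regularizer, namely \citet[Corollary~3.2]{abernethy2015fighting}. That result is stated for losses taking values in $[0,1]$, whereas our losses $\ell_t$ here take values in $[0,K]$. The factor $4K/\eta_t$ in the regularizer of Algorithm~\ref{alg:tsallis-inf-original} is chosen precisely so that running our algorithm on the $[0,K]$-valued losses is equivalent to running the canonical Tsallis-INF (with regularizer coefficient $4/\eta_t$) on the rescaled losses $\tilde\ell_t = \ell_t/K \in [0,1]$.

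I would proceed in three steps. First, set $\tilde\ell_t = \ell_t/K$ and observe that the distribution $\*p$ produced at each round is invariant under this rescaling: both the cumulative importance-weighted loss vector $\*k$ and the regularizer's leading constant $4K/\eta_t$ are homogeneous of degree one in $K$, so the FTRL $\argmin$ defining $\*p$, and hence the sampled action $\*g_{[i_t]}$, coincides in distribution with that produced by canonical Tsallis-INF on $\tilde\ell_t$ with regularizer coefficient $4/\eta_t$. Second, apply \citet[Corollary~3.2]{abernethy2015fighting} to $\{\tilde\ell_t\}_{t=1}^T$ with constant step-size $\eta_t = 1/\sqrt T$, obtaining $\E\sum_{t=1}^T \tilde\ell_t(\*g_{[i_t]}) - \min_{i\in[d]}\sum_{t=1}^T \tilde\ell_t(\*g_{[i]}) \le 2\sqrt{2dT}$. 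Third, multiply through by $K$ to recover the stated bound $2K\sqrt{2dT}$.

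The main (minor) thing to verify is that the constants in the cited corollary line up exactly. The standard FTRL analysis for the $\tfrac12$-Tsallis regularizer decomposes regret into a penalty term of order $\sqrt d / \eta_T$, coming from the difference of regularizer values between the comparator vertex and the initial uniform distribution, and a stability term of order $\eta_t \sum_i \sqrt{\*p_{[i]}}$ per round, arising from the Fenchel-conjugate Hessian applied to the importance-weighted estimator $\tilde\ell_t(\*g_{[i_t]})/\*p_{[i_t]}$. Using Cauchy--Schwarz to bound $\sum_i \sqrt{\*p_{[i]}} \le \sqrt d$ and summing over $t \in [T]$ with $\eta_t = 1/\sqrt T$ yields exactly $2\sqrt{2dT}$ for $[0,1]$-valued losses.

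The only conceivable obstacle is notational: the Abernethy et al.\ statement uses gains rather than losses and parameterizes the regularizer slightly differently, so one must carefully translate between conventions and confirm that the $4$ in the algorithm's regularizer matches the constant they use. Once this bookkeeping is done, the claim is a direct corollary, and no new technical ingredient is required beyond the rescaling argument above.
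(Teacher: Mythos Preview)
Your proposal is correct and matches the paper's treatment: the paper gives no proof at all, simply stating the result as a direct corollary of \citet[Corollary~3.2]{abernethy2015fighting}, with the rescaling from $[0,1]$ to $[0,K]$ being the implicit adaptation. Your three-step reduction (rescale, apply the cited bound, unscale) is exactly the intended argument, and your observation that the $4K/\eta_t$ regularizer coefficient makes the algorithm scale-invariant is the only thing one needs to check.
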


\begin{Thm}[{Corollary of \citet[Theorem~1]{zimmert2021tsallis}}]\label{thm:anytime-tsallis-inf}
	If $\eta_t=2/\sqrt t~\forall~t\in[T]$ then Algorithm~\ref{alg:tsallis-inf-original} has regret $\E\sum_{t=1}^T\ell_t(\*g_{[i_t]})-\min_{i\in[d]}\sum_{t=1}^T\ell_t(\*g_{[i]})\le4K\sqrt{dT}+1$.
\end{Thm}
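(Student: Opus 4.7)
The plan is to obtain this bound by a direct rescaling reduction to \citet[Theorem~1]{zimmert2021tsallis}, which states the analogous anytime guarantee for standard Tsallis-INF on losses in $[0,1]$. The key observation is that the probability update in Algorithm~\ref{alg:tsallis-inf-original}, namely $\*p \gets \argmin_{\*p\in\triangle_d}\langle\*k,\*p\rangle - (4K/\eta_t)\sum_i\sqrt{\*p_{[i]}}$, is invariant under simultaneously dividing $\*k$ and the regularization coefficient $4K/\eta_t$ by any positive constant. Dividing by $K$ turns the objective into the canonical $\tfrac12$-Tsallis potential with coefficient $4/\eta_t$, which is precisely the form covered by the adversarial analysis of \citet{zimmert2021tsallis}.

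Concretely, I would define the rescaled losses $\tilde\ell_t = \ell_t / K \in [0,1]$ and the rescaled cumulative-loss vector $\tilde{\*k} = \*k / K$, then induct on $t$ to show that, under a coupling of the action sampling, running Algorithm~\ref{alg:tsallis-inf-original} on $\{\ell_t\}$ with range parameter $K$ produces the same sequence of distributions $\*p_t$ and hence the same plays $i_t$ as running standard Tsallis-INF on $\{\tilde\ell_t\}$ with step-size $\eta_t = 2/\sqrt t$. The importance-weighted update $\*k_{[i_t]}/\*p_{[i_t]}$ scales linearly in $K$, so the inductive equivalence is preserved step by step.

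Given the equivalence, \citet[Theorem~1]{zimmert2021tsallis} directly bounds the regret on the rescaled losses by $4\sqrt{dT} + c$ for a small absolute constant $c$ arising from a boundary term in their potential-based analysis. Multiplying through by $K$ and relabelling gives the desired $4K\sqrt{dT} + 1$ bound on the original losses (the exact additive constant is inherited from the Zimmert--Seldin statement; any residual slack can be absorbed into the trivial $\le K$ bound on first-step regret).

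The only real obstacle is a bookkeeping one: matching the regularizer normalization in Algorithm~\ref{alg:tsallis-inf-original} (with the explicit $4K/\eta_t$ coefficient) to the particular parametrization used by \citet{zimmert2021tsallis}, since factors of $2$ and $4$ are absorbed differently across the bandit literature. Once this is aligned, the proof is essentially a one-line invocation of existing theory, mirroring the relationship between Theorem~\ref{thm:tsallis-inf} and \citet[Corollary~3.2]{abernethy2015fighting}.
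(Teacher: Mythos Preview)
Your approach is correct and is exactly what the paper intends: the theorem is stated without proof as a direct corollary of \citet[Theorem~1]{zimmert2021tsallis}, and the rescaling-by-$K$ reduction you describe is the natural (and only) way to read off that corollary from Algorithm~\ref{alg:tsallis-inf-original}. The one loose end you flag yourself---that multiplying the additive constant through by $K$ does not literally yield ``$+1$''---is indeed just a bookkeeping artifact of how the constant is recorded in the paper's statement; it does not affect any downstream use, since the theorem is only invoked inside Theorem~\ref{thm:contextual-semi-lipschitz} where the additive term is dominated.
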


We now define a generalization of the Lipschitzness condition that trivially generalizes regular $L$-Lipschitz functions, as well as the notion of {\em one-sided Lipschitz} functions studied in the stochastic setting by~\citet{dutting2023optimal}.
\begin{Def}
	Given a constant $L\ge0$ and a point $z\in[a,b]$, we say a function $f:[a,b]\mapsto\R$ is {\bf $(L,z)$-semi-Lipschitz} if $f(x)-f(y)\le L|x-y|~\forall~x,y$ s.t. $|x-z|\le|y-z|$.
\end{Def}

We now show that Tsallis-INF with bandit access to $\ell_t$ on a discretization of $[a,b]$ attains $\BigO(T^{2/3})$ regret w.r.t. any fixed $x\in[a,b]$ evaluated by any comparator sequence of semi-Lipschitz upper bounds $u_t$.
Note that guarantees for the standard comparator can be recovered by just setting $\ell_t=u_t~\forall~t\in[T]$, and that the rate is optimal by~\citet[Theorem~4.2]{kleinberg2004nearly}.
\begin{Thm}\label{thm:semi-lipschitz}
	If $u_t\ge\ell_t$ is $(L_t,z)$-semi-Lipschitz $\forall~t\in[T]$ then Algorithm~\ref{alg:tsallis-inf-original} using action space $\*g\in[a,b]^d$ s.t. $\*g_{[i]}=a+\frac{b-a}di~\forall~i\in[d-1]$ and $\*g_{[d]}=z$ has regret
	\begin{equation}
		\E\sum_{t=1}^T\ell_t(\*g_{[i_t]})-\min_{x\in[a,b]}\sum_{t=1}^Tu_t(x)
		\le 2K\sqrt{2dT}+\frac{b-a}d\sum_{t=1}^TL_t
	\end{equation}
	Setting $d=\sqrt[3]{\frac{(b-a)^2\bar L^2T}{2K^2}}$ for $\bar L=\frac1T\sum_{t=1}^TL_t$ yields the bound $3\sqrt[3]{2(b-a)\bar LK^2T^2}$.
\end{Thm}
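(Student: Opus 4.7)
The plan is to combine the grid-based Tsallis-INF regret bound of Theorem~\ref{thm:tsallis-inf} with a discretization argument that leverages the semi-Lipschitz property of the upper bounds $u_t$. Applying Theorem~\ref{thm:tsallis-inf} directly (with $\eta_t=1/\sqrt T$) against the discrete action set $\*g$ gives
\[
\E\sum_{t=1}^T\ell_t(\*g_{[i_t]})\le\min_{i\in[d]}\sum_{t=1}^T\ell_t(\*g_{[i]})+2K\sqrt{2dT},
\]
and the pointwise inequality $\ell_t\le u_t$ lets us replace the grid minimum on the right by $\min_{i\in[d]}\sum_{t=1}^T u_t(\*g_{[i]})$. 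It then remains to compare this to the continuum minimum $\min_{x\in[a,b]}\sum_{t=1}^T u_t(x)$.

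The key step is a discretization lemma: for every $x\in[a,b]$ there is an index $i(x)\in[d]$ (depending only on $x$, not on $t$) with $|\*g_{[i(x)]}-x|\le (b-a)/d$ and $|\*g_{[i(x)]}-z|\le|x-z|$. If $|x-z|\le (b-a)/d$, then $i(x)=d$ works since $\*g_{[d]}=z$ satisfies $|z-z|=0$ and $|z-x|\le(b-a)/d$. Otherwise, the uniform sub-grid $\{\*g_{[i]}\}_{i=1}^{d-1}$ has spacing $(b-a)/d$, and assuming WLOG $x>z$, the largest such grid point not exceeding $x$ lies within $(b-a)/d$ of $x$; the hypothesis $x-z>(b-a)/d$ then forces this point into $[z,x]$, so it is no farther from $z$ than $x$ is. The symmetric case $x<z$ is analogous. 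Applying $(L_t,z)$-semi-Lipschitzness with $\*g_{[i(x)]}$ as the ``closer'' point and $x$ as the ``farther'' point yields $u_t(\*g_{[i(x)]})-u_t(x)\le L_t|\*g_{[i(x)]}-x|\le L_t(b-a)/d$. Summing over $t$ and taking the infimum over $x$ gives
\[
\min_{i\in[d]}\sum_{t=1}^T u_t(\*g_{[i]})\le\min_{x\in[a,b]}\sum_{t=1}^T u_t(x)+\frac{b-a}{d}\sum_{t=1}^T L_t,
\]
which combined with the Tsallis-INF bound produces the first claimed inequality.

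For the simplified rate, I optimize $d$ over $2K\sqrt{2dT}+(b-a)T\bar L/d$. Zeroing the derivative yields $K\sqrt{2T}\,d^{3/2}=(b-a)T\bar L$, i.e. $d=\sqrt[3]{(b-a)^2\bar L^2 T/(2K^2)}$, at which the two terms are in a $2{:}1$ ratio and sum to $3K\sqrt{2dT}=3\sqrt[3]{2(b-a)\bar LK^2T^2}$, matching the statement. The main obstacle is the grid-construction step: a purely uniform grid cannot handle points $x$ arbitrarily close to $z$, because on the ``wrong'' side of $z$ no uniform grid point lies between $x$ and $z$ and hence none can serve as a semi-Lipschitz witness. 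Adjoining the single anchor point $\*g_{[d]}=z$ closes precisely this gap at essentially no cost to the regret bound, and the argument genuinely fails without it.
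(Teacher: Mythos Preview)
Your proof is correct and follows essentially the same approach as the paper's: apply the Tsallis-INF grid regret bound, pass from $\ell_t$ to $u_t$, then round any $x\in[a,b]$ to a grid point that lies between $x$ and $z$ (within distance $(b-a)/d$) and invoke semi-Lipschitzness. The paper compresses your case analysis into a single ``round toward $z$'' operator $\lceil\cdot\rfloor_{\*g}$ without spelling out why such a point always exists, so your explicit construction (and the remark on why the anchor $\*g_{[d]}=z$ is needed) actually fills in detail the paper omits.
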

\begin{proof}
	Let $\lceil\cdot\rfloor_{\*g}$ denote rounding to the closest element of $\*g$ in the direction of $z$.
	Then for $x\in[a,b]$ we have $|\lceil x\rfloor_{\*g}-z|\le|x-z|$ and $|\lceil x\rfloor_{\*g}-x|\le\frac{b-a}d$, so applying Theorem~\ref{thm:tsallis-inf} and this fact yields\looseness-1
	\begin{align}
		\begin{split}
			\E\sum_{t=1}^T\ell_t(\*g_{[i_t]})
			\le2K\sqrt{2dT}+\min_{i\in[d]}\sum_{t=1}^T\ell_t(\*g_{[i]})
			&\le2K\sqrt{2dT}+\min_{i\in[d]}\sum_{t=1}^Tu_t(\*g_{[i]})\\
			&=2K\sqrt{2dT}+\min_{x\in[a,b]}\sum_{t=1}^Tu_t(\lceil x\rfloor_\*g)\\
			&\le2K\sqrt{2dT}+\frac{b-a}d\sum_{t=1}^TL_t+\min_{x\in[a,b]}\sum_{t=1}^Tu_t(x)
		\end{split}
	\end{align}
\end{proof}

\begin{algorithm}[!t]
	\DontPrintSemicolon
	\KwIn{loss sequence $\{\ell_t:[a,b]\mapsto[0,K]\}_{t=1}^T$, context sequence $\{c_t\}_{t=1}^T\subset[c,c+C]$,\\ action set $\*g\in[a,b]^d$, discretization $\*h\in[c,c+C]^m$}
	\For{$j=1,\dots,m$}{
		$\A_j=\texttt{Tsallis-INF}(\*g,\{\frac2{\sqrt t}\}_{t=1}^T)$ \tcp*{start $m$ instances of Algorithm~\ref{alg:tsallis-inf-original}}
	}
	\For{$t=1,\dots,T$}{
		$j_t=\min\argmin_{j\in[m]}|\*h_{[j]}-c_t|$ \tcp*{pick element of $\*h$ closest to $c_t$}
		$i_t\gets\A_{j_t}$ \tcp*{get action from $j_t$th instance of Algorithm~\ref{alg:tsallis-inf-original}}
		$\ell_t(\*g_{[i_t]})\to\A_{j_t}$ \tcp*{pass loss to $j_t$th instance of Algorithm~\ref{alg:tsallis-inf-original}}
	}
	\caption{\label{alg:contextual-tsallis-inf}
		Contextual bandit algorithm using instances of Tsallis-INF over a grid of contexts.\looseness-1
	}
\end{algorithm}

For contextual bandits, we restrict to $(L_t,b)$-semi-Lipschitz functions and $L_f$-Lipschitz policies, obtaining $\BigO(T^{3/4})$ regret;
this rate matches known upper and lower bounds for the case where losses are Lipschitz in both actions and contexts~\citep[Theorem~1]{lu2010contextual}, although this does not imply optimality of our result.
\begin{Thm}\label{thm:contextual-semi-lipschitz}
	If $u_t\ge\ell_t$ is $(L_t,b)$-semi-Lipschitz and $c_t\in[c,c+C]~\forall~t\in[T]$ then Algorithm~\ref{alg:contextual-tsallis-inf} using action space $\*g_{[i]}=a+\frac{b-a}di$ and $\*h_{[j]}=c+\frac Cm(j-\frac12)$ as the grid of contexts has regret w.r.t. any $L_f$-Lipschitz policy $f:[c,c+C]\mapsto[a,b]$ of
	\begin{equation}
		\E\sum_{t=1}^T\ell_t(\*g_{[i_t]})-\sum_{t=1}^Tu_t(\pi(c_t))
		\le m+4K\sqrt{dmT}+\left(\frac{CL_f}m+\frac{b-a}d\right)\sum_{t=1}^TL_t
	\end{equation}
	Setting $d=\sqrt[4]{\frac{(b-a)^3\bar L^2T}{4CL_fK^2}}$, $m=\sqrt[4]{\frac{C^3L_f^3\bar L^2T}{4(b-a)K^2}}$ yields regret $4\sqrt[4]{4K^2\bar L^2(b-a)CL_fT^3}+\sqrt[4]{\frac{C^3L_f^3\bar L^2T}{4(b-a)K^2}}$.
\end{Thm}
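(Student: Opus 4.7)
The plan is to reduce the contextual regret to a sum of per-bin non-contextual regrets handled by Algorithm~\ref{alg:tsallis-inf-original}, then absorb the resulting gap to the policy $f$ via a careful choice of comparator action in each bin that exploits the semi-Lipschitz structure at the endpoint $b$ together with the Lipschitzness of $f$.

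\textbf{Step 1 (bandit regret, bin by bin).} Let $T_j = |\{t : j_t = j\}|$, so $\sum_{j=1}^m T_j = T$. The instance $\A_j$ of Algorithm~\ref{alg:tsallis-inf-original} running on bin $j$ sees exactly the losses $\{\ell_t\}_{t : j_t = j}$ with the anytime step-size $\eta_s = 2/\sqrt{s}$ (indexed locally within the bin), so Theorem~\ref{thm:anytime-tsallis-inf} yields
\[
\E \sum_{t : j_t = j} \ell_t(\*g_{[i_t]}) \le \min_{i \in [d]} \sum_{t : j_t = j} \ell_t(\*g_{[i]}) + 4K\sqrt{dT_j} + 1.
\]
Summing over $j$ and applying Cauchy--Schwarz, $\sum_j \sqrt{T_j} \le \sqrt{mT}$, gives an aggregate bandit regret of at most $4K\sqrt{dmT} + m$ against the best per-bin fixed-action comparator.

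\textbf{Step 2 (semi-Lipschitz comparator slack).} For each bin $j$ I choose the comparator index $i_j^\star$ so that $\*g_{[i_j^\star]}$ is the smallest grid element with $\*g_{[i_j^\star]} \ge f(\*h_{[j]}) + CL_f/(2m)$, falling back to $\*g_{[d]} = b$ if no smaller grid point qualifies (this is well-defined since $f \le b$). Every $c_t$ in bin $j$ satisfies $|c_t - \*h_{[j]}| \le C/(2m)$, so the $L_f$-Lipschitzness of $f$ gives both $f(c_t) \le f(\*h_{[j]}) + CL_f/(2m) \le \*g_{[i_j^\star]}$ and $f(c_t) \ge f(\*h_{[j]}) - CL_f/(2m)$. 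The first inequality implies $|\*g_{[i_j^\star]} - b| \le |f(c_t) - b|$, activating the $(L_t,b)$-semi-Lipschitz property of $u_t$, while combining the grid-spacing bound $\*g_{[i_j^\star]} \le f(\*h_{[j]}) + CL_f/(2m) + (b-a)/d$ with the lower bound on $f(c_t)$ yields
\[
u_t(\*g_{[i_j^\star]}) - u_t(f(c_t)) \le L_t\!\left(\frac{b-a}{d} + \frac{CL_f}{m}\right).
\]
Since $\ell_t \le u_t$, bounding the per-bin minimum in Step~1 by the value at $i_j^\star$ and summing across $t$ and $j$ turns this into the claimed comparator slack $((b-a)/d + CL_f/m)\sum_t L_t$.

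\textbf{Step 3 (combine and tune).} Adding the Step~1 bandit regret and the Step~2 slack gives exactly the displayed inequality in the theorem. The stated $d$ and $m$ then arise from a routine balancing of the three $T$-dependent terms $4K\sqrt{dmT}$, $(b-a)\bar{L}T/d$, and $CL_f \bar{L}T/m$ (the additive $m$ is lower order), producing the $\sqrt[4]{T^3}$-order regret.

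I expect the main obstacle to be Step~2: arguing that a single fixed action serves as a comparator over an entire bin when semi-Lipschitzness only bounds deviations in one direction from $b$. The ``round $f(\*h_{[j]})$ upward by a $CL_f/(2m)$ margin'' trick is what makes this work; without it, a context $c_t$ with $f(c_t)$ slightly exceeding $f(\*h_{[j]})$ could push $f(c_t)$ past the rounded grid point toward $b$, invalidating the side condition of the semi-Lipschitz inequality and forcing a much weaker full-Lipschitz argument that the losses $u_t$ do not satisfy.
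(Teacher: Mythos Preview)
Your proposal is correct and follows essentially the same argument as the paper's proof: both decompose into per-bin anytime Tsallis-INF regret (Theorem~\ref{thm:anytime-tsallis-inf}) summed via Cauchy--Schwarz/Jensen, and both use the same ``round $f(\*h_{[j]})$ upward by a $CL_f/(2m)$ margin before snapping to the grid'' device to guarantee the side condition $f(c_t)\le\*g_{[i_j^\star]}$ needed for the $(L_t,b)$-semi-Lipschitz bound. Your treatment of the fallback case $\*g_{[i_j^\star]}=b$ is slightly more explicit than the paper's, but the structure and ingredients are the same.
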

\begin{proof}
	Define $\lceil\cdot\rfloor_{\*h}$ to be the operation of rounding to the closest element of $\*h$, breaking ties arbitrarily, and set $[T]_j=\{t\in[T]:\lceil c_t\rfloor_{\*h}=\*h_{[j]}\}$.
	Furthermore, define $\lceil x\rceil_{\*g}$ to be the smallest element $\*g_{[i]}$ in $\*g$ s.t. $x+\frac{CL_f}{2m}\le\*g_{[i]}$ (or $\max_{i\in[d]}\*g_{[i]}$ if such an element does not exist).
	\begin{align}
		\begin{split}
			\E\sum_{t=1}^T\ell_t(\*g_{[i_t]})
			&=\E\sum_{j=1}^m\sum_{t\in[T]_j}\ell_t(\*g_{[i_t]})-\min_{i\in[d]}\sum_{t\in[T]_j}\ell_t(\*g_{[i]})+\min_{i\in[d]}\sum_{t\in[T]_j}\ell_t(\*g_{[i]})\\
			&\le m+4\sum_{j=1}^mK\sqrt{d|[T]_j|}+\min_{i\in[d]}\sum_{t\in[T]_j}\ell_t(\*g_{[i]})\\
			&\le m+4K\sqrt{dmT}+\sum_{j=1}^m\min_{i\in[d]}\sum_{t\in[T]_j}u_t(\*g_{[i]})\\
			&\le m+4K\sqrt{dmT}+\sum_{t=1}^Tu_t(\lceil f(\lceil c_t\rfloor_\*h)\rceil_\*g)
		\end{split}
	\end{align}
	where the first inequality follows by Theorem~\ref{thm:anytime-tsallis-inf}, the second applies Jensen's inequality to the left term and $u_t\ge\ell_t$ on the right, and the last uses optimality of each $i$ for each $j$.
	Now since $f$ is $L_f$-Lipschitz we have by definition of $\lceil\cdot\rfloor_{\*h}$ that $|f(c_t)-f(\lceil c_t\rfloor_{\*h})|\le\frac{CL_f}{2m}$.
	This in turn implies that $f(c_t)\le\lceil f(\lceil c_t\rfloor_{\*h})\rceil_{\*g}\le f(c_t)+\frac{CL_f}m+\frac{b-a}d$ by definition of $\*g$ and $\lceil\cdot\rceil_{\*g}$.
	Since $u_t$ is $(L_t,b)$-semi-Lipschitz, the result follows.
\end{proof}

\newpage
\section{Chebyshev regression for contextual bandits}\label{sec:chebyshev}

\subsection{Preliminaries}

We first state a Lipschitz approximation result that is standard but difficult-to-find formally.
For all $j\in\Z_{\ge0}$ we will use $P_j(x)=\cos(j\arccos(x))$ to denote the $j$th Chebyshev polynomial of the first kind.\looseness-1

\begin{Thm}\label{thm:chebyshev}
	Let $f:[\pm1]\mapsto[\pm K]$ be a $K$-bounded, $L$-Lipschitz function.
	Then for each integer $m\ge0$ there exists $\theta\in\R^{m+1}$ satisfying the following properties:
	\begin{enumerate}
		\item $|\theta_{[0]}|\le K$ and $|\theta_{[j]}|\le2L/j~\forall~j\in[m]$
		\item $\max_{x\in[\pm1]}\left|f(x)-\sum_{j=0}^m\theta_{[j]}P_j(x)\right|\le\frac{\pi+\frac2\pi\log(2m+1)}{m+1}L$
	\end{enumerate}
\end{Thm}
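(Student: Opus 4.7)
The plan is to take $\theta$ to be the first $m+1$ coefficients of the Chebyshev expansion of $f$. Under the substitution $x=\cos\phi$, the orthogonality of the $P_j$ with respect to the weight $1/\sqrt{1-x^2}$ becomes the standard orthogonality of $\{\cos(j\phi)\}$ on $[0,\pi]$, so choosing
$\theta_{[0]}=\frac{1}{\pi}\int_0^\pi f(\cos\phi)\,d\phi$ and $\theta_{[j]}=\frac{2}{\pi}\int_0^\pi f(\cos\phi)\cos(j\phi)\,d\phi$ for $j\ge 1$
identifies $\sum_{j=0}^m\theta_{[j]}P_j(x)$ with the $m$-th Fourier cosine partial sum $S_m g$ of the auxiliary function $g(\phi):=f(\cos\phi)$. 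The crucial observation is that $g$ inherits Lipschitzness from $f$ at the same constant $L$, since $|\cos\phi_1-\cos\phi_2|\le|\phi_1-\phi_2|$, and $g$ extends to an even, $2\pi$-periodic, $L$-Lipschitz function of $\phi\in\R$.

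For part 1, the bound $|\theta_{[0]}|\le K$ is immediate from $|f|\le K$. For $j\ge 1$, I would integrate by parts in the defining integral: writing $\cos(j\phi)=\frac{1}{j}\bigl(\sin(j\phi)\bigr)'$ gives
\begin{equation}
\theta_{[j]}=\frac{2}{\pi j}\Bigl[g(\phi)\sin(j\phi)\Bigr]_0^\pi-\frac{2}{\pi j}\int_0^\pi g'(\phi)\sin(j\phi)\,d\phi,
\end{equation}
where the boundary term vanishes because $\sin(j\pi)=\sin 0=0$, and $g'$ exists a.e.\ and is bounded in absolute value by $L$ since $g$ is $L$-Lipschitz (so the absolutely continuous version of integration by parts applies). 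Thus $|\theta_{[j]}|\le\frac{2}{\pi j}\cdot\pi L=\frac{2L}{j}$.

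For part 2, I would use the classical Lebesgue-constant decomposition. For any trigonometric polynomial $T_m$ of degree $\le m$ on $[-\pi,\pi]$, $S_m T_m=T_m$, so
\begin{equation}
\|g-S_m g\|_\infty \le (1+\Lambda_m)\,\inf_{T_m}\|g-T_m\|_\infty=(1+\Lambda_m)E_m(g),
\end{equation}
where $\Lambda_m$ is the Lebesgue constant of the $m$-th Fourier partial sum and $E_m(g)$ is the best trigonometric approximation error. I would then invoke the explicit bound $\Lambda_m\le\frac{2}{\pi}\log(2m+1)+c$ and Jackson's theorem for periodic Lipschitz functions, $E_m(g)\le c' L/(m+1)$, and combine. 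Since the map $x\leftrightarrow\phi$ is a bijection $[-1,1]\leftrightarrow[0,\pi]$ and $\sum_j\theta_{[j]}P_j(\cos\phi)=S_m g(\phi)$, the uniform error in $x$ equals the uniform error in $\phi$.

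The main obstacle is matching the stated constant $\pi+\frac{2}{\pi}\log(2m+1)$ in the numerator exactly, rather than merely getting a bound of the same asymptotic order. This requires the sharp Lebesgue-constant estimate $\Lambda_m\le\frac{2}{\pi}\log(2m+1)$ (with no extra additive constant absorbed into $\pi$) together with a Jackson-type bound of the form $E_m(g)\le\frac{\pi L}{m+1}$ applied so that the two terms combine additively as $(1\cdot\pi+\frac{2}{\pi}\log(2m+1))\cdot\frac{L}{m+1}$. I would locate the quantitative forms of these two classical inequalities in standard references (e.g.\ Cheney's approximation-theory text for the Lebesgue constant and Rivlin for Jackson) and, if a looseness appears, tighten by directly estimating $g-S_m g$ through the Dirichlet-kernel representation against the optimal trigonometric polynomial of degree $m$ rather than using the worst-case Lebesgue inequality. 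All remaining verifications (evenness of $g$, applicability of integration by parts to an absolutely continuous function, and the equality $\theta_{[0]}=a_0$ in whichever normalization is chosen) are routine.
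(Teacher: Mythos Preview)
Your proposal is correct and follows essentially the same route as the paper: take the Chebyshev coefficients of $f$, bound them via integration by parts (which is exactly what \citet[Theorem~4.2]{trefethen2008gauss} records), and for the uniform error combine a Lebesgue-constant bound with Jackson's theorem. The paper works directly in the $x$-variable rather than passing through $g(\phi)=f(\cos\phi)$, but the substitution is a bijection so this is cosmetic.

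On the constants you flag as the main obstacle: the paper pins them down by citing \citet[Theorem~4.4]{trefethen2008gauss}, which gives the comparison factor as $2+\tfrac{4}{\pi^2}\log(2m+1)$ (i.e.\ $1+\Lambda_m$ with $\Lambda_m\le1+\tfrac{4}{\pi^2}\log(2m+1)$, not the $\tfrac{2}{\pi}\log(2m+1)$ you guessed), together with the algebraic Jackson bound $\max_{[-1,1]}|f-p_m^\ast|\le\tfrac{L\pi}{2(m+1)}$ from \citet[p.~147]{cheney1982introduction}. Multiplying these two yields exactly $\tfrac{\pi+\frac{2}{\pi}\log(2m+1)}{m+1}L$, so no further tightening via the Dirichlet kernel is needed.
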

\begin{proof}
	Define $\theta_{[0]}=\frac1\pi\int_{-1}^1\frac{f(x)}{\sqrt{1-x^2}}dx$ and for each $j\in[m]$ let $\theta_{[j]}=\frac2\pi\int_{-1}^1\frac{f(x)P_j(x)}{\sqrt{1-x^2}}dx$ be the $j$th Chebyshev coefficient.
	Since $\int_{-1}^1\frac{dx}{\sqrt{1-x^2}}=\pi$ we trivially have $|\theta_{[0]}|\le K$ and by \citet[Theorem~4.2]{trefethen2008gauss} we also have
	\begin{equation}
		|\theta_{[j]}|
		\le\frac2{\pi j}\int_{-1}^1\frac{|f'(x)|}{\sqrt{1-x^2}}dx
		\le\frac{2L}{\pi j}\int_{-1}^1\frac{dx}{\sqrt{1-x^2}}
		=2L/j
	\end{equation}
	for all $j\in[m]$.
	This shows the first property.
	For the second, by \citet[Theorem~4.4]{trefethen2008gauss} we have that
	\begin{align}
		\begin{split}
			\max_{x\in[-1,1]}\left|f(x)-\sum_{j=0}^m\theta_{[j]}P_j(x)\right|
			&\le\left(2+\frac{4\log(2m+1)}{\pi^2}\right)\max_{x\in[\pm1]}|f(x)-p_m^\ast(x)|\\
			&\le\left(2+\frac{4\log(2m+1)}{\pi^2}\right)\frac{L\pi}{2(m+1)}
			=\frac{\pi+\frac2\pi\log(2m+1)}{m+1}L
		\end{split}
	\end{align}
	where $p_m^\ast$ is the (at most) $m$-degree algebraic polynomial that best approximates $f$ on $[\pm1]$ and the second inequality is Jackson's theorem~\citep[page~147]{cheney1982introduction}.
\end{proof}

\begin{Cor}\label{cor:chebyshev}
	Let $f:[a,b]\mapsto[\pm K]$ be a $K$-bounded, $L$-Lipschitz function on the interval $[a,b]$.
	Then for each integer $m\ge0$ there exists $\theta\in\R^{m+1}$ satisfying the following properties:
	\begin{enumerate}
		\item $|\theta_{[0]}|\le K$ and $|\theta_{[j]}|\le\frac{L(b-a)}j$
		\item $\max_{x\in[a,b]}\left|f(x)-\sum_{j=0}^m\theta_{[j]}P_j(\frac2{b-a}(x-a)-1)\right|\le\frac{\pi+\frac2\pi\log(2m+1)}{2(m+1)}L(b-a)$
	\end{enumerate}
\end{Cor}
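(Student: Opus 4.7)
The plan is to reduce Corollary~\ref{cor:chebyshev} to Theorem~\ref{thm:chebyshev} by an affine change of variables, so that all the hard work (the Chebyshev coefficient bound and the Jackson-type approximation bound on $[\pm 1]$) is done by the theorem and only the Lipschitz constant and normalization need to be tracked through the rescaling.

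First, I would define the auxiliary function $g:[\pm 1]\mapsto[\pm K]$ by $g(y)=f\!\left(\frac{b-a}{2}(y+1)+a\right)$, i.e., the pullback of $f$ along the affine bijection $y\mapsto\frac{b-a}{2}(y+1)+a$ between $[\pm 1]$ and $[a,b]$. By construction $g$ is $K$-bounded on $[\pm 1]$, and since the rescaling has slope $\frac{b-a}{2}$, the function $g$ is $\frac{L(b-a)}{2}$-Lipschitz. Applying Theorem~\ref{thm:chebyshev} to $g$ with Lipschitz constant $L'=\frac{L(b-a)}{2}$ yields coefficients $\theta\in\R^{m+1}$ with $|\theta_{[0]}|\le K$ and $|\theta_{[j]}|\le 2L'/j=L(b-a)/j$ for $j\in[m]$, which matches the required first property of the corollary exactly.

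Next, for the uniform approximation bound, I would substitute $y=\frac{2}{b-a}(x-a)-1$ so that $g(y)=f(x)$ and $P_j(\tfrac{2}{b-a}(x-a)-1)=P_j(y)$. Then
\begin{equation}
\max_{x\in[a,b]}\left|f(x)-\sum_{j=0}^m\theta_{[j]}P_j\!\left(\tfrac{2}{b-a}(x-a)-1\right)\right|
=\max_{y\in[\pm 1]}\left|g(y)-\sum_{j=0}^m\theta_{[j]}P_j(y)\right|,
\end{equation}
and the latter is bounded by $\frac{\pi+\frac{2}{\pi}\log(2m+1)}{m+1}L'=\frac{\pi+\frac{2}{\pi}\log(2m+1)}{2(m+1)}L(b-a)$ by the second property in Theorem~\ref{thm:chebyshev}, which is exactly the desired bound.

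There is no real obstacle here; the only thing to be careful about is the factor of two arising from the slope $\frac{b-a}{2}$ of the affine map, which appears both in the Lipschitz constant of $g$ (giving the extra $(b-a)$ factor in the coefficient bound, after the $2$ cancels with the $2$ in $2L'/j$) and in the approximation error (leaving a factor of $(b-a)/2$ out front). Everything else passes through unchanged because the Chebyshev polynomials are evaluated at the rescaled argument and the supremum norm is invariant under the bijection.
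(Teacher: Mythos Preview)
Your proposal is correct and follows exactly the same approach as the paper: define $g(y)=f\!\left(\frac{b-a}{2}(y+1)+a\right)$, observe it is $K$-bounded and $\frac{L(b-a)}{2}$-Lipschitz on $[\pm1]$, and apply Theorem~\ref{thm:chebyshev}. The paper's proof is just a two-sentence version of what you wrote; your additional detail on tracking the factor of two through both the coefficient bound and the approximation error is accurate and helpful.
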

\begin{proof}
	Define $g(x)=f(\frac{b-a}2(x+1)+a)$, so that $g:[\pm1]\mapsto[\pm K]$ is $K$-bounded and $L\frac{b-a}2$-Lipschitz.
	Applying Theorem~\ref{thm:chebyshev} yields the result.
\end{proof}

\begin{algorithm}[!t]
	\DontPrintSemicolon
	\KwIn{loss sequence $\{\ell_t:\*g\mapsto[0,1]\}_{t=1}^T$, context sequence $\{c_t\}_{t=1}^T$, learning rate $\eta>0$,\\ online regression oracle $\A$}
	\For{$t=1,\dots,T$}{
		$\*s_{[i]}\gets\A(c_t,\*g_{[i]})~\forall~i\in[d]$\tcp*{compute oracle prediction}
		$i^\ast\gets\argmin_i\*s_{[i]}$\\
		$\*p_{[i]}\gets\frac1{d+\eta(\*s_{[i]}-\*s_{[i^\ast]})}~\forall~i\ne i^\ast$\tcp*{compute action probabilities}
		$\*p_{[i^\ast]}\gets1-\sum_{i\ne i^\ast}\*p_{[i]}$\\
		sample $i_t\in[d]$ with probability $\*p_{[i_t]}$\tcp*{sample index of a grid point}
		$((c_t,\*g_{[i_t]}),\ell_t(\*g_{[i_t]}))\to\A$\tcp*{pass context, action, and loss to oracle}
	}
	\caption{\label{alg:squarecb}
		SquareCB method for contextual bandits using an online regression oracle.
	}
\end{algorithm}

We next state regret guarantees for the SquareCB algorithm of \citet{foster2020squarecb} in the non-realizable setting:

\begin{Thm}[{\citet[Theorem~5]{foster2020squarecb}}]\label{thm:squarecb}
	Suppose for any sequence of actions $a_1,\dots,a_T$ an online regression oracle $\A$ playing regressors $h_1,\dots,h_T\in\Hyp$ has regret guarantee
	\begin{equation}
		R_T\ge\sum_{t=1}^T(\ell_t(c_t,a_t)-h_t(c_t,a_t))^2-\min_{h\in\Hyp}\sum_{t=1}^T(\ell_t(c_t,a_t)-h(c_t,a_t))^2
	\end{equation}
	If all losses and regressors have range $[0,1]$ and $\exists~h\in\Hyp$ s.t. $\E\ell_t(a)=h(c_t,a)+\alpha_t(c_t,a)$ for $|\alpha_t(a)|\le\alpha$ then Algorithm~\ref{alg:squarecb} with learning rate $\eta=2\sqrt{dT/(R_T+2\alpha^2T)}$ has expected regret w.r.t the the optimal policy $f:[a,b]\mapsto\*g$ bounded as
	\begin{equation}
		\E\sum_{t=1}^T\ell_t(\*g_{[i_t]})-\sum_{t=1}^T\ell_t(h(c_t))\le2\sqrt{dTR_T}+5\alpha T\sqrt d
	\end{equation}
\end{Thm}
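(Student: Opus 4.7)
The plan is to follow the standard SquareCB analysis of \citet{foster2020squarecb}, adapting it for the $\alpha$-misspecified setting. The core tool is the inverse-gap-weighting lemma: for any round $t$, any vector of predictions $\*s\in[0,1]^d$, and any reference action $i^\dagger\in[d]$, the distribution $\*p$ constructed in Algorithm~\ref{alg:squarecb} satisfies
\begin{equation}
\sum_{i=1}^d \*p_{[i]}(y_i - y_{i^\dagger}) \;\le\; \frac{d}{\eta} + \eta \sum_{i=1}^d \*p_{[i]}(\*s_{[i]}-y_i)^2
\end{equation}
for any $\*y\in[0,1]^d$, a direct algebraic identity about inverse-gap distributions. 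I would instantiate $y_i=\E\ell_t(\*g_{[i]})$ and choose $i^\dagger$ to be the grid point matching $f(c_t)$ (or its nearest grid neighbor).

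Next, I would decompose the per-round conditional regret against $f$ using the above lemma, write $\E\ell_t(\*g_{[i]})=h^\ast(c_t,\*g_{[i]})+\alpha_t(c_t,\*g_{[i]})$ for the $\alpha$-approximately realizable $h^\ast\in\Hyp$, and then bound $(\*s_{[i]}-\E\ell_t(\*g_{[i]}))^2 \le 2(\*s_{[i]}-h^\ast(c_t,\*g_{[i]}))^2 + 2\alpha^2$. Summing over $t\in[T]$, taking expectations, and pushing the sampling expectation inside via $\*p_{[i_t]}$ converts $\sum_t \sum_i \*p_{t,[i]}(\*s_{t,[i]}-h^\ast(c_t,\*g_{[i]}))^2$ into an expectation over realized plays $\sum_t(\*s_{t,[i_t]}-h^\ast(c_t,\*g_{[i_t]}))^2$. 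This is exactly the squared error that the regression oracle controls: because $h^\ast$ achieves squared loss at most $\alpha^2 T$ on the realized sequence, the oracle's regret bound $R_T$ gives $\E\sum_t(\*s_{t,[i_t]}-\ell_t(\*g_{[i_t]}))^2 \le \alpha^2 T + R_T$, and a further $(a-b)^2\le 2(a-c)^2+2(b-c)^2$ step converts to the $h^\ast$-centered version.

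Combining the two inequalities yields an expected regret bound of the form
\begin{equation}
\E\sum_{t=1}^T\ell_t(\*g_{[i_t]}) - \sum_{t=1}^T\ell_t(f(c_t)) \;\le\; \frac{dT}{\eta} + \eta\bigl(R_T + 2\alpha^2 T\bigr) + O(\alpha T),
\end{equation}
up to constants, where the additive $O(\alpha T)$ piece accounts for the $|\alpha_t|\le\alpha$ slack when passing between $h^\ast$ and $\E\ell_t$ at both the played and comparator actions. Optimizing $\eta=2\sqrt{dT/(R_T+2\alpha^2T)}$ balances the first two terms at $2\sqrt{dT(R_T+2\alpha^2T)}\le 2\sqrt{dTR_T}+2\sqrt{2}\,\alpha\sqrt{dT^2}$, which, absorbed with the $O(\alpha T)$ slack and using $\sqrt{d}\ge 1$, yields the advertised $2\sqrt{dTR_T}+5\alpha T\sqrt d$.

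The main obstacle is the careful bookkeeping around the misspecification: one must translate between squared loss against the noisy realization $\ell_t(\*g_{[i_t]})$, the conditional mean $\E\ell_t(\*g_{[i]})$, and the reference $h^\ast(c_t,\*g_{[i]})$ without accumulating dimension-dependent factors, and one must ensure the inverse-gap lemma is applied with the correct reference action (the grid action chosen by the comparator policy $f$, rather than the in-hindsight best), so that the ``learner vs.\ comparator'' regret---not merely ``learner vs.\ per-round minimum''---is what appears on the left. Every other step is a direct invocation of the inverse-gap identity, the oracle's definition, and Cauchy--Schwarz to balance $\eta$.
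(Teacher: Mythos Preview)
The paper does not give its own proof of this statement; it is quoted directly from \citet[Theorem~5]{foster2020squarecb} without argument. Your sketch follows the standard SquareCB analysis from that reference, and the overall architecture---apply the inverse-gap-weighting inequality per round, then control the accumulated squared prediction error via the oracle's regret---is the right one.

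There is, however, a real gap in the step where you pass from the oracle regret to the $h^\ast$-centered squared error. You assert that ``$h^\ast$ achieves squared loss at most $\alpha^2T$ on the realized sequence,'' i.e.\ $\E\sum_t(h^\ast(c_t,\*g_{[i_t]})-\ell_t(\*g_{[i_t]}))^2\le\alpha^2T$. This is false when the losses are random: the left side picks up $\sum_t\V[\ell_t(\*g_{[i_t]})]$, which can be $\Theta(T)$ since $\ell_t\in[0,1]$. Your subsequent $(a-b)^2\le2(a-c)^2+2(b-c)^2$ step inherits the same problem, so following your chain literally yields $\E\sum_t(\*s_{t,[i_t]}-h^\ast)^2\le2R_T+\Theta(T)$; once multiplied by $\eta$ and optimized this produces a $\Theta(T\sqrt d)$ term---linear regret.

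The missing ingredient (which you correctly flag as the ``main obstacle'' but do not actually supply) is the martingale-difference identity that makes the variance cancel. Conditioning on the history through the choice of $i_t$ but not on $\ell_t$,
\[
\E\bigl[(\*s_{t,[i_t]}-\ell_t)^2-(h^\ast-\ell_t)^2\,\big|\,\textup{history}\bigr]=(\*s_{t,[i_t]}-h^\ast)^2-2\alpha_t(\*s_{t,[i_t]}-h^\ast),
\]
so summing, taking expectations, and using $\E\sum_t[(\*s_{t,[i_t]}-\ell_t)^2-(h^\ast-\ell_t)^2]\le R_T$ together with $|\*s_{t,[i_t]}-h^\ast|\le1$ gives $\E\sum_t(\*s_{t,[i_t]}-h^\ast)^2\le R_T+2\alpha T$ with no spurious variance term. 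With this correction the remainder of your argument goes through and recovers the stated bound.
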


\newpage

SquareCB requires an online regression oracle to implement, for which we will use the Follow-the-Leader scheme.
It has the following guarantee for squared losses:

\begin{Thm}[{Corollary of~\citet[Theorem~5]{hazan2007logarithmic}}]\label{thm:ftl}
	Consider the follow-the-leader algorithm, which sequentially sees feature-target pairs $(\*x_1,y_1),\cdots,(\*x_T,y_T)\in\X\times[0,1]$ for some subset $\X\subset[0,1]^n$ and at each step sets $\theta_{t+1}=\argmin_{\theta\in\Theta}\sum_{t=1}^T(\langle\*x_t,\theta\rangle-y_t)^2$ for some subset $\Theta\subset\R^n$.
	This algorithm has regret
	\begin{equation}
		\sum_{t=1}^T(\langle\*x_t,\theta_t\rangle-y_t)^2-\min_{\theta\in\Theta}(\langle\*x_t,\theta\rangle-y_t)^2
		\le4B^2n\left(1+\log\frac{XDT}{2B}\right)
	\end{equation}
	for $D_\Theta$ the diameter $\max_{\theta,\theta'}\|\theta-\theta'\|_2$ of $\Theta$, $X=\max_{t\in[T]}\|\*x_t\|_2$, and $B=\max_{t\in[T],\theta\in\Theta}|\langle\*x_t,\theta\rangle|$.
\end{Thm}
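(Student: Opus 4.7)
My plan is to derive the claim as a corollary of the logarithmic regret guarantee for Online Newton Step (ONS) on exp-concave losses by establishing (i)~the squared losses $\ell_t(\theta) = (\langle\*x_t,\theta\rangle - y_t)^2$ are $\alpha$-exp-concave on $\Theta$ with $\alpha = \Theta(1/B^2)$, and (ii)~Follow-the-Leader on squared losses coincides with ONS preconditioned by the cumulative Hessian, so that Hazan's bound transfers.

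For (i), I would compute $\nabla \ell_t(\theta) = 2(\langle \*x_t,\theta\rangle - y_t) \*x_t$ and $\nabla^2 \ell_t(\theta) = 2 \*x_t \*x_t^T$, and verify $\nabla^2 \ell_t \succeq \alpha (\nabla \ell_t)(\nabla \ell_t)^T$, which reduces to $\alpha \le 1/(2(\langle \*x_t,\theta\rangle - y_t)^2)$. Using $|\langle \*x_t,\theta\rangle| \le B$ and $y_t \in [0,1]$, the residual is at most $B+1 \le 2B$ (WLOG $B \ge 1$), giving $\alpha = 1/(8B^2)$. For (ii), the unconstrained FTL iterate $\theta_{t+1} = \argmin_\theta \sum_{s \le t} (\langle \*x_s,\theta\rangle - y_s)^2$ satisfies the normal equations $(\sum_{s \le t} \*x_s \*x_s^T)\theta_{t+1} = \sum_{s \le t} y_s \*x_s$, which is a single full Newton step using the cumulative Hessian $A_t = \frac12 \sum_{s \le t} \nabla^2 \ell_s$ as preconditioner; projection onto $\Theta$ in the $A_t$-norm yields the constrained case, matching the ONS projection. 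Together (i) and (ii) put FTL inside the scope of Hazan's ONS analysis.

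Substituting $\alpha = \Theta(1/B^2)$, the gradient bound $\|\nabla \ell_t\|_2 \le 4BX$ (from the residual bound combined with $\|\*x_t\|_2 \le X$), and diameter $D$ into Hazan's ONS regret bound $O((1/\alpha) n \log T)$ and collecting constants yields the stated $4B^2 n(1 + \log(XDT/(2B)))$ form. \textbf{The main obstacle} is faithful constant tracking: the exact coefficient $4B^2 n$ and the argument $XDT/(2B)$ inside the logarithm must be extracted carefully from Hazan's statement, which involves $\alpha$, the gradient bound $G$, and diameter $D$ in a specific combination that happens to collapse to this clean form. A secondary subtlety is that the normal-equations derivation above implicitly assumes the unconstrained minimizer lies in $\Theta$; the standard fix---projecting in the Hessian-induced norm---coincides with the ONS projection and leaves the regret bound unchanged.
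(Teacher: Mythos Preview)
The paper does not supply a standalone proof; it invokes \citet[Theorem~5]{hazan2007logarithmic} directly. Your exp-concavity calculation in step~(i) is correct and is indeed the ingredient that is needed. The gap is in step~(ii): Follow-the-Leader is \emph{not} Online Newton Step, even for squared losses. Standard ONS preconditions by the cumulative \emph{gradient outer product} $A_t=\sum_{s\le t}\nabla\ell_s(\theta_s)\nabla\ell_s(\theta_s)^\top$, whereas the Newton-style rewriting of FTL that you derive uses the cumulative \emph{Hessian} $\sum_{s\le t}\nabla^2\ell_s=2\sum_{s\le t}\*x_s\*x_s^\top$. For squared losses these differ term-by-term by the data-dependent scalar $4(\langle\*x_s,\theta_s\rangle-y_s)^2$, so they are genuinely different algorithms and Hazan's ONS bound does not transfer by mere identification; you would have to redo the potential-function analysis for your Hessian-preconditioned variant, which is essentially a separate proof.

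The clean route---and presumably what the paper intends---is via \emph{Follow-the-Approximate-Leader} (FTAL) rather than ONS. FTAL replaces each $\ell_s$ by its second-order Taylor expansion about the play point $\theta_s$ and then runs FTL on those quadratic surrogates; Hazan et al.'s Theorem~5 bounds its regret for $\alpha$-exp-concave losses in terms of $n$, $\alpha$, the gradient norm, and the diameter. Since the losses here are already quadratic, the Taylor expansion is exact and FTAL coincides with FTL identically, so the cited bound applies verbatim. Your computations of $\alpha=\Theta(1/B^2)$, $\|\nabla\ell_t\|_2\le 4BX$, and the diameter $D$ then slot in directly to recover the stated constants.
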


\subsection{Regret of ChebCB}

\begin{algorithm}[!t]
	\DontPrintSemicolon
	\KwIn{loss sequence $\{\ell_t:[a,b]\mapsto[0,K]\}_{t=1}^T$, context sequence $\{c_t\in[c,c+C]\}_{t=1}^T$, learning rate $\eta>0$, action set $\*g\in[a,b]^d$, featurizer $\*f:[c,c+C]\mapsto\R^m$, normalizations $L,N>0$\looseness-1}
	\For{$t=1,\dots,T$}{
		$\theta_i\gets\argmin\limits_{|\theta_{[0]}|\le\frac1N,|\theta_{[j]}|\le\frac{2CL}{KNj}}\sum\limits_{s\in[t-1]_i}\left(\langle\theta,\*f(c_s)\rangle-\frac{\ell_t(\*g_{[i_s]})}{KN}\right)^2~\forall~i\in[d]$\tcp*{update models}
		$\*s_{[i]}\gets\langle\theta_i,\*f(c_t)\rangle~\forall~i\in[d]$\tcp*{compute model predictions}
		$i^\ast\gets\argmin_i\*s_{[i]}$\\
		$\*p_{[i]}\gets\frac1{d+\eta(\*s_{[i]}-\*s_{[i^\ast]})}~\forall~i\ne i^\ast$\tcp*{compute action probabilities}
		$\*p_{[i^\ast]}=1-\sum_{i\ne i^\ast}\*p_{[i]}$\\
		sample $i_t\in[d]$ with probability $\*p_{[i_t]}$ and play action $\*g_{[i_t]}$
	}
	\caption{\label{alg:squarecb-ftl-general}
		SquareCB method for Lipschitz contextual bandits using Follow-the-Leader.\looseness-1
	}
\end{algorithm}

\begin{Thm}\label{thm:squarecb-ftl-general}
	Suppose $\E\ell_t(x)$ is an $L_x$-Lipschitz function of actions $x\in[a,b]$ and an $L_c$-Lipschitz function of contexts $c_t\in[c,c+C]$.
	Then Algorithm~\ref{alg:squarecb-ftl-general} run with learning rate $\eta=2\sqrt{dT/(R_T+2\alpha^2T)}$ for $R_T$ and $\alpha$ as in Equations~\ref{eq:oracle-regret} and~\ref{eq:approximation-error}, respectively, action set $\*g_{[i]}=a+(b-a)\frac{i-1/2}d$, Chebyshev features $\*f_{[j]}(c_t)=P_j(c_t)$, and normalizations $L=L_c$ and $N=2+\frac{4CL_c}K(1+\log m)$ has regret w.r.t. any policy $f:[c,c+C]\mapsto[a,b]$ of
	\begin{equation}
		\E\sum_{t=1}^T\ell_t(\*g_{[i_t]})-\ell_t(f(c_t))
		=\tilde\BigO\left(L_cd\sqrt{mT}+\frac{L_cT\sqrt d}m+\frac{L_xT}d\right)
	\end{equation}
	Setting $d=\Theta(T^{2/11})$ and $m=\Theta(T^{3/11})$ yields a regret $\tilde\BigO(\max\{L_c,L_x\}T^{9/11})$.
\end{Thm}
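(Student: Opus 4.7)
The plan is to view Algorithm~\ref{alg:squarecb-ftl-general} as an instance of SquareCB with a Follow-the-Leader online regression oracle over a per-action Chebyshev polynomial class, and to combine three ingredients: (i)~the SquareCB regret bound of Theorem~\ref{thm:squarecb}, (ii)~the Chebyshev approximation guarantee of Corollary~\ref{cor:chebyshev} to bound the realizability gap, and (iii)~the FTL regret bound of Theorem~\ref{thm:ftl} to bound the oracle's online regret. A separate, straightforward discretization argument then handles the gap between the grid-best action and a continuous policy $f$, contributing the $L_x T/d$ term.

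First I would reduce to the $[0,1]$-loss setting required by Theorem~\ref{thm:squarecb} by rescaling losses by $1/(KN)$, which sends $\ell_t\mapsto\ell_t/(KN)\in[0,1/N]\subset[0,1]$ and turns the Lipschitz constant in $c$ into $L_c/(KN)$. For each action $\*g_{[i]}$, the map $c\mapsto\E\ell_t(\*g_{[i]})/(KN)$ is then $L_c/(KN)$-Lipschitz and bounded by $1/N$ on $[c_{\smin},c_{\smin}+C]$, so Corollary~\ref{cor:chebyshev} yields a coefficient vector $\theta_i^\ast$ with $|\theta_{i,[0]}^\ast|\le1/N$ and $|\theta_{i,[j]}^\ast|\le2CL_c/(KNj)$ that approximates this map with error $\alpha=\BigO(CL_c\log(m)/(KNm))$. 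Thus the constrained class in Algorithm~\ref{alg:squarecb-ftl-general} with $L=L_c$ contains $\theta_i^\ast$, giving misspecification at most $\alpha$ on the rescaled scale. A quick triangle-inequality bound on any $\theta$ in the constraint set gives $|\langle\theta,\*f(c)\rangle|\le1/N+2CL_c(1+\log m)/(KN)$; the choice $N=2+4CL_c(1+\log m)/K$ then ensures predictions lie in $[0,1/2]\subset[0,1]$, as SquareCB requires.

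Next I would bound the FTL oracle's regret. Since each $\theta_i$ is updated only on rounds where $i_s=i$, by Theorem~\ref{thm:ftl} applied per action with $n=m+1$, $\|\*f(c)\|_2\le\sqrt{m+1}$, parameter diameter $D_\Theta=\BigO(1/N)$ (using the $\ell_2$-summable coefficient bounds), and $B\le1$, each per-action regret is $\tilde\BigO(m)$. Summing across the $d$ actions gives a total oracle regret of $R_T=\tilde\BigO(dm)$. Plugging $R_T$ and $\alpha$ into Theorem~\ref{thm:squarecb}, the SquareCB regret w.r.t.\ the grid-best policy (on rescaled losses) is
\begin{equation}
2\sqrt{dT R_T}+5\alpha T\sqrt d
=\tilde\BigO\!\left(d\sqrt{mT}+\tfrac{CL_c T\sqrt d}{KNm}\right).
\end{equation}
Rescaling back by $KN=\Theta(K+CL_c\log m)$ and absorbing $C$ and $K$ (which enters through the normalization $K$ assumed bounded by the loss range) into the $\tilde\BigO$ notation yields $\tilde\BigO(L_c d\sqrt{mT}+L_c T\sqrt d/m)$ regret against the best action in $\*g$.

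Finally, I would convert this into regret against any policy $f:[c_{\smin},c_{\smin}+C]\mapsto[a,b]$ by discretization: since $\E\ell_t$ is $L_x$-Lipschitz in actions and the grid $\*g_{[i]}=a+(b-a)(i-\tfrac12)/d$ is $(b-a)/(2d)$-dense in $[a,b]$, the best grid action at any $c_t$ loses at most $L_x(b-a)/(2d)$ relative to $f(c_t)$, adding $\BigO(L_xT/d)$. Summing the three terms and optimizing $d$ and $m$ to balance them yields the stated $\tilde\BigO(T^{9/11})$ rate. The main obstacle I anticipate is the careful bookkeeping of the normalization $N$: it must be large enough to keep regressor outputs in $[0,1]$ for SquareCB, yet small enough that the rescaling does not inflate $KN\alpha$ beyond the claimed $L_cT\sqrt d/m$ term, and the coefficient-box constraints must simultaneously contain the Chebyshev approximator from Corollary~\ref{cor:chebyshev} and yield a bounded $D_\Theta$ in Theorem~\ref{thm:ftl}; the choice $N=2+4CL_c(1+\log m)/K$ is exactly what reconciles these three demands.
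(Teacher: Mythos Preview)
Your proposal is correct and follows essentially the same route as the paper: instantiate SquareCB with a per-action FTL oracle over box-constrained Chebyshev coefficients, invoke Corollary~\ref{cor:chebyshev} for the misspecification $\alpha$, Theorem~\ref{thm:ftl} for $R_T=\tilde\BigO(dm)$, Theorem~\ref{thm:squarecb} to combine them, rescale by $KN$, and add the $L_xT/d$ action-discretization term. One cosmetic slip: the bound $|\langle\theta,\*f(c)\rangle|\le\tfrac12$ places predictions in $[-\tfrac12,\tfrac12]$, not $[0,\tfrac12]$; the paper fixes this by shifting both the regressor outputs and the rescaled losses by $+\tfrac12$, which leaves the algorithm and all regret calculations unchanged.
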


\newpage
\begin{proof}
	Observe that the above algorithm is equivalent to running Algorithm~\ref{alg:squarecb} with the follow-the-leader oracle over an $d(m+1)$-dimensional space $\Theta$ with diameter $\sqrt{\frac d{N^2}\left(1+\frac{4C^2L_c^2}{K^2}\sum_{j=1}^m\frac1{j^2}\right)}\le\frac{\sqrt{dK^2+2dC^2L_c^2\pi^2/3}}{KN}$, features bounded by $\sqrt{1+\sum_{j=1}^mP_j(c_t)}\le\sqrt{m+1}$, and predictions bounded by $|\langle\*f(c),\theta\rangle|
	\le\|\theta\|_1\|\*f(c)\|_\infty
	\le\frac1N+\frac{2CL_c}{KN}\sum_{j=1}^m
	\le\frac12$.
	Thus by Theorem~\ref{thm:ftl} the oracle has regret at most
	\begin{equation}\label{eq:oracle-regret}
		R_T
		=d(m+1)\left(1+\log\frac{T\sqrt{d(m+1)(K^2+2C^2L_c^2\pi^2/3)}}{KN}\right)
	\end{equation}
	Note that, to ensure the regressors and losses have range in $[0,1]$ we can define the former as $h(c,\*g_{[i]})=\langle\*f(c),\theta_i\rangle+\frac12$ and the latter as $\frac{\ell_t}{KN}+\frac12$ and Algorithm~\ref{alg:squarecb-ftl-general} remains the same.
	Furthermore, the error of the regression approximation is then
	\begin{equation}\label{eq:approximation-error}
		\alpha
		=\frac{\pi+\frac2\pi\log(2m+1)}{2KN(m+1)}CL_c
	\end{equation}
	We conclude by applying Theorem~\ref{thm:squarecb}, unnormalizing by multiplying the resulting regret by $KN$, and adding the approximation error $\frac{L_x(b-a)}{2d}$ due to the discretization of the action space.
\end{proof}

\newpage
\section{SOR preliminaries}\label{sec:sor}

We will use the following notation:
\begin{itemize}
	\item $\*M_\omega=\*I_n-(\*D/\omega+\*L)^{-1}\*A$ is the matrix of the first normal form~\citep[2.2.1]{hackbusch2016iterative}
	\item $\*W_\omega=\*D/\omega+\*L$ is the matrix of the third normal form~\citep[Section~2.2.3]{hackbusch2016iterative}
	\item $\*C_\omega=\*I_n-\*A(\*D/\omega+\*L)^{-1}=\*I_n-\*A\*W_\omega^{-1}=\*A\*M_\omega\*A^{-1}$ is the defect reduction matrix
	\item $\*{\breve M}_\omega=\*I_n-\frac{2-\omega}\omega(\*D/\omega+\*L^T)^{-1}\*D(\*D/\omega+\*L)^{-1}\*A$ is the matrix of the first normal form for SSOR~\citep[2.2.1]{hackbusch2016iterative}
	\item $\*{\breve W}_\omega=\frac\omega{2-\omega}(\*D/\omega+\*L)\*D^{-1}(\*D/\omega+\*L^T)$ is the matrix of the third normal form for SSOR~\citep[Section~2.2.3]{hackbusch2016iterative}
	\item $\*{\breve C}_\omega=\*I_n-\frac{2-\omega}\omega\*A(\*D/\omega+\*L^T)^{-1}\*D(\*D/\omega+\*L)^{-1}=\*I_n-\*A\*{\breve W}_\omega^{-1}=\*A\*{\breve M}_\omega\*A^{-1}$ is the defect reduction matrix for SSOR
	\item $\|\cdot\|_2$ denotes the Euclidean norm of a vector and the spectral norm of a matrix
	\item $\kappa(\*A)=\|\*A\|_2\|\*A^{-1}\|_2$ denotes the condition number of a matrix $\*A\succ0$
	\item $\rho(\*X)$ denotes the spectral radius of a matrix $\*X$
	\item $\|\*x\|_{\*A}=\|\*A^\frac12\*x\|_2$ denotes the energy norm of a vector $\*x\in\R^n$ associated with the matrix $\*A\succ0$
	\item $\|\*X\|_{\*A}=\|\*A^\frac12\*X\*A^{-\frac12}\|_2$ denotes the energy norm of a matrix $\*X\in\R^{n\times n}$ associated with the matrix $\*A\succ0$
\end{itemize}

We further derive bounds on the number of iterations for SOR and SSOR using the following energy norm estimate:

\begin{Thm}[{Corollary of~\citet[Theorem~3.44 \& Corollary~3.45]{hackbusch2016iterative}}]\label{thm:energy}
	If $\*A\succ0$ and $\omega\in(0,2)$ then
	$\|\*M_\omega\|_{\*A}^2
	\le1-\frac{\frac{2-\omega}\omega\gamma}{\left(\frac{2-\omega}{2\omega}\right)^2+\frac\gamma\omega+\rho(\*D^{-1}\*L\*D^{-1}\*L^T)-\frac14}$, where $\gamma=1-\rho(\*D^{-1}(\*L+\*L^T))$.
\end{Thm}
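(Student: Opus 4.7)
The plan is to derive this bound as a direct application of the standard energy-norm analysis of SOR (essentially Hackbusch's Theorem 3.44 combined with Corollary 3.45), specialized to symmetric positive-definite $\*A$ and $\omega \in (0,2)$. First I would invoke the Stein-type identity
\begin{equation*}
\*A - \*M_\omega^T\*A\*M_\omega = \*A\*W_\omega^{-T}(\*W_\omega + \*W_\omega^T - \*A)\*W_\omega^{-1}\*A,
\end{equation*}
which follows by expanding $\*M_\omega = \*I_n - \*W_\omega^{-1}\*A$ and collecting terms. Substituting $\*A = \*D + \*L + \*L^T$ and $\*W_\omega = \*D/\omega + \*L$ collapses the middle factor to $\tfrac{2-\omega}{\omega}\*D \succ 0$, and then the substitution $y = \*D^{1/2}\*W_\omega^{-1}\*A x$ recasts
\begin{equation*}
1 - \|\*M_\omega\|_\*A^2 = \tfrac{2-\omega}{\omega}\inf_{y\ne 0}\tfrac{y^T y}{y^T \tilde{\*W}_\omega^T \tilde{\*A}^{-1} \tilde{\*W}_\omega y} = \tfrac{2-\omega}{\omega}\cdot\tfrac{1}{\|\tilde{\*W}_\omega^T\tilde{\*A}^{-1}\tilde{\*W}_\omega\|_2},
\end{equation*}
where the symmetric rescaling $\tilde{\*X} = \*D^{-1/2}\*X\*D^{-1/2}$ yields $\tilde{\*L}$, $\tilde{\*A} = \*I_n + \tilde{\*L} + \tilde{\*L}^T$, and $\tilde{\*W}_\omega = \*I_n/\omega + \tilde{\*L}$. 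This rescaling preserves the spectral quantities of interest: $\gamma = 1 - \rho(\tilde{\*L}+\tilde{\*L}^T)$, so that $\tilde{\*A} \succeq \gamma \*I_n$, and $\rho(\tilde{\*L}\tilde{\*L}^T) = \rho(\*D^{-1}\*L\*D^{-1}\*L^T)$ by similarity.

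The remaining task is the upper bound $\|\tilde{\*W}_\omega^T\tilde{\*A}^{-1}\tilde{\*W}_\omega\|_2 \le \tfrac{1}{\gamma}\big[\big(\tfrac{2-\omega}{2\omega}\big)^2 + \tfrac{\gamma}{\omega} + \rho(\tilde{\*L}\tilde{\*L}^T) - \tfrac14\big]$. My approach would exploit the identity $\tilde{\*W}_\omega + \tilde{\*W}_\omega^T - \tilde{\*A} = \tfrac{2-\omega}{\omega}\*I_n$, which rearranges into the shifted form
\begin{equation*}
\tilde{\*W}_\omega^T\tilde{\*A}^{-1}\tilde{\*W}_\omega = (\tilde{\*W}_\omega - \tilde{\*A})^T\tilde{\*A}^{-1}(\tilde{\*W}_\omega - \tilde{\*A}) + \tfrac{2-\omega}{\omega}\*I_n,
\end{equation*}
with $\tilde{\*W}_\omega - \tilde{\*A} = \tfrac{1-\omega}{\omega}\*I_n - \tilde{\*L}^T$. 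Bounding the first (PSD) summand via $\tilde{\*A}^{-1} \preceq \tfrac{1}{\gamma}\*I_n$ and expanding the square while using $\|\tilde{\*L}\tilde{\*L}^T\|_2 = \rho(\tilde{\*L}\tilde{\*L}^T)$ together with the elementary identity $\tfrac{1}{\omega^2} - \tfrac{1}{\omega} = \big(\tfrac{2-\omega}{2\omega}\big)^2 - \tfrac{1}{4}$ (completing the square around $1/2$) produces exactly the combination appearing in the claimed denominator.

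The main obstacle I expect is a tight treatment of the cross terms $\tfrac{1-\omega}{\omega}(\tilde{\*L}\tilde{\*A}^{-1} + \tilde{\*A}^{-1}\tilde{\*L}^T)$ that appear when expanding $(\tilde{\*W}_\omega - \tilde{\*A})^T\tilde{\*A}^{-1}(\tilde{\*W}_\omega - \tilde{\*A})$: a naive operator-norm estimate introduces a second copy of $\rho(\tilde{\*L}+\tilde{\*L}^T) = 1-\gamma$ into the numerator, which is strictly weaker than the claimed bound. Getting the right form seems to require using $\tilde{\*A}\tilde{\*A}^{-1} = \*I_n$ to rewrite $\tilde{\*L}\tilde{\*A}^{-1} + \tilde{\*L}^T\tilde{\*A}^{-1} = \*I_n - \tilde{\*A}^{-1}$ (and its transpose), so that the symmetric parts of the cross terms telescope against the $\tfrac{2-\omega}{\omega}\*I_n$ contribution and leave only $\rho(\tilde{\*L}\tilde{\*L}^T)$ in the final denominator. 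This cancellation is exactly the content of Hackbusch's Theorem 3.44, and I would ultimately invoke it as a black box rather than rederive the telescoping by hand, with Corollary 3.45 providing the final simplification that matches the target form.
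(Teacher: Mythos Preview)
The paper does not give its own proof of this statement at all: it is stated as a direct corollary of Hackbusch's Theorem~3.44 and Corollary~3.45 and simply cited without further argument. Your proposal—sketching the Stein-type identity and the symmetric rescaling, then ultimately invoking Hackbusch's results as a black box for the cross-term cancellation—is entirely consistent with (and in fact more detailed than) what the paper does.
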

\begin{Cor}
	Let $K_\omega$ be the maximum number of iterations that SOR needs to reach error $\varepsilon>0$.
	Then for any $\omega\in(0,2)$ we have
	$K_\omega\le1+\frac{-\log\frac\varepsilon{2\sqrt{\kappa(A)}}}{-\log\nu_\omega(\*A)}$, where $\nu_\omega(\*A)$ is the square root of the upper bound in Theorem~\ref{thm:energy}.
\end{Cor}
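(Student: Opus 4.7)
The plan is to combine Theorem~\ref{thm:energy} with standard residual-to-error conversions in order to translate an energy-norm contraction bound into an iteration count bound. First, let $\*x^\ast = \*A^{-1}\*b$ and write the error at step $k$ as $\*e_k = \*x^\ast - \*x_k$; since SOR in first normal form gives $\*e_k = \*M_\omega^k \*e_0$, and the matrix energy norm $\|\*X\|_{\*A} = \|\*A^{1/2}\*X\*A^{-1/2}\|_2$ is submultiplicative (being a similarity transform of the spectral norm), Theorem~\ref{thm:energy} directly yields $\|\*e_k\|_{\*A} \le \nu_\omega(\*A)^k \|\*e_0\|_{\*A}$ via the associated vector energy norm $\|\*v\|_{\*A} = \|\*A^{1/2}\*v\|_2$.

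The second step is to convert between the energy norm of the error and the $\ell_2$-norm of the residual $\*r_k = \*A\*e_k$. Factoring $\*A\*e_k = \*A^{1/2}(\*A^{1/2}\*e_k)$ and using submultiplicativity of the spectral norm gives $\|\*r_k\|_2 \le \sqrt{\lambda_{\max}(\*A)}\,\|\*e_k\|_{\*A}$, and the same factorization together with a lower bound via the smallest singular value of $\*A^{1/2}$ produces the reverse-direction inequality $\|\*r_0\|_2 \ge \sqrt{\lambda_{\min}(\*A)}\,\|\*e_0\|_{\*A}$. Chaining these with the decay from the first step yields
\[
\frac{\|\*r_k\|_2}{\|\*r_0\|_2} \;\le\; \sqrt{\kappa(\*A)}\,\nu_\omega(\*A)^k.
\]

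Third, I would solve for the smallest integer $k$ for which the right-hand side drops below $\varepsilon$. Since $\nu_\omega(\*A) < 1$ for $\omega \in (0,2)$ with $\*A \succ 0$, we have $-\log \nu_\omega(\*A) > 0$, and the sufficient condition becomes $k \ge \log(\sqrt{\kappa(\*A)}/\varepsilon)/(-\log \nu_\omega(\*A))$. Therefore the first iteration at which the termination test in Algorithm~\ref{alg:sor} is certified satisfies $K_\omega \le \lceil \log(\sqrt{\kappa(\*A)}/\varepsilon)/(-\log \nu_\omega(\*A)) \rceil$; applying $\lceil x \rceil \le 1 + x$ delivers the displayed inequality, with the factor of $2$ inside the logarithm being a harmless slack term absorbed via $\log 2/(-\log \nu_\omega(\*A)) \ge 0$.

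There is no serious obstacle here once the argument is set up correctly. The main subtlety is to apply submultiplicativity to the \emph{matrix} energy norm rather than the vector one, which requires the conjugation-by-$\*A^{1/2}$ representation of $\|\cdot\|_{\*A}$; beyond this, the proof is bookkeeping of $\lambda_{\max}$ and $\lambda_{\min}$ factors that combine into $\sqrt{\kappa(\*A)}$, together with the observation that the SOR initialization $\*x = \*0_n$ makes $\*r_0 = \*b = \*A\*e_0$ without losing generality.
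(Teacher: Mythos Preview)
Your proposal is correct and follows essentially the same route as the paper: both arguments convert the energy-norm contraction $\|\*M_\omega\|_{\*A}\le\nu_\omega(\*A)$ into a residual-ratio bound via the $\*A^{1/2}$ similarity and then solve for $k$. Your direct $\lambda_{\max}/\lambda_{\min}$ bookkeeping actually yields $\|\*r_k\|_2/\|\*r_0\|_2\le\sqrt{\kappa(\*A)}\,\nu_\omega(\*A)^k$, a factor of~$2$ tighter than the paper's $2\sqrt{\kappa(\*A)}\,\nu_\omega(\*A)^k$ (which it inherits from a cited Hackbusch inequality); as you note, this only makes the stated bound easier to reach.
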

\begin{proof}
	By \citet[Equations~2.22c \&~B.28b]{hackbusch2016iterative} we have at each iteration $k$ of Algorithm~\ref{alg:sor} that
	\begin{equation}
		\frac{\|\*r_k\|_2}{\|\*r_0\|_2}
		\le\frac2{\|\*r_0\|_2}\|\*A^\frac12\|_2\|\*M_\omega^k\|_{\*A}\|\*A^{-\frac12}\*r_0\|_2
		\le2\sqrt{\kappa(\*A)}\|\*M_\omega\|_{\*A}^k
	\end{equation}
	Setting the r.h.s. equal to $\varepsilon$ and solving for $k$ yields the result.
\end{proof}

\begin{Cor}\label{cor:ssor-iterations}
	Let $\breve K_\omega$ be the maximum number of iterations that SSOR (Algorithm~\ref{alg:ssor}) needs to reach (absolute) error $\varepsilon>0$.
	Then for any $\omega\in(0,2)$ we have
	$\breve K_\omega\le1+\frac{-\log\frac\varepsilon{2\|\*b\|_2\sqrt{\kappa(A)}}}{-2\log\nu_\omega(\*A)}$, where $\nu_\omega(\*A)$ is the square root of the upper bound in Theorem~\ref{thm:energy}.
\end{Cor}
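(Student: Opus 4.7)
The proof will closely mirror that of the preceding corollary for SOR, with two adjustments to handle (a) the symmetric variant and (b) the absolute convergence criterion. First I would recall that the SSOR iteration corresponds to one forward and one backward sweep, so the SSOR iteration matrix $\*{\breve M}_\omega$ factors (up to similarity) as a product of a forward-SOR matrix and its adjoint in the energy inner product. Because $\*A\succ0$ induces this inner product and because the backward sweep is the $\*A$-adjoint of the forward sweep, this yields the key identity $\|\*{\breve M}_\omega\|_{\*A}=\|\*M_\omega\|_{\*A}^2$ (or at worst the inequality $\le\|\*M_\omega\|_{\*A}^2$, which is all that is needed). Combined with Theorem~\ref{thm:energy}, this gives $\|\*{\breve M}_\omega\|_{\*A}\le\nu_\omega(\*A)^2$.

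Next I would run the same residual bound used in the SOR proof, but with $\*{\breve M}_\omega$ in place of $\*M_\omega$: the defect reduction matrix $\*{\breve C}_\omega=\*A\*{\breve M}_\omega\*A^{-1}$ is similar to $\*{\breve M}_\omega$, so applying \citet[Equations~2.22c \&~B.28b]{hackbusch2016iterative} gives
\begin{equation}
\frac{\|\*r_k\|_2}{\|\*r_0\|_2}
\le2\sqrt{\kappa(\*A)}\,\|\*{\breve M}_\omega\|_{\*A}^k
\le2\sqrt{\kappa(\*A)}\,\nu_\omega(\*A)^{2k}.
\end{equation}
Since SSOR in this section is initialized at $\*x=\*0_n$, we have $\*r_0=\*b$ and therefore $\|\*r_0\|_2=\|\*b\|_2$.

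Finally I would impose the absolute convergence criterion $\|\*r_k\|_2\le\varepsilon$, which by the display above is guaranteed as soon as
\begin{equation}
2\|\*b\|_2\sqrt{\kappa(\*A)}\,\nu_\omega(\*A)^{2k}\le\varepsilon,
\end{equation}
i.e. as soon as $2k(-\log\nu_\omega(\*A))\ge-\log\tfrac{\varepsilon}{2\|\*b\|_2\sqrt{\kappa(\*A)}}$. Taking the ceiling adds at most $1$ and yields the stated bound on $\breve K_\omega$. The only substantive step is the identity $\|\*{\breve M}_\omega\|_{\*A}=\|\*M_\omega\|_{\*A}^2$; everything else is a direct transcription of the SOR corollary. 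I do not anticipate any real obstacle, since all the heavy lifting (the energy norm estimate and the residual-to-error conversion) is done by Theorem~\ref{thm:energy} and the Hackbusch inequalities already invoked for SOR.
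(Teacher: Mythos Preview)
Your proposal is correct and follows essentially the same route as the paper: apply the Hackbusch residual-to-energy-norm conversion to $\*{\breve M}_\omega$, use $\|\*{\breve M}_\omega\|_{\*A}\le\|\*M_\omega\|_{\*A}^2$ to invoke Theorem~\ref{thm:energy}, and solve the resulting inequality for $k$ under the absolute criterion with $\*r_0=\*b$. The paper's proof leaves the step $\|\*{\breve M}_\omega^k\|_{\*A}\le\|\*M_\omega\|_{\*A}^{2k}$ implicit, whereas you spell out its justification; otherwise the arguments are identical.
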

\begin{proof}
	By \citet[Equations~2.22c \&~B.28b]{hackbusch2016iterative} we have at each iteration $k$ of Algorithm~\ref{alg:sor} that
	\begin{equation}
		\|\*r_k\|_2
		\le2\|\*A^\frac12\|_2\|\*{\breve M}_\omega^k\|_{\*A}\|\*A^{-\frac12}\*r_0\|_2
		\le2\|\*b\|_2\sqrt{\kappa(\*A)}\|\*M_\omega\|_{\*A}^{2k}
	\end{equation}
	Setting the r.h.s. equal to $\varepsilon$ and solving for $k$ yields the result.
\end{proof}

\newpage
\section{Near-asymptotic proofs}\label{sec:near-asymptotic}

\subsection{Proof of Lemma~\ref{lem:asymptotic} and associated discussion}

\begin{figure}[!t]
	\centering
	\includegraphics[trim={8mm 8mm 8mm 8mm},width=0.49\linewidth]{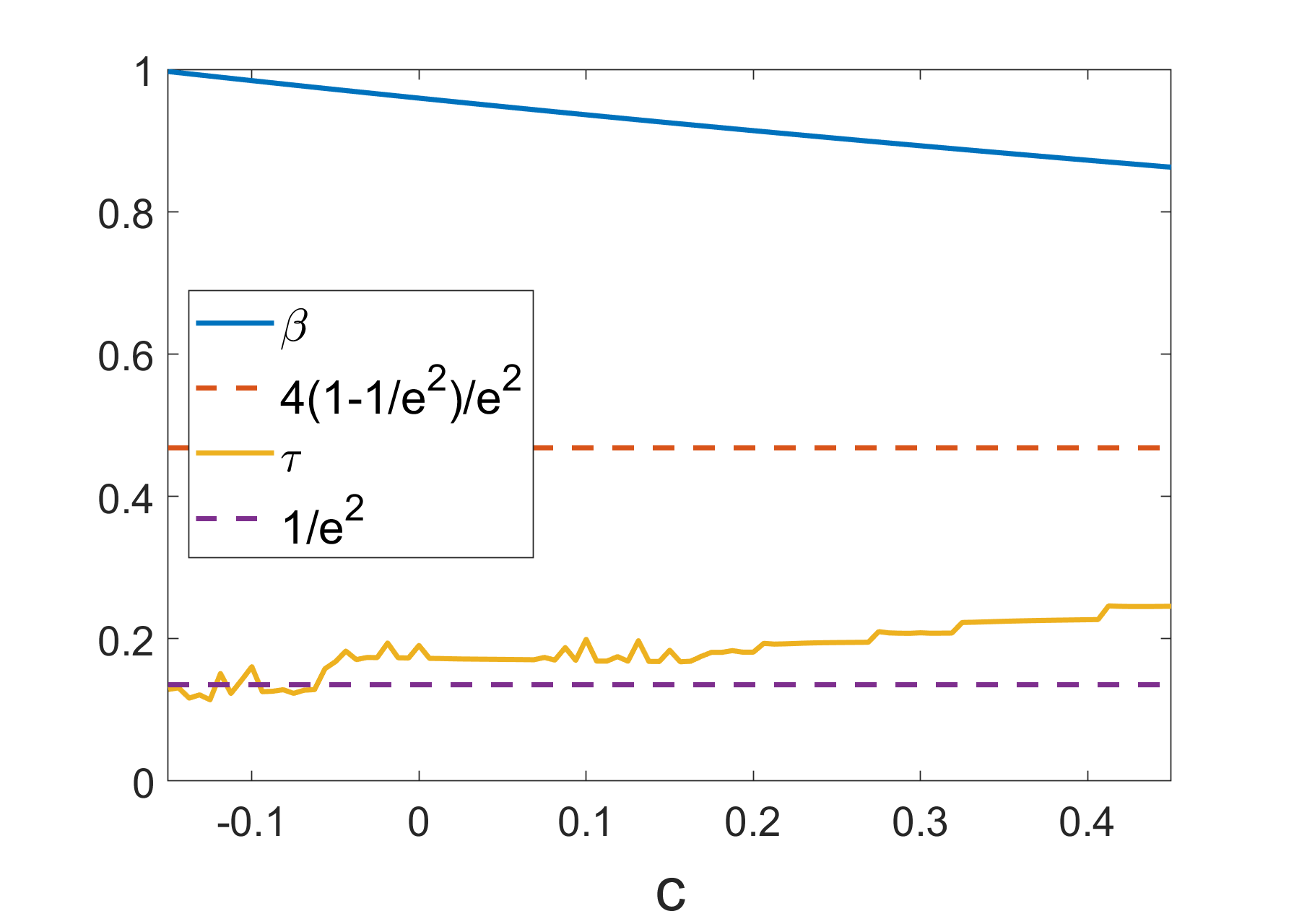}
	\caption{\label{fig:tau-beta}
		Values of $\tau$ and $\beta$ for $\*A+c\*I_n$ for different $c$.
	}
\end{figure}

\begin{proof}
	For the first claim, suppose  $l=\min\limits_{\|\*C_\omega^k\*b\|_2<\varepsilon\|\*b\|_2}k>U(\omega)$.
	Then
	\begin{equation}
		\varepsilon
		\le\frac{\|\*C_\omega^{l-1}\*b\|_2}{\|\*b\|_2}
		\le\frac{\|\*C_\omega^{l-1}\|_2\|\*b\|_2}{\|\*b\|_2}
		\le(\rho(\*C_\omega)+\tau(1-\rho(\*C_\omega)))^{l-1}
		<\varepsilon
	\end{equation}
	so by contradiction we must have $\min\limits_{\|\*C_\omega^k\*b\|_2<\varepsilon\|\*b\|_2}k\le U(\omega)$.
	Now note that by \citet[Theorem~4.27]{hackbusch2016iterative} and similarity of $\*C_\omega$ and $\*M_\omega$ we have that $\rho(\*C_\omega)=\frac14\left(\omega\beta+\sqrt{\omega^2\beta^2-4(\omega-1)}\right)^2$ for $\omega<\omega^\ast=1+\left(\frac\beta{1+\sqrt{1-\beta^2}}\right)^2$ and $\omega-1$ otherwise.
	Therefore on $\omega<\omega^\ast$ we have $\rho(\*C_\omega)\le\rho(\*C_1)=\beta^2$ and on $\omega\ge\omega^\ast$ we have $\rho(\*C_\omega)\le\omega_{\smax}-1$.
	This concludes the second part of the first claim. 
	The first part of the second claim follows because $\rho(\*C_\omega)$ is decreasing on $\omega<\omega^\ast$.
	For the second part, we compute the derivative $|\partial_\omega U(\omega)|=\frac{(\tau-1)\log\varepsilon}{(\tau+(1-\tau)(\omega-1))\log^2(\tau+(1-\tau)(\omega-1))}$.
	Since $\tau+(1-\tau)(\omega-1)\ge\frac1{e^2}$ by assumption---either by nonnegativity of $\omega-1$ if $\tau\ge\frac1{e^2}$ or because otherwise $\beta^2\ge\frac4{e^2}(1-\frac1{e^2})$ implies $\tau+(1-\tau)(\omega-1)\ge(1-\frac1{e^2})\left(\frac\beta{1+\sqrt{1-\beta^2}}\right)^2\ge\frac1{e^2}$---the derivative is increasing in $\omega$ and so is at most $\frac{-(1-\tau)\log\varepsilon}{\alpha\log^2\alpha}$.
\end{proof}

Note that the fourth item's restriction on $\tau$ and $\beta$ does not really restrict the matrices our analysis is applicable to, as we can always re-define $\tau$ in Assumption~\ref{asu:asymptotic} to be at least $1/e^2$.
Our analysis does not strongly depend on this restriction;
it is largely done for simplicity and because it does not exclude too many settings of interest.
In-particular, we find for $\varepsilon\ge10^{-8}$ that $\tau$ is typically indeed larger than $\frac1{e^2}$, and furthermore $\tau$ is likely quite high whenever $\beta$ is small, as it suggests the matrix is near-diagonal and so $\omega$ near one will converge extremely quickly (c.f. Figure~\ref{fig:tau-beta}).

\subsection{Proof of Theorem~\ref{thm:asymptotic}}\label{app:asymptotic-proof}
\begin{proof}
	The first bound follows from Theorem~\ref{thm:semi-lipschitz} by noting that Lemma~\ref{lem:asymptotic} implies that the functions $U_t-1$ are $\left(\frac{-(1-\tau_t)\log\varepsilon}{\alpha_t\log^2\alpha_t},\omega_{\smax}\right)$-semi-Lipschitz over $[1,\omega_{\smax}]$ and the functions $\SOR_t-1\le U_t-1$ are $\frac{-\log\varepsilon}{-\log\alpha_t}$-bounded.
	To extend the comparator domain to $(0,\omega_{\smax}]$, note that Lemma~\ref{lem:asymptotic}.2 implies that all $U_t$ are decreasing on $\omega\in(0,1)$.
	To extend the comparator domain again in the second bound, note that the setting of $\omega_{\smax}$ implies that the minimizer $1+\beta_t^2/(1+\sqrt{1-\beta_t^2})$ of each $U_t$ is at most $\omega_{\smax}$, and so all functions $U_t$ are increasing on $\omega\in(\omega_{\smax},2)$.
\end{proof}

\subsection{Approximating the optimal policy}

\begin{Lem}\label{lem:lipschitz}
	Define $\*A(c)=\*A+c\*I_n$ for all $c\in[c_{\smin},\infty)$, where $c_{\smin}>-\lambda_{\smin}(\*A)$.
	Then $\omega^\ast(c)=1+\left(\frac{\beta_c}{\sqrt{1-\beta_c^2}+1}\right)^2$ is $\frac{6\beta_{\smax}(1+\beta_{\smax})/\sqrt{1-\beta_{\smax}^2}}{\left(\sqrt{1-\beta_{\smax}^2}+1\right)^2}\left(\frac{\lambda_{\smin}(\*D)+c_{\smin}+1}{\lambda_{\smin}(\*D)+c_{\smin}}\right)^2$-Lipschitz, where $\beta_{\smax}=\max_c\beta_c$ is the maximum over $\beta_c=\rho(\*I_n-(\*D+c\*I_n)^{-1}(\*A+c\*I_n))$.
\end{Lem}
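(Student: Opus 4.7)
The plan is to apply the chain rule $\tfrac{d\omega^\ast}{dc} = \tfrac{d\omega^\ast}{d\beta}\cdot\tfrac{d\beta_c}{dc}$, bounding each factor separately and then taking a worst-case product over $c$.

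For the outer factor, I would differentiate $\omega^\ast(\beta) = 1 + \bigl(\beta/(1+\sqrt{1-\beta^2})\bigr)^2$ directly. Writing $\sigma = \sqrt{1-\beta^2}$, the product/chain rule together with $\sigma'(\beta) = -\beta/\sigma$ gives, after cancellation using $\sigma^2 + \beta^2 = 1$, the clean formula
\begin{equation}
\frac{d\omega^\ast}{d\beta} \;=\; \frac{2\beta}{\sqrt{1-\beta^2}\,(1+\sqrt{1-\beta^2})^2},
\end{equation}
which is increasing in $\beta$ on $[0,1)$; hence over $\beta \in [0,\beta_{\smax}]$ it is maximized at $\beta_{\smax}$.

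For the inner factor, I would use the standing consistently-ordered, SPD hypothesis on $\*A$. Writing $\*E = \*L + \*L^T = \*A - \*D$, the Jacobi matrix $\*J(c) = -(\*D+c\*I_n)^{-1}\*E$ is similar to the symmetric matrix $\*B(c) = -(\*D+c\*I_n)^{-1/2}\,\*E\,(\*D+c\*I_n)^{-1/2}$; consistent ordering further forces the eigenvalues of $\*B(c)$ to come in $\pm$ pairs, so $\beta_c = \|\*B(c)\|_2$ and Weyl's inequality gives $|\beta_c - \beta_{c'}| \le \|\*B(c) - \*B(c')\|_2$. Since $\*D + c\*I_n$ is diagonal, entry-wise differentiation of $(\*D+c\*I_n)^{-1/2}$ yields
\begin{equation}
\*B'(c) \;=\; -\tfrac{1}{2}\bigl[(\*D+c\*I_n)^{-1}\*B(c) + \*B(c)(\*D+c\*I_n)^{-1}\bigr],
\end{equation}
so by submultiplicativity $\|\*B'(c)\|_2 \le \beta_c/(\lambda_{\smin}(\*D)+c_{\smin})$ uniformly in $c \ge c_{\smin}$. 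Integrating gives $|\beta_c - \beta_{c'}| \le \bigl(\beta_{\smax}/(\lambda_{\smin}(\*D)+c_{\smin})\bigr)|c-c'|$.

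Multiplying the two bounds yields Lipschitzness of $\omega^\ast(c)$, with a constant of the same structural form as the claim. The main obstacle will be recovering the exact numerical constants stated in the lemma---in particular the factor of $3$, the appearance of $(1+\beta_{\smax})$ rather than $\beta_{\smax}$, and the squared denominator factor $\bigl((\lambda_{\smin}(\*D)+c_{\smin}+1)/(\lambda_{\smin}(\*D)+c_{\smin})\bigr)^2$. These looser quantities most plausibly arise if one works directly with $\*J(c)$ (which is not symmetric) and uses a non-symmetric eigenvalue perturbation estimate---e.g., Bauer--Fike through the diagonalizing transformation $(\*D+c\*I_n)^{1/2}$, which introduces $\kappa((\*D+c\*I_n)^{1/2})$-type factors and extra slack from bounding $\|\*E\|_2$ via $\|\*A\|_2 + \|\*D\|_2$. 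The symmetric-similarity route above avoids those losses, so the plan may in fact produce a strictly tighter Lipschitz constant than the one stated; I would verify this before finalizing.
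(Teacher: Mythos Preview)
Your approach is correct and in fact cleaner than the paper's own argument. Both proofs use the chain rule and pass to the symmetric matrix similar to the Jacobi iteration matrix, but the paper differentiates the expression $\*I_n-(\*D+c\*I_n)^{-1/2}(\*A+c\*I_n)(\*D+c\*I_n)^{-1/2}$ directly---without first observing that it equals $-(\*D+c\*I_n)^{-1/2}\*E(\*D+c\*I_n)^{-1/2}$ with $\*E=\*A-\*D$ constant in $c$. Because all three factors in the paper's form depend on $c$, the product-rule expansion is long, and bounding it term-by-term is what produces the slack factors $3(1+\beta_c)$ and $\bigl((\lambda_{\smin}(\*D)+c+1)/(\lambda_{\smin}(\*D)+c)\bigr)^2$. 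Your simplification collapses the derivative to two terms and gives $|\partial_c\beta_c|\le\beta_c/(\lambda_{\smin}(\*D)+c)$, which is strictly sharper. So your suspicion at the end is right: you will obtain a smaller Lipschitz constant than the one stated, and there is no need to go through a non-symmetric Bauer--Fike route to recover the paper's looser constants---they are artifacts of the paper's less-simplified computation, not of a different proof technique.
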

\begin{proof}
	We first compute
	\begin{align}
		\begin{split}
			\partial_c\beta_c
			&=\partial_c\rho(\*I_n-(\*D+c\*I_n)^{-1}(\*A+c\*I_n))\\
			&=\partial_c\lambda_{\smax}(\*I_n-(\*D+c\*I_n)^{-\frac12}(\*A+c\*I_n)(\*D+c\*I_n)^{-\frac12})\\
			&=\*v_1^T\partial_c((\*I_n-(\*D+c\*I_n)^{-\frac12}(\*A+c\*I_n)(\*D+\*c\*I_n)^{-\frac12}))\*v_1\\
			&=-\*v_1^T(\partial_c((\*D+c\*I_n)^{-\frac12})(\*A+c\*I_n)(\*D+c\*I_n)^{-\frac12}\\
			&\qquad\qquad+(\*D+c\*I_n)^{-\frac12}\partial_c(\*A+c\*I_n)(\*D+c\*I_n)^{-\frac12}\\
			&\qquad\qquad+(\*D+c\*I_n)^{-\frac12}(\*A+c\*I_n)\partial_c((\*D+c\*I_n)^{-\frac12}))\*v_1\\
			&=\frac12\*v_1^T((\*D+c\*I_n)^{-\frac32}(\*A+c\*I_n)(\*D+c\*I_n)^{-\frac12}-2c(\*D+c\*I_n)^{-1}\\
			&\qquad\quad+(\*D+c\*I_n)^{-\frac12}(\*A+c\*I_n)(\*D+c\*I_n)^{-\frac32})\*v_1\\
			&=\frac12\*v_1^T(\*I_n+(\*D+c\*I_n)^{-1})(\*D+c\*I_n)^{-\frac12}(\*A+c\*I)(\*D+c\*I_n)^{-\frac12}(\*I_n+(\*D+c\*I_n)^{-1})\*v_1\\
			&\qquad-\frac12\*v_1^T(\*D+c\*I_n)^{-\frac12}(\*A+c\*I_n)(\*D+c\*I_n)^{-\frac12}\*v_1\\
			&\qquad-\frac12\*v_1^T(\*D+c\*I_n)^{-\frac32}(\*A+c\*I_n)(\*D+c\*I_n)^{-\frac32}\*v_1-c\*v_1^T(\*D+c\*I_n)^{-1}\*v_1\\
			&=\frac12\*v_1^T(\*I_n+(\*D+c\*I_n)^{-1})(\*D+c\*I_n)^{-\frac12}(\*A+c\*I)(\*D+c\*I_n)^{-\frac12}(\*I_n+(\*D+c\*I_n)^{-1})\*v_1\\
			&\qquad-\frac12\*v_1^T(\*D+c\*I_n)^{-\frac12}(\*A+3c\*I_n)(\*D+c\*I_n)^{-\frac12}\*v_1\\
			&\qquad-\frac12\*v_1^T(\*D+c\*I_n)^{-\frac32}(\*A+c\*I_n)(\*D+c\*I_n)^{-\frac32}\*v_1
		\end{split}
	\end{align}
	The first component is positive and the matrix has eigenvalues bounded by $\left(1+\frac1{\lambda_{\smin}(\*D)+c}\right)^2\frac{1+\beta_c}2$, while the last term is negative and the matrix has 
	If $c\ge0$ the positive component has spectral radius at most $\frac{1+\beta_c}2\left(1+\frac1{(\lambda_{\min}(\*D)+c)^2}\right)$.
	If the middle term is negative, subtracting $2\*A$ from the middle matrix shows that its magnitude is bounded by $\frac32(1+\beta_c)$.
	If the middle term is positive---which can only happen for negative $c$---its magnitude is bounded by $\frac{-3c/2}{\lambda_{\smin}(\*D)+c}\le\frac{3\lambda_{\smin}(\*D)/2}{\lambda_{\smin}(\*D)+c}$.
	Combining all terms yields a bound of $3\left(\frac{\lambda_{\min}(\*D)+c+1}{\lambda_{\min}(\*D)+c}\right)^2(1+\beta_c)$.
	We then have that
	\begin{align}
		|\partial_c\omega^\ast(c)|
		=\frac{2\beta_c/\sqrt{1-\beta_c^2}}{\left(\sqrt{1-\beta_c^2}+1\right)^2}|\partial_c\beta_c|
		=\frac{6\beta_c(1+\beta_c)/\sqrt{1-\beta_c^2}}{\left(\sqrt{1-\beta_c^2}+1\right)^2}\left(\frac{\lambda_{\smin}(\*D)+c+1}{\lambda_{\smin}(\*D)+c}\right)^2
	\end{align}
	The result follows because $\frac{2x(1+x)/\sqrt{1-x^2}}{\left(\sqrt{1-x^2}+1\right)^2}$ increases monotonically on $x\in[0,1)$ and the bound itself decreases monotonically in $c$
\end{proof}

\newpage
\begin{Thm}\label{thm:contextual-tsallis-inf}
	Suppose $c_t\in[c_{\smin},c_{\smin}+C]~\forall~t\in[T]$, where $c_{\min}>-\lambda_{\min}(\*A)$, and define $\beta_{\smax}=\max_t\beta_t$.
	Then if we run Algorithm~\ref{alg:contextual-tsallis-inf} with losses $(\SOR_t(\cdot)-1)/K$ normalized by $K\ge\frac{-\log\varepsilon}{-\log\alpha_{\smax}}$ for $\alpha_{\smax}=\max_t\alpha_t$, action set $\*g_{[i]}=1+(\omega_{\smax}-1)\frac id$ for $\omega_{\smax}\ge1+\left(\frac{\beta_{\smax}}{1+\sqrt{1-\beta_{\smax}^2}}\right)^2$, and context discretization $\*h_{[j]}=c_{\smin}+\frac Cm\left(j-\frac12\right)$, then the number of iterations will be bounded in expectation as\looseness-1
	\begin{equation}
		\E\sum_{t=1}^T\SOR_t(\omega_t)
		\le m+4K\sqrt{dmT}+\left(\frac{CL^\ast/m}{\omega_{\smax}-1}+\frac1d\right)\sum_{t=1}^T\frac{-\log\varepsilon}{\log^2\alpha_t}+\sum_{t=1}^TU_t(\omega^\ast(c_t))
	\end{equation}
	where $L^\ast$ is the Lipschitz constant from Lemma~\ref{lem:lipschitz}.
	Setting $\omega_{\smax}=1+\left(\frac{\beta_{\smax}^2}{1+\sqrt{1-\beta_{\smax}^2}}\right)^2$, $K=\frac{-\log\varepsilon}{-\log\alpha_{\smax}}$, 	$d=\sqrt[4]{\frac{\bar\gamma^2T\log^2\alpha_{\smax}}{24CL}}$, and $m=\sqrt[4]{54C^3L^3\bar\gamma^2T\log^2\alpha_{\smax}}$ where $\bar\gamma=\frac1T\sum_{t=1}^T\frac1{\log^2\alpha_t}$ and $\tilde L=\left(\frac{\lambda_{\smin}(\*D)+c_{\smin}+1}{\lambda_{\smin}(\*D)+c_{\smin}}\right)^2\frac{1+\beta_{\smax}}{\beta_{\smax}\sqrt{1-\beta_{\smax}^2}}$, yields
	\begin{align}
		\begin{split}
			E\sum_{t=1}^T\SOR_t(\omega_t)
			&\le\sqrt[4]{54C^3L^3\bar\gamma^2T\log^2\alpha_{\smax}}+4\log\frac1\varepsilon\sqrt[4]{\frac{24CL\bar\gamma T^3}{\log^2\alpha_{\smax}}}+\sum_{t=1}^TU_t(\omega^\ast(c_t))\\
			&\le\sqrt[4]{\frac{54C^3L^3T}{\log^2{\alpha_{\smax}}}}+\frac{4\log\frac1\varepsilon}{\log\frac1{\alpha_{\smax}}}\sqrt[4]{24CLT^3}+\sum_{t=1}^TU_t(\omega^\ast(c_t))
		\end{split}
	\end{align}
\end{Thm}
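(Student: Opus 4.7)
The plan is to derive the bound as a corollary of the abstract contextual semi-Lipschitz bandit guarantee (Theorem~\ref{thm:contextual-semi-lipschitz}), applied to the SOR costs $\SOR_t$ together with the surrogate upper bounds $U_t$ of Lemma~\ref{lem:asymptotic}. Actions lie in $[1,\omega_{\smax}]$, contexts in $[c_{\smin},c_{\smin}+C]$, and the comparator policy is $f(c)=\omega^\ast(c)$. This is essentially the contextual analogue of how Theorem~\ref{thm:asymptotic} followed from Theorem~\ref{thm:semi-lipschitz}, with Lemma~\ref{lem:lipschitz} playing the new role of bounding how the comparator moves with the context.

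First I would establish that each $U_t$ is $(L_t,\omega_{\smax})$-semi-Lipschitz on all of $[1,\omega_{\smax}]$ with $L_t:=\tfrac{-(1-\tau_t)\log\varepsilon}{\alpha_t\log^2\alpha_t}$, not just on $[\omega^\ast,\omega_{\smax}]$ where Lemma~\ref{lem:asymptotic}.2 gives full Lipschitzness. This is done by combining that Lipschitz piece with the monotone-decreasing behavior on $(0,\omega^\ast)$ via a three-case argument on whether two points $x\ge y$ both lie below $\omega^\ast$ (trivial, since $U_t(x)\le U_t(y)$), both above (direct Lipschitzness), or straddle $\omega^\ast$ (insert and subtract $U_t(\omega^\ast)$, combining monotone decrease on $[y,\omega^\ast]$ with Lipschitzness on $[\omega^\ast,x]$). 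Lemma~\ref{lem:lipschitz} already supplies the Lipschitz constant $L^\ast$ for the comparator $\omega^\ast(\cdot)$.

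With these ingredients, applying Theorem~\ref{thm:contextual-semi-lipschitz} (normalizing losses by $K$ before calling it and unnormalizing afterwards) yields regret of the form
\begin{equation*}
m+4K\sqrt{dmT}+\Bigl(\tfrac{CL^\ast}{m}+\tfrac{\omega_{\smax}-1}{d}\Bigr)\sum_{t=1}^T L_t+\sum_{t=1}^T U_t(\omega^\ast(c_t)).
\end{equation*}
To match the form stated in Theorem~\ref{thm:contextual-tsallis-inf}, I would show the inequality $(\omega_{\smax}-1)L_t\le-\log\varepsilon/\log^2\alpha_t$ and then factor $(\omega_{\smax}-1)$ out of the middle coefficient. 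This inequality splits cleanly on the definition $\alpha_t=\tau_t+(1-\tau_t)\max\{\beta_t^2,\omega_{\smax}-1\}$: when $\omega_{\smax}-1\ge\beta_t^2$ it reduces to $0\le\tau_t$; in the other case $\alpha_t\ge\beta_t^2>\omega_{\smax}-1$ makes the bound immediate.

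Finally, the simplified bound in the second display is routine algebraic balancing of the three $T$-dependent terms under $K=-\log\varepsilon/(-\log\alpha_{\smax})$, the prescribed $\omega_{\smax}$, and the fourth-root scalings of $d$ and $m$. The main obstacle will really just be bookkeeping: relating the specific Lipschitz constant from Lemma~\ref{lem:lipschitz} to the simpler expression $L$ used in the parameter choices, and carefully carrying the normalization $K$ through the application of Theorem~\ref{thm:contextual-semi-lipschitz}. The semi-Lipschitz case analysis around $\omega^\ast$ is the only place where any real thought is required, and it is modest.
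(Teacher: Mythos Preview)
Your proposal is correct and follows essentially the same route as the paper: apply Theorem~\ref{thm:contextual-semi-lipschitz} with $\ell_t=\SOR_t-1$, $u_t=U_t-1$, the semi-Lipschitz constants $L_t=\frac{-(1-\tau_t)\log\varepsilon}{\alpha_t\log^2\alpha_t}$ from Lemma~\ref{lem:asymptotic}, and the policy Lipschitz constant $L^\ast$ from Lemma~\ref{lem:lipschitz}, noting that $\omega_{\smax}$ is chosen so that the range of $\omega^\ast$ lies in $[1,\omega_{\smax}]$. The paper's own proof is terse and does not spell out your three-case semi-Lipschitz verification or the $(\omega_{\smax}-1)L_t\le-\log\varepsilon/\log^2\alpha_t$ simplification, but these are exactly the missing details you would need to fill in.
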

\begin{proof}
	The bound follows from Theorem~\ref{thm:contextual-semi-lipschitz} by noting that Lemma~\ref{lem:asymptotic} implies that the functions $U_t-1$ are $\left(\frac{-(1-\tau_t)\log\varepsilon}{\alpha_t\log^2\alpha_t},\omega_{\smax}\right)$-semi-Lipschitz over $[1,\omega_{\smax}]$ and the functions $\SOR_t-1\le U_t-1$ are $\frac{-\log\varepsilon}{-\log\alpha_t}$-bounded.
	Note that for the choice of $\omega_{\smax}$ we the interval $[1,\omega_{\smax}]$ contains the range of the optimal policy $\omega^\ast$, and further by Lemma~\ref{lem:lipschitz} it is $L^\ast$-Lipschitz over $[c_{\smin},c_{\smin}+C]$.
\end{proof}

\subsection{Extension to preconditioned CG}\label{app:cg}

\begin{figure}[!t]
	\centering
	\includegraphics[width=0.24\linewidth]{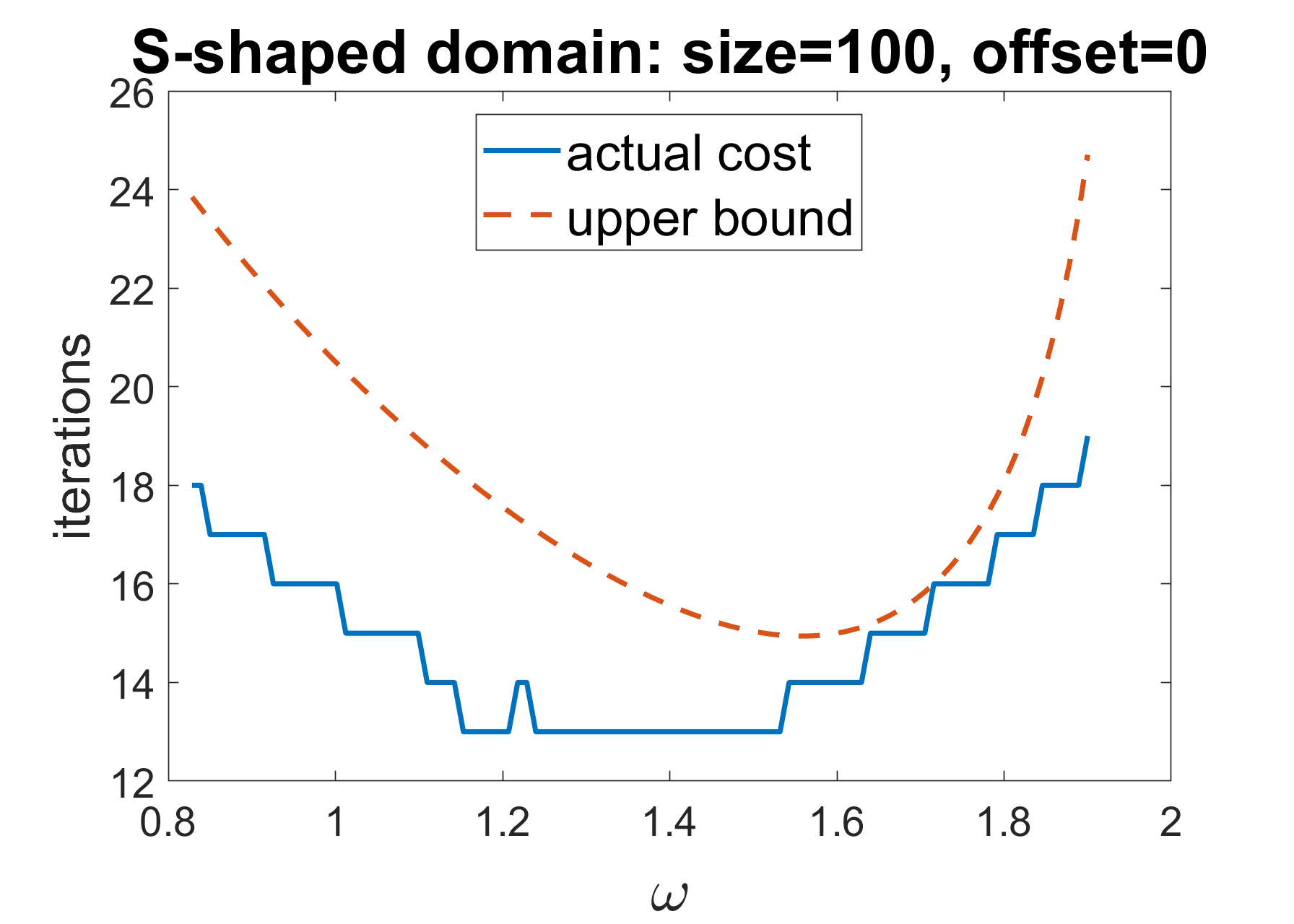}
	\includegraphics[width=0.24\linewidth]{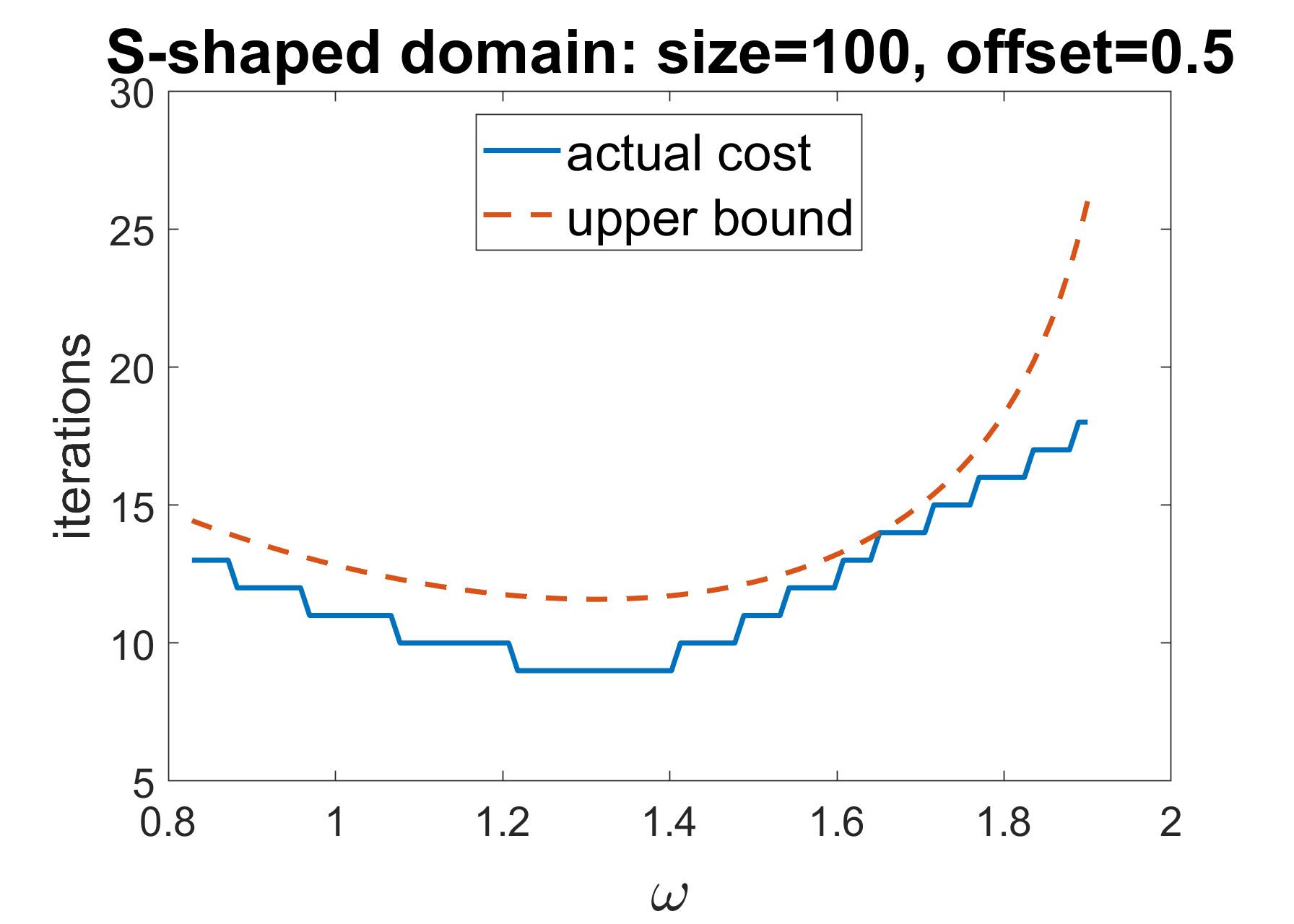}
	\includegraphics[width=0.24\linewidth]{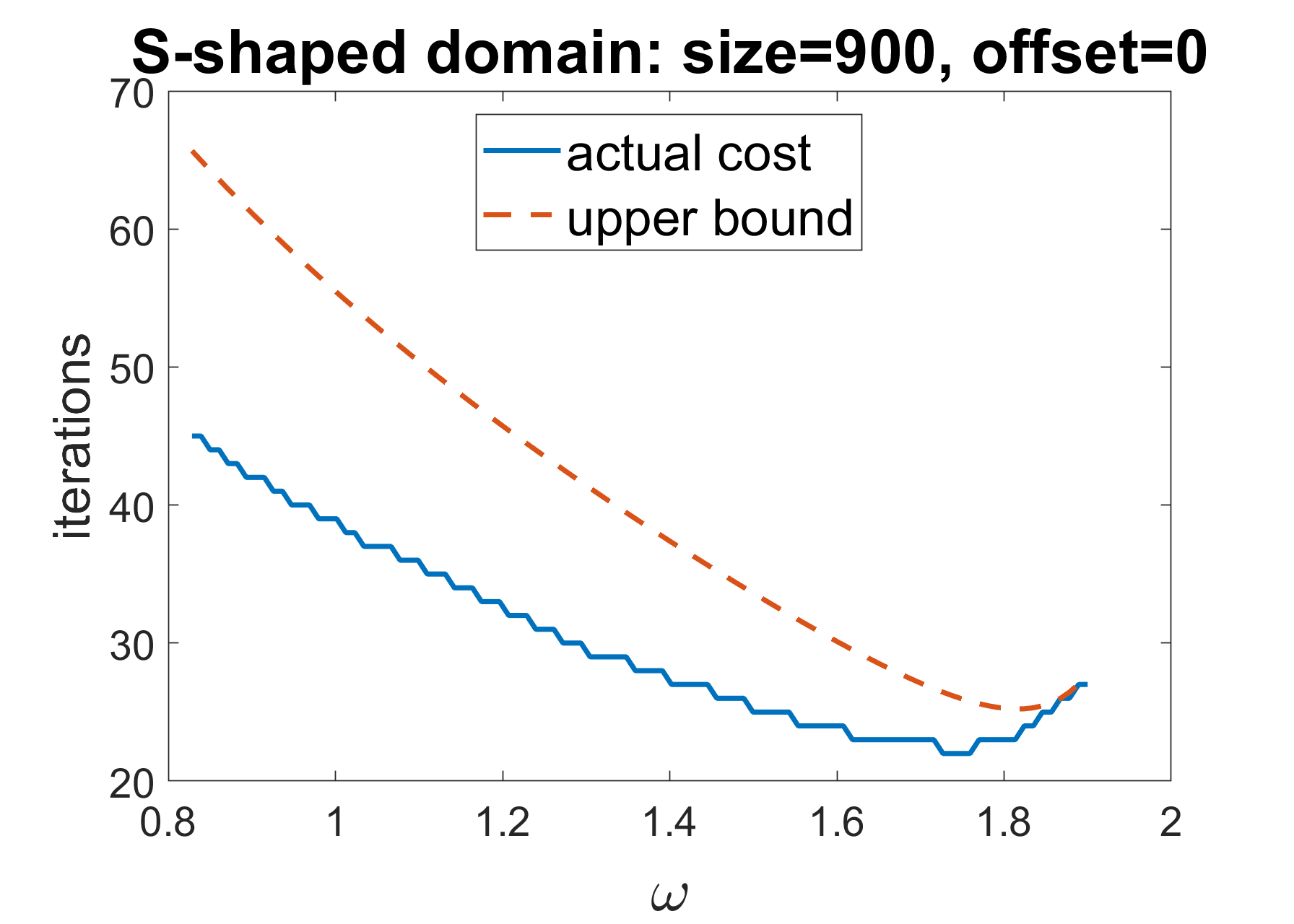}
	\includegraphics[width=0.24\linewidth]{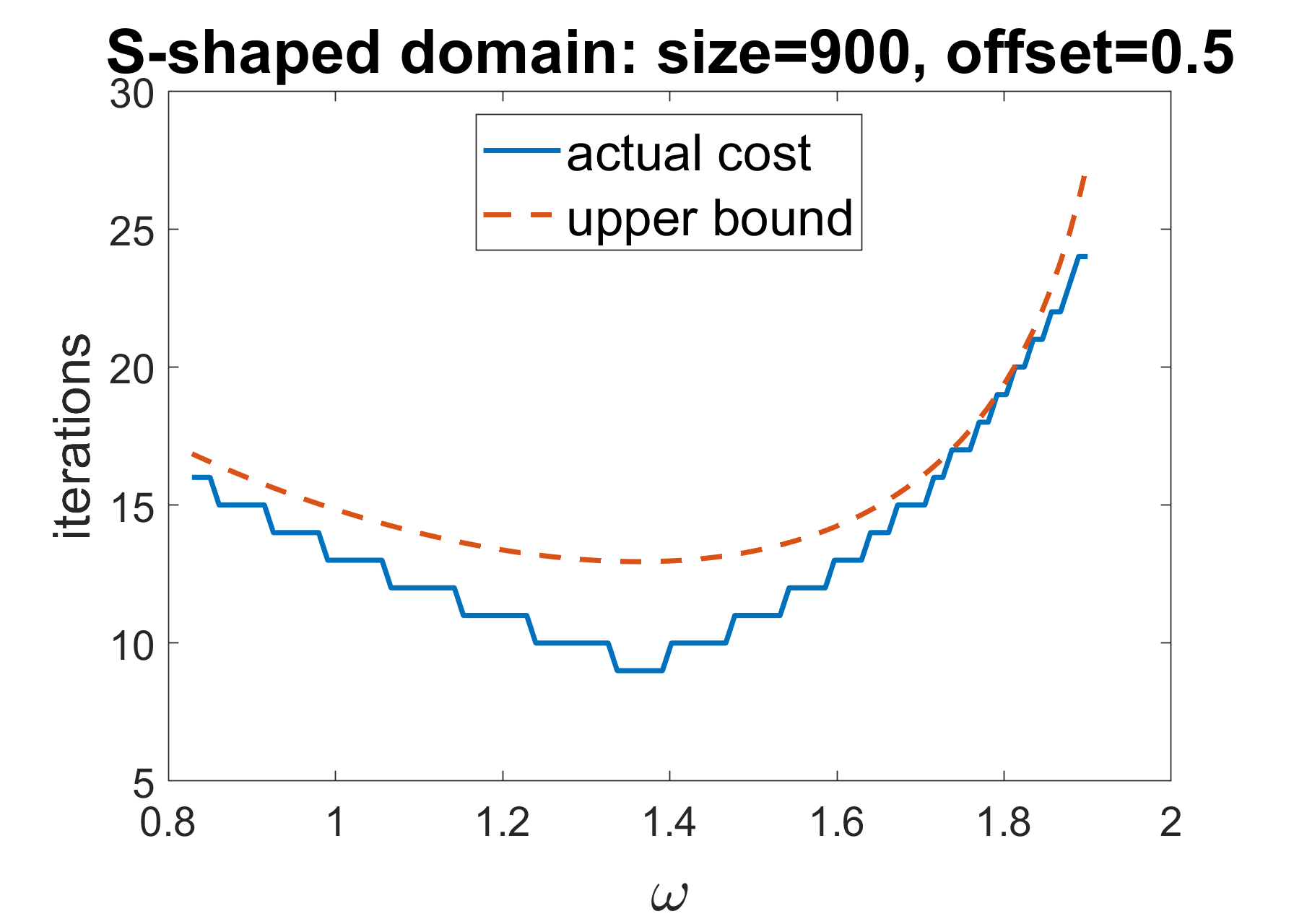}
	\includegraphics[width=0.24\linewidth]{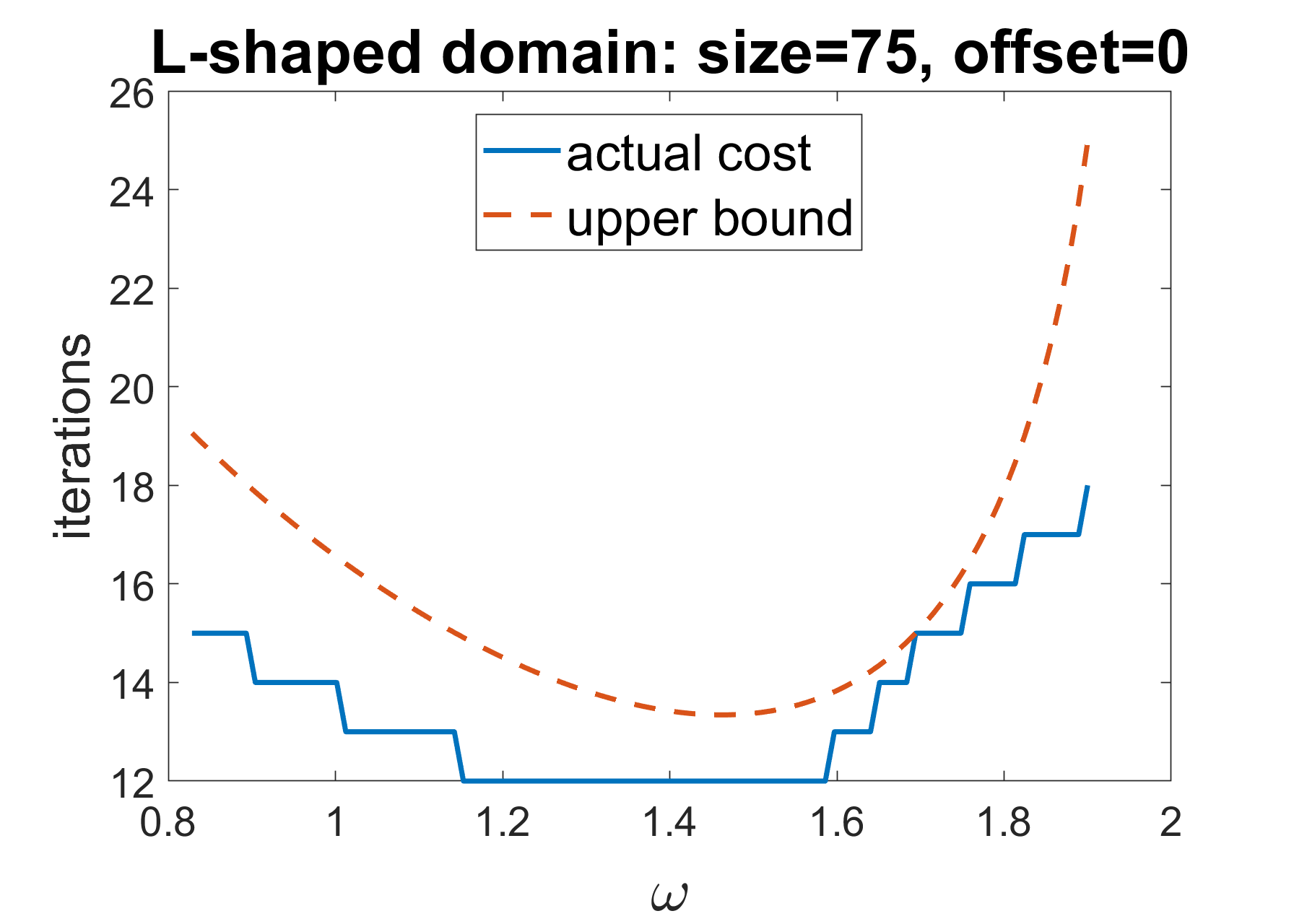}
	\includegraphics[width=0.24\linewidth]{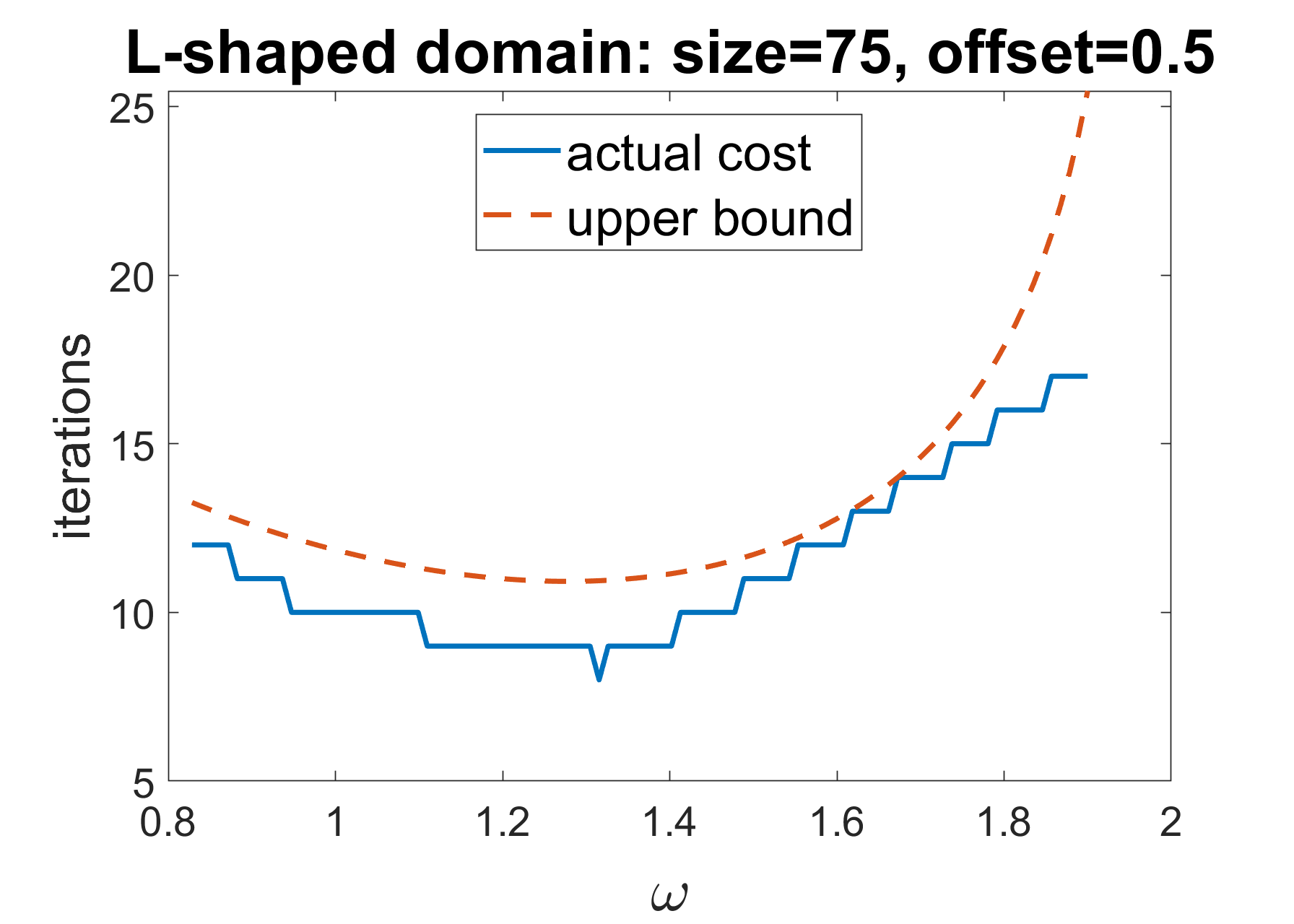}
	\includegraphics[width=0.24\linewidth]{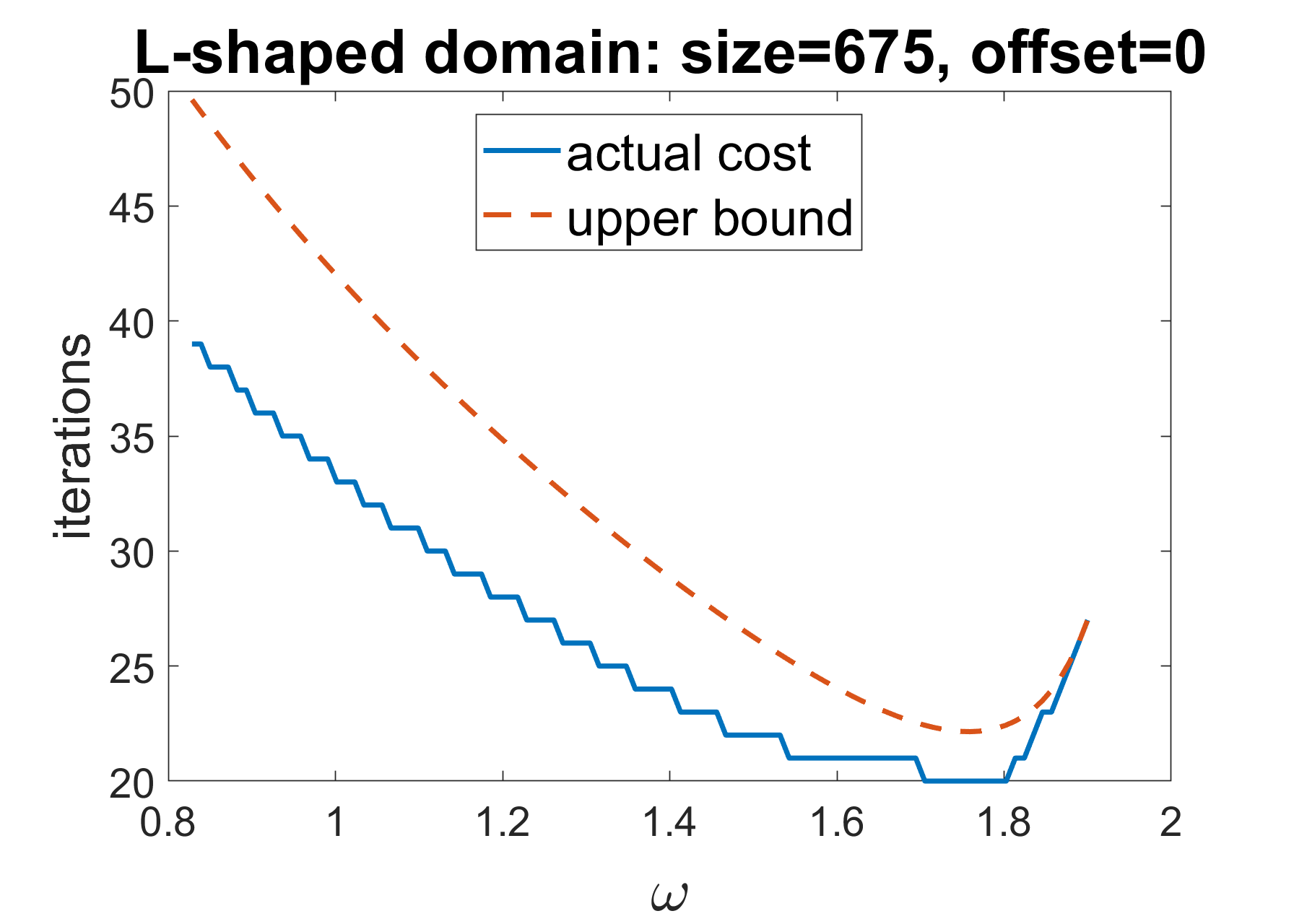}
	\includegraphics[width=0.24\linewidth]{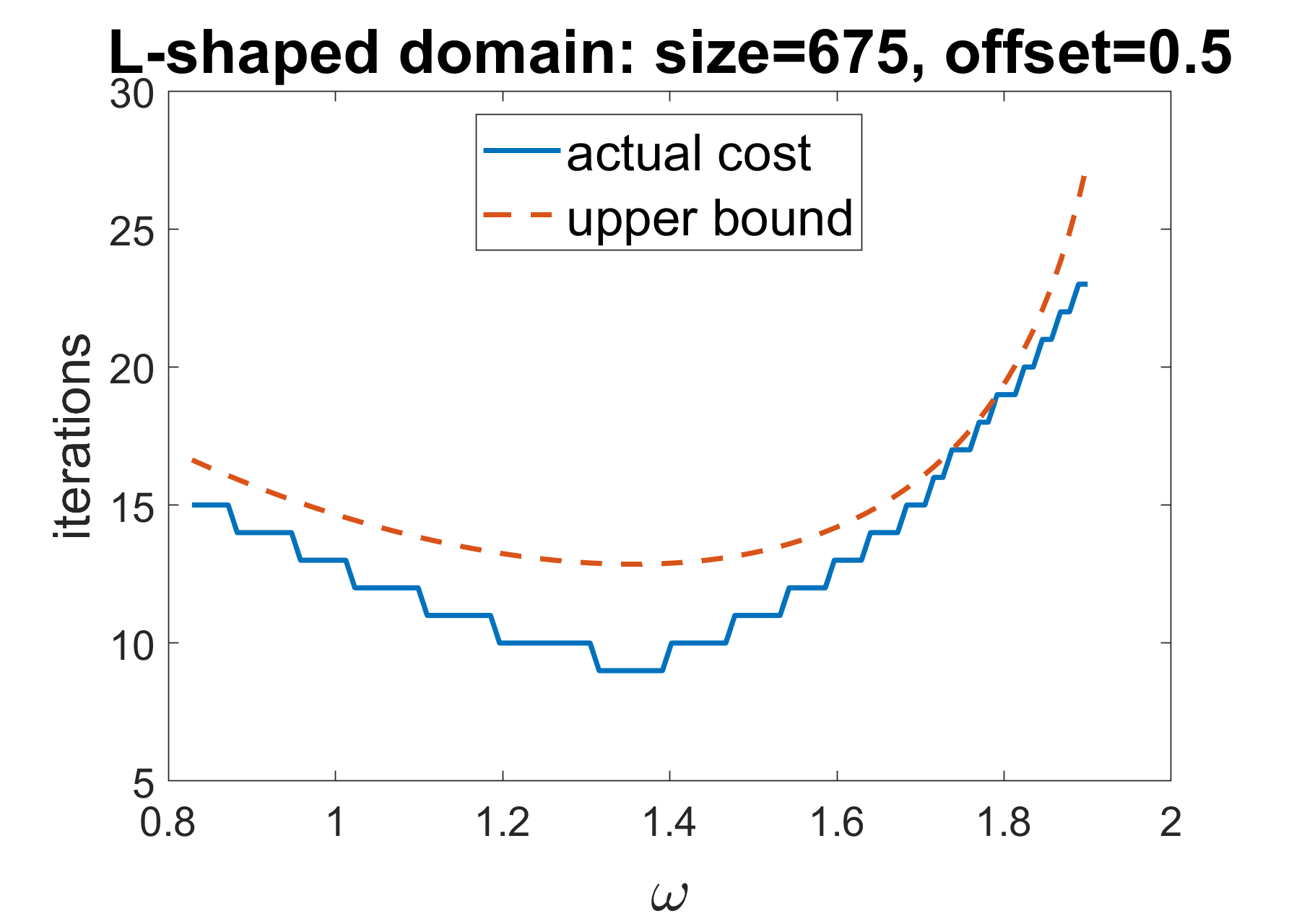}
	\caption{\label{fig:cg-bounds}
		Comparison of actual cost of running SSOR-preconditioned CG and the upper bounds computed in Section~\ref{app:cg} as functions of the tuning parameter $\omega\in[2\sqrt2-2,1.9]$ on various domains.
	}
\end{figure}

While CG is an iterative algorithm, for simplicity we define it as the solution to a minimization problem in the Krylov subspace:

\begin{Def}
	$\CG(\*A,\*b,\omega)
	=\min_{\|\*A\*x_k-\*b\|_2\le\varepsilon}k$ for $\*x_k=\argmin\limits_{\*x=\*P_k(\*{\breve W}_\omega^{-1}\*A)\*{\breve W}_\omega^{-1}\*b}\|\*x-\*A^{-1}\*b\|_{\*A}$
	where the minimum is taken over all degree $k$ polynomials $\*P_k:\R^{n\times n}\mapsto\R^{n\times n}$.
\end{Def}

\begin{Lem}\label{lem:cg}
	Let $\*A$ be a positive-definite matrix and $\*b\in\R^n$ any vector.
	Define 
	\begin{equation}
		U^\CG(\omega)=1+\frac{\tau\log\left(\frac{\sqrt{\kappa(\*A)}}\varepsilon+\sqrt{\frac{\kappa(\*A)}{\varepsilon^2}-1}\right)}{-\log\left(1-\frac4{2+\sqrt{\frac4{2-\omega}+\frac{\mu(2-\omega)}\omega+\frac{4\nu\omega}{2-\omega}}}\right)}
	\end{equation}
	for $\mu=\lambda_{\max}(\*D\*A^{-1})\ge1$, $\nu=\lambda_{\max}((\*L\*D^{-1}\*L^T-\*D/4)\*A^{-1})\in[-1/4,0]$, and $\tau$ the smallest constant (depending on $\*A$ and $\*b$) s.t. $U^\CG\ge\CG(\*A,\*b,\cdot)$.
	Then the following holds
	\begin{enumerate}
		\item $\tau\in(0,1]$
		\item if $\mu>1$ then $U^\CG$ is minimized at $\omega^\ast=\frac2{1+\sqrt{\frac2\mu(1+2\nu)}}$ and monotonically increases away from $\omega^\ast$ in both directions
		\item $U^\CG$ is $\left(\frac{\mu+4\nu+4}{4\mu\nu+2\mu-1}\tau\sqrt{\mu\sqrt2},2\sqrt2-2\right)$-semi-Lipschitz on $[2\sqrt2-2,2)$
		\item if $\mu\le\mu_{\max}$ then $U^\CG\le1+\frac{\tau\log(\frac2\varepsilon\sqrt{\kappa(\*A)})}{-\log\left(1-\frac2{1+\sqrt{\gamma}}\right)}$ on $[2\sqrt2-2,\frac2{1+1/\sqrt{\mu_{\max}}}]$, where $\gamma\le\frac{7+3\mu_{\max}}8$.
	\end{enumerate}
	
\end{Lem}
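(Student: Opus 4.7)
The plan is to treat all four items following the template of Lemma~\ref{lem:asymptotic}. Items 1 and 2 are the classical part. For Item 1, I combine the standard Chebyshev-polynomial bound for CG with Axelsson's Theorem 7.17 on the SSOR-preconditioned condition number $\bar\kappa \le \tfrac14\bigl(\tfrac{4}{2-\omega} + \tfrac{\mu(2-\omega)}{\omega} + \tfrac{4\nu\omega}{2-\omega}\bigr)$; solving the resulting criterion $\cosh(k\,\mathrm{arccosh}(\tfrac{\bar\kappa+1}{\bar\kappa-1})) \ge \sqrt{\kappa(\*A)}/\varepsilon$ for $k$ and applying the identity $\mathrm{arccosh}(\tfrac{y+1}{y-1}) = -\log(1 - \tfrac{4}{2+2\sqrt{y}})$ reproduces $U^\CG$ exactly at $\tau = 1$, proving $\tau \le 1$; positivity of $\tau$ holds whenever CG requires at least one iteration. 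For Item 2, the numerator of $U^\CG$ is $\omega$-independent and the map $f \mapsto -\log(1 - 4/(2+\sqrt{f}))$ is strictly decreasing, so minimizing $U^\CG$ reduces to minimizing $f(\omega) = \tfrac{4}{2-\omega} + \tfrac{\mu(2-\omega)}{\omega} + \tfrac{4\nu\omega}{2-\omega}$. Setting $f'(\omega) = \tfrac{4+8\nu}{(2-\omega)^2} - \tfrac{2\mu}{\omega^2}$ to zero yields $\omega^\ast = 2/(1+\sqrt{2(1+2\nu)/\mu})$, and since $\nu \ge -1/4$ makes $f'$ strictly increasing, this is the unique minimizer. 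The hypothesis $\mu > 1$ ensures $\omega^\ast > 2\sqrt{2}-2$, which is what matters for the semi-Lipschitz analysis.

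For Item 3, the semi-Lipschitz condition with reference $z = 2\sqrt{2}-2$ only constrains pairs $\omega,\omega'$ with $|\omega - z| \le |\omega' - z|$. Since $U^\CG$ is unimodal with minimum at $\omega^\ast \ge z$, the condition is automatic on the increasing region and composes across $\omega^\ast$, so it suffices to bound $|\partial_\omega U^\CG|$ uniformly on $[z,\omega^\ast]$. I compute this derivative via the chain rule through $\omega \mapsto f \mapsto 4/(2+\sqrt{f}) \mapsto -\log(1-\cdot)$ and verify that its magnitude is maximized at $\omega = z$, where the identity $(2-\omega)/\omega = \sqrt{2}$ collapses $f$ to $2+\sqrt{2}(1+\mu+2\nu)$, $\sqrt{f}$ contributes the $\sqrt{\mu\sqrt{2}}$ factor, and combining with $f'$ produces the rational expression $(\mu+4\nu+4)/(4\mu\nu+2\mu-1)$. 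Item 4 then follows by bounding $f$ uniformly over $[2\sqrt{2}-2,\,2/(1+1/\sqrt{\mu_{\max}})]$: unimodality implies the maximum occurs at an endpoint, and direct evaluation at the two endpoints (using $(2-\omega)/\omega = \sqrt{2}$ and $(2-\omega)/\omega = 1/\sqrt{\mu_{\max}}$ respectively) yields $f \le 2+\sqrt{2}(1+\mu_{\max})$ and $f \le 2+3\sqrt{\mu_{\max}}$, both dominated by $(7+3\mu_{\max})/2$ via $(\sqrt{\mu_{\max}}-1)^2 \ge 0$. Combined with $\log(\sqrt{\kappa}/\varepsilon + \sqrt{\kappa/\varepsilon^2-1}) \le \log(2\sqrt{\kappa}/\varepsilon)$, this yields $\gamma \le (7+3\mu_{\max})/8$ as claimed.

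The main obstacle will be Item 3: computing $|\partial_\omega U^\CG|$ through the nested composition, verifying that its magnitude on $[2\sqrt{2}-2,\,\omega^\ast]$ is indeed attained at the left endpoint (which is not obvious a priori, since both $|f'|$ and the weighting factors $(1-y)^{-1}$, $f^{-1/2}$, and $(2+\sqrt{f})^{-2}$ vary nonmonotonically over the interval), and confirming that the algebra at $\omega = 2\sqrt{2}-2$ collapses to the specific constant $\tfrac{\mu+4\nu+4}{4\mu\nu+2\mu-1}\tau\sqrt{\mu\sqrt{2}}$.
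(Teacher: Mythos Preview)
Your plan for Items 1, 2, and 4 is essentially the paper's. One simplification you seem to have missed for Item 3: the chain-rule factors $(1-y)^{-1}$ and $(2+\sqrt f)^{-2}$ you list collapse, since $(1-y)(2+\sqrt f)^2=(\sqrt f-2)(\sqrt f+2)=f-4$; together with your $f'$ this gives the single fraction
\[
\frac{8(2\nu+1)\omega^2-4\mu(2-\omega)^2}{\omega(2-\omega)\sqrt f\,\bigl(\mu(2-\omega)^2+4\omega(\nu\omega+\omega-1)\bigr)},
\]
which is what the paper records as $\partial_\omega U^\CG/\tau$.

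The genuine divergence is in Item 3. The paper does \emph{not} try to show that $|\partial_\omega U^\CG|$ is maximized at the left endpoint---that monotonicity claim is exactly the obstacle you flag, and it need not hold, since the quadratic $\mu(2-\omega)^2+4\omega(\nu\omega+\omega-1)=(\mu+4\nu+4)\omega^2-4(\mu+1)\omega+4\mu$ has its vertex at $\omega=\tfrac{2(\mu+1)}{\mu+4\nu+4}$, which for some $(\mu,\nu)$ lies inside $[2\sqrt2-2,\omega^\ast]$. Instead the paper bounds each factor separately: it drops the nonnegative term $8(2\nu+1)\omega^2$ from the numerator, lower-bounds the quadratic by its global minimum value $\tfrac{16\mu\nu+8\mu-4}{\mu+4\nu+4}$, lower-bounds $\sqrt f\ge\sqrt{\mu(2-\omega)/\omega}$ by keeping only one term of $f$, and finally uses $\omega\ge2\sqrt2-2$ to get $\sqrt{(2-\omega)/\omega}\le2^{1/4}$. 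Assembling gives
\[
\frac{4\mu\cdot 2^{1/4}/\sqrt\mu}{(16\mu\nu+8\mu-4)/(\mu+4\nu+4)}
=\frac{\mu+4\nu+4}{4\mu\nu+2\mu-1}\sqrt{\mu\sqrt2}
\]
directly, with no endpoint-maximization argument. This factor-by-factor route is both easier and safer than the one you outline.

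One caveat worth knowing: the paper's displayed derivative omits the factor $C/g^2$ (with $C$ the $\omega$-independent numerator of $U^\CG-1$ and $g=-\log(1-4/(2+\sqrt f))$ its denominator) that a full chain rule produces; your proposal would retain it, so if you carry out the computation rigorously you will see a discrepancy with the stated constant.
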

\begin{proof}
	By \citet[Theorem~10.17]{hackbusch2016iterative} we have that the $k$th residual of SSOR-preconditioned CG satisfies 
	\begin{align}
		\begin{split}
			\|\*r_k(\omega)\|_2
			=\|\*b-\*A\*x_k\|_2
			\le\sqrt{\|\*A\|}\|\*A^{-1}\*b-\*x_k\|_{\*A}
			&\le\sqrt{\|\*A\|}\frac{2x^k}{1+x^{2k}}\|\*A^{-1}\*b-\*x_0\|_{\*A}\\
			&\le\frac{2\sqrt{\kappa(\*A)}x^k}{1+x^{2k}}\|\*r_0\|_2
		\end{split}
	\end{align}
	for $x
	=\frac{\sqrt{\kappa(\*{\breve W}_\omega^{-1}\*A)}-1}{\sqrt{\kappa(\*{\breve W}_\omega^{-1}\*A)}+1}
	=1-\frac2{\sqrt{\kappa(\*{\breve W}_\omega^{-1}\*A)}+1}$.
	By \citet[Theorem~7.17]{axelsson1994iterative} we have
	\begin{equation}
		\kappa(\*{\breve W}_\omega^{-1}\*A)
		\le\frac{1+\frac\mu{4\omega}(2-\omega)^2+\omega\nu}{2-\omega}
	\end{equation}
	Combining the two inequalities above yields the first result.
	For the second, we compute the derivative w.r.t. $\omega$:
	\begin{equation}
		\frac{\partial_\omega U^\CG}\tau
		=\frac{8(2\nu+1)\omega^2-4\mu(2-\omega)^2}{(2-\omega)\omega\sqrt{\frac4{2-\omega}+\frac{\mu(2-\omega)}\omega+\frac{4\nu\omega}{2-\omega}}(\mu(2-\omega)^2+4\omega(\nu\omega+\omega-1))}
	\end{equation}
	Since $\nu\in[-1/4,0]$ and $\mu>1$, we have that $\mu(2-\omega)^2+4\omega(\nu\omega+\omega-1)
	\ge(2-\omega)^2+3\omega^2-4\omega$, which is nonnegative.
	Therefore the derivative only switches signs once, at the zero of specified in the second result.
	The monotonic increase property follows by positivity of the numerator on $\omega>\omega^\ast$.
	The third property follows by noting that since $U^\CG$ is increasing on $\omega>\omega^\ast$ we only needs to consider $\omega\in[2\sqrt2-2,\omega^\ast]$, where the numerator of the derivative is negative;
	here we have
	\begin{equation}
		\frac{|\partial_\omega U^\CG|}\tau
		\le\frac{4\mu(2-\omega)}{\omega\sqrt{\frac4{2-\omega}+\frac{\mu(2-\omega)}\omega+\frac{4\nu\omega}{2-\omega}}(\mu(2-\omega)^2+4\omega(\nu\omega+\omega-1))}
		\le\frac{\mu+4\nu+4}{4\mu\nu+2\mu-1}\sqrt{\mu\sqrt 2}
	\end{equation}
	where we have used $\mu(2-\omega)^2+4\omega(\nu\omega+\omega-1)\ge\frac{16\mu\nu+8\mu-4}{\mu+4\nu+4}$ and $\omega\ge2\sqrt2-2$.
	For the last result we use the fact that $\kappa(\*{\breve W}_\omega^{-1}\*A)$ is maximal at the endpoints of the interval and evaluate it on those endpoints to bound $\gamma\le\frac12+\frac{\max\{(\mu_{\max}+1)\sqrt2,3\sqrt{\mu_{\max}}\}}4\le\frac12+\frac{3(\mu_{\max}+1)}8$.
\end{proof}

We plot the bounds from Lemma~\ref{lem:cg} in Figure~\ref{fig:cg-bounds}.
Note that $\tau\in(0,1]$ is an instance-dependent parameter that is defined to effectively scale down the function as much as possible while still being an upper bound on the cost;
it thus allows us to exploit the shape of the upper bound without having it be too loose.
This is useful since upper bounds for CG are known to be rather pessimistic, and we are able to do this because our learning algorithms do not directly access the upper bound anyway.
Empirically, we find $\tau$ to often be around $3/4$ or larger.

\subsubsection{Proof of Theorem~\ref{thm:cg}}

\begin{proof}
	By Lemma~\ref{lem:cg} the functions $U_t-1\ge\CG_t-1$ are $\left(\frac{\mu_t+4\nu_t+4}{4\mu_t\nu_t+2\mu_t-1}\sqrt{\mu_t\sqrt2},2\sqrt2-2\right)$-semi-Lipschitz and $\frac{\log(\frac2\varepsilon\sqrt{\kappa_{\smax}})}{\log\frac{\sqrt{6\mu_{\smax}+14}+4}{\sqrt{6\mu_{\smax}+14}-4}}$-bounded on $[2\sqrt2-2,\frac2{1+1/\sqrt{\mu_{\smax}}}]$;
	note that by the assumption on $\min_t\mu_t$ and the fact that $\nu_t\ge1/4$ the semi-Lipschitz constant is $\BigO(\sqrt{\mu_t})$.
	Therefore the desired regret w.r.t. any $\omega\in[2\sqrt2-2,\frac2{1+1/\sqrt{\mu_{\smax}}}]$ follows, and extends to the rest of the interval because Lemma~\ref{lem:cg}.2 also implies all functions $U_t$ are increasing away from this interval.
\end{proof}

\newpage
\section{Semi-stochastic proofs}\label{sec:semi-stochastic}

\subsection{Regularity of the criterion}

\begin{Lem}\label{lem:lipschitz-omega}
	$\|\*{\breve C}_\omega^k\*b\|_2$ is $\rho(\*{\breve C}_\omega)^{k-1}\|\*b\|_2k\sqrt{\kappa(\*A)}\left(\frac1{2-\omega_{\max}}+2\rho(\*D\*A^{-1})\right)$-Lipschitz w.r.t. $\omega\in\Omega$.
\end{Lem}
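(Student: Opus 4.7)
The proof rests on the similarity $\breve C_\omega = A^{1/2} S_\omega A^{-1/2}$ with $S_\omega := A^{-1/2}\breve C_\omega A^{1/2}$ symmetric (as noted just before Section~\ref{sec:regularity}). Writing $\breve C_\omega^k = A^{1/2} S_\omega^k A^{-1/2}$ and bounding $\|A^{\pm 1/2}\|_2$ separately reduces the target to a differentiation bound on the symmetric family $\{S_\omega^k\}$, via
\[
|\partial_\omega \|\breve C_\omega^k b\|_2|
\;\le\; \|A^{1/2}\,\partial_\omega(S_\omega^k)\,A^{-1/2} b\|_2
\;\le\; \sqrt{\kappa(A)}\,\|b\|_2\,\|\partial_\omega S_\omega^k\|_2.
\]
Symmetry of $S_\omega$ gives $\|S_\omega^j\|_2 = \rho(S_\omega)^j = \rho(\breve C_\omega)^j$, so the product rule $\partial_\omega S_\omega^k = \sum_{j=0}^{k-1} S_\omega^j(\partial_\omega S_\omega)S_\omega^{k-1-j}$ telescopes to $\|\partial_\omega S_\omega^k\|_2 \le k\rho(\breve C_\omega)^{k-1}\|\partial_\omega S_\omega\|_2$. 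The whole claim thus reduces to the scalar estimate $\|\partial_\omega S_\omega\|_2 \le \tfrac{1}{2-\omega_{\max}} + 2\rho(DA^{-1})$ on $\Omega \subset [1,\omega_{\max}]$.

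To produce that scalar estimate I would differentiate $\breve W_\omega$ explicitly. Expanding $\breve W_\omega = \tfrac{\omega}{2-\omega}(D/\omega+L)D^{-1}(D/\omega+L^T)$ and grouping powers of $\omega$ gives the clean decomposition $\breve W_\omega = \tfrac{D}{\omega(2-\omega)} + \tfrac{L+L^T}{2-\omega} + \tfrac{\omega\,LD^{-1}L^T}{2-\omega}$, and substituting $A = D + L + L^T$ produces the compact formula
\[
\partial_\omega\breve W_\omega \;=\; \frac{2\breve W_\omega - A}{\omega(2-\omega)} \;-\; \frac{D}{\omega^2}.
\]
The algebraic heart of the argument is then the pair of identities $A^{1/2}\breve W_\omega^{-1} = (I-S_\omega)A^{-1/2}$ and $\breve W_\omega^{-1} A^{1/2} = A^{-1/2}(I-S_\omega)$, both immediate from symmetry of $\breve W_\omega$ and the definitional equality $I-S_\omega = A^{1/2}\breve W_\omega^{-1} A^{1/2}$. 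Substituting these, together with the formula above, into the resolvent expression $\partial_\omega S_\omega = A^{1/2}\breve W_\omega^{-1}(\partial_\omega\breve W_\omega)\breve W_\omega^{-1} A^{1/2}$ cancels all intermediate $A^{1/2}$ factors and, after simplifying $2(I-S_\omega)-(I-S_\omega)^2 = I - S_\omega^2$, collapses the expression to
\[
\partial_\omega S_\omega \;=\; \frac{I - S_\omega^2}{\omega(2-\omega)} \;-\; \frac{(I-S_\omega)\,A^{-1/2}DA^{-1/2}\,(I-S_\omega)}{\omega^2}.
\]

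The scalar bound then follows from four pointwise estimates on $[1,\omega_{\max}]$: $\omega(2-\omega) \ge 2-\omega_{\max}$ and $\omega^2 \ge 1$; $\rho(\breve C_\omega)<1$ (convergence) yielding $\|I - S_\omega^2\|_2 \le 1$; $\|A^{-1/2}DA^{-1/2}\|_2 = \rho(DA^{-1})$ (symmetric PSD); and the PSD spectral bound $\|I - S_\omega\|_2 = \rho(A\breve W_\omega^{-1})$ for the quadratic form in the second term. The main obstacle will be the algebraic collapse in the middle step: applying $\|\cdot\|_2$ naively to $\partial_\omega\breve C_\omega^k = \sum_j\breve C_\omega^j(\partial_\omega\breve C_\omega)\breve C_\omega^{k-1-j}$ via $\|\breve C_\omega^j\|_2 \le \sqrt{\kappa(A)}\,\rho(\breve C_\omega)^j$ would cost a factor $\sqrt{\kappa(A)}$ per telescoping summand, giving a prohibitive $\kappa(A)$ in the final constant. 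Routing every intermediate power through the symmetric $S_\omega$ is what confines the $A^{\pm 1/2}$ factors to the outermost boundary and preserves the $\sqrt{\kappa(A)}$ dependence claimed in the lemma.
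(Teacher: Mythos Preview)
Your high-level route is the same as the paper's: both arguments pass to the symmetric conjugate $S_\omega=A^{-1/2}\breve C_\omega A^{1/2}$, apply the product rule to $S_\omega^k$, and use $\|S_\omega^j\|_2=\rho(\breve C_\omega)^j$ to keep a single $\sqrt{\kappa(A)}$ factor at the boundary. Where you differ is in the scalar step. The paper differentiates $\breve C_\omega$ directly (three terms, via a matrix-calculus tool) and then bounds $\|A^{-1/2}(\partial_\omega\breve C_\omega)A^{1/2}\|_2$ term-by-term with energy-norm manipulations, landing on the (looser) constant $\tfrac{4}{2-\omega_{\max}}+8\rho(DA^{-1})$. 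Your detour through $\partial_\omega\breve W_\omega=\tfrac{2\breve W_\omega-A}{\omega(2-\omega)}-\tfrac{D}{\omega^2}$ and the resolvent identity is cleaner: the closed form $\partial_\omega S_\omega=\tfrac{I-S_\omega^2}{\omega(2-\omega)}-\tfrac{(I-S_\omega)A^{-1/2}DA^{-1/2}(I-S_\omega)}{\omega^2}$ makes the spectral bounds immediate and actually recovers the constant stated in the lemma, which the paper's own calculation overshoots by a factor of four.

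One small gap to close: you identify $\|I-S_\omega\|_2=\rho(A\breve W_\omega^{-1})$ but never bound it numerically, and this is what controls the second term. The cheapest fix is to note that $\breve M_\omega=M_\omega'M_\omega$ with $M_\omega'$ the $A$-adjoint of $M_\omega$, so $S_\omega=(A^{1/2}M_\omega A^{-1/2})^T(A^{1/2}M_\omega A^{-1/2})\succeq0$; together with $\rho(S_\omega)<1$ this gives $\|I-S_\omega\|_2\le1$ and the second term is at most $\rho(DA^{-1})/\omega^2\le\rho(DA^{-1})$, comfortably inside the claimed $2\rho(DA^{-1})$. If you only use $\rho(S_\omega)<1$ without the positivity of $S_\omega$, you get $\|I-S_\omega\|_2<2$ and hence $4\rho(DA^{-1})$, which still suffices for the lemma's use downstream but does not match the stated constant.
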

\begin{proof}
	Taking the derivative, we have that
	\begin{align}\label{eq:derivative-bound}
		\begin{split}
			|\partial_\omega\|\*{\breve C}_\omega^k\*b\|_2|
			&=\frac{|\partial_\omega[(\*{\breve C}_\omega^k\*b)^T\*{\breve C}_\omega^k\*b]|}{\|\*{\breve C}_\omega^k\*b\|_2}\\
			&=\frac{\left\lvert(\*{\breve C}_\omega^k\*b)^T\sum_{i=1}^k\left[\*{\breve C}_\omega^{i-1}(\partial_\omega\*{\breve C}_\omega)\*{\breve C}_\omega^{k-i}\*b\right]\right\rvert}{\|\*{\breve C}_\omega^k\*b\|_2}\\
			&\le\left\|\sum_{i=1}^k\*{\breve C}_\omega^{i-1}(\partial_\omega\*{\breve C}_\omega)\*{\breve C}_\omega^{k-i}\*b\right\|_2\\
			&\le\sum_{i=1}^k\left\|\*{\breve C}_\omega^{i-1}(\partial_\omega\*{\breve C}_\omega)\*{\breve C}_\omega^{k-i}\*b\right\|_2\\
			&=\sum_{i=1}^k\left\|\*A^\frac12\*A^{-\frac12}\*{\breve C}_\omega^{i-1}\*A^\frac12\*A^{-\frac12}(\partial_\omega\*{\breve C}_\omega)\*A^\frac12\*A^{-\frac12}\*{\breve C}_\omega^{k-i}\*A^\frac12\*A^{-\frac12}\*b\right\|_2\\
			&\le\|\*b\|_2\sqrt{\kappa(\*A)}\sum_{i=1}^k\|(\*A^{-\frac12}\*{\breve C}_\omega\*A^\frac12)^{i-1}\|_2\|\*A^{-\frac12}(\partial_\omega\*{\breve C}_\omega)\*A^\frac12\|_2\|(\*A^{-\frac12}\*{\breve C}_\omega\*A^\frac12)^{k-i}\|_2\\
			&=\rho(\*{\breve C}_\omega)^{k-1}\|\*b\|_2k\|\*A^{-\frac12}(\partial_\omega\*{\breve C}_\omega)\*A^\frac12\|_2\sqrt{\kappa(\*A)}
		\end{split}
	\end{align}
	where the first inequality is due to Cauchy-Schwartz, the second is the triangle inequality, and the third is due to the sub-multiplicativity of the norm.
	The last line follows by symmetry of $\*A^{-\frac12}\*{\breve C}_\omega\*A^\frac12$, which implies that the spectral norm of any of power equals that power of its spectral radius, which by similarity is also the spectral radius of $\*{\breve C}_\omega$.
	Next we use a matrix calculus tool~\citep{laue2018computing} to compute\looseness-1
	\begin{align}
		\begin{split}
			\partial_\omega\*{\breve C}_\omega
			&=\left(\frac1\omega+\frac{2-\omega}{\omega^2}\right)\*A(\*D/\omega+\*L^T)^{-1}\*D(\*D/\omega+\*L)^{-1}\\
			&\qquad-\frac{2-\omega}{\omega^3}\*A(\*D/\omega+\*L^T)^{-1}\*D(\*D/\omega+\*L^T)^{-1}\*D(\*D/\omega+\*L)^{-1}\\
			&\qquad+\frac{2-\omega}{\omega^3}\*A(\*D/\omega+\*L^T)^{-1}\*D(\*D/\omega+\*L)^{-1}\*D(\*D/\omega+\*L)^{-1}\\
			&=\left(\frac1{2-\omega}+\frac1\omega\right)\*A\*{\breve W}_\omega^{-1}-\frac1{\omega^2}\*A(\*D/\omega+\*L^T)^{-1}\*D\*{\breve W}_\omega^{-1}-\frac1{\omega^2}\*A\*{\breve W}_\omega^{-1}\*D(\*D/\omega+\*L)^{-1}
		\end{split}
	\end{align}
	so since $\|\*A^\frac12\*{\breve W}_\omega^{-1}\*A^\frac12\|_2
		=\|\*I_n-\*A^{-\frac12}\*{\breve C}_\omega\*A^\frac12\|_2
		\le1+\rho(\*A^{-\frac12}\*{\breve C}_\omega\*A^\frac12)
		\le2$
	and
	\begin{align}
		\begin{split}
			\|\*A^\frac12(\*D/\omega+\*L^T)^{-1}\*D\*{\breve W}_\omega^{-1}\*A^\frac12\|_2
			&=\|\*A^\frac12\*{\breve W}_\omega^{-1}\*D(\*D/\omega+\*L)^{-1}\*A^\frac12\|_2\\
			&=\|\*{\breve W}_\omega^{-1}\*D\*W_\omega^{-1}\*A\|_{\*A}\\
			&\le\|\*{\breve W}_\omega^{-1}\*D\|_{\*A}\|\*I_n-\*M_\omega\|_{\*A}\\
			&\le2\|\*{\breve W}_\omega^{-1}\*A\|_{\*A}\|\*A^{-\frac12}\*D\*A^{-\frac12}\|_2\\
			&=2\rho(\*D\*A^{-1})\|\*I_n-\*{\breve M}_\omega\|_{\*A}
			\le4\rho(\*D\*A^{-1})
		\end{split}
	\end{align}
	we have by applying $\omega\in[1,\omega_{\max}]$ that
	\begin{equation}\label{eq:partial-bound}
		\|\*A^{-\frac12}(\partial_\omega\*{\breve C}_\omega)\*A^\frac12\|_2
		\le\frac2{2-\omega}+\frac2\omega+\frac{8\rho(\*D\*A^{-1})}{\omega^2}
		\le\frac4{2-\omega_{\max}}+8\rho(\*D\*A^{-1})
	\end{equation}
\end{proof}

\begin{Lem}\label{lem:lipschitz-c}
	$\|\*{\breve C}_\omega^k(c)\*b\|_2$ is $\frac{10}{\lambda_{\min}(\*A)+c_{\min}}\rho(\*{\breve C}_\omega)^{k-1}(c)\|\*b\|_2k\sqrt{\kappa(\*A)}$-Lipschitz w.r.t. all $c\ge c_{\min}>-\lambda_{\min}(\*A(c))$, where $(c)$ denotes matrices derived from $\*A(c)=\*A+c\*I_n$.
\end{Lem}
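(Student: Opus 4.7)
The plan is to mirror the proof of Lemma~\ref{lem:lipschitz-omega}, substituting derivatives with respect to the shift $c$ for derivatives with respect to $\omega$ throughout. Writing $\*A = \*A(c)$, $\*{\breve C}_\omega = \*{\breve C}_\omega(c)$, $\*{\breve W}_\omega = \*{\breve W}_\omega(c)$, and $\*D = \*D(c)$ for brevity, I would first repeat the chain-rule calculation in~\eqref{eq:derivative-bound} verbatim, obtaining
\begin{equation}
|\partial_c \|\*{\breve C}_\omega^k \*b\|_2|
\le \rho(\*{\breve C}_\omega)^{k-1} \|\*b\|_2 k \sqrt{\kappa(\*A)}\, \|\*A^{-1/2}(\partial_c \*{\breve C}_\omega) \*A^{1/2}\|_2,
\end{equation}
since the argument only uses that $\*A^{-1/2}\*{\breve C}_\omega \*A^{1/2}$ is symmetric and has the same spectral radius as $\*{\breve C}_\omega$, which remains true after the shift.

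Next I would compute $\partial_c \*{\breve W}_\omega$. Because $\*L$ is the strictly lower-triangular part of $\*A$, it is independent of $c$, while $\*W_\omega(c) = \*D(c)/\omega + \*L$ satisfies $\partial_c \*W_\omega = \*I_n/\omega$ and $\partial_c \*D(c)^{-1} = -\*D(c)^{-2}$. Applying the product rule to $\*{\breve W}_\omega = \frac{\omega}{2-\omega}\*W_\omega \*D^{-1}\*W_\omega^T$ and simplifying using $\*A = \*D + \*L + \*L^T$ gives the clean expression
\begin{equation}
\partial_c \*{\breve W}_\omega = \frac{\*I_n}{\omega(2-\omega)} - \frac{\omega}{2-\omega}\*L \*D^{-2}\*L^T.
\end{equation}
Then $\partial_c \*{\breve C}_\omega = -\*{\breve W}_\omega^{-1} + \*A\*{\breve W}_\omega^{-1}(\partial_c \*{\breve W}_\omega)\*{\breve W}_\omega^{-1}$ follows from differentiating $\*{\breve C}_\omega = \*I_n - \*A\*{\breve W}_\omega^{-1}$.

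With this in hand I would bound the two summands of $\|\*A^{-1/2}(\partial_c\*{\breve C}_\omega)\*A^{1/2}\|_2$ separately. The factor $1/(\lambda_{\min}(\*A) + c_{\min})$ arises from the estimate $\|\*A(c)^{-1}\|_2 \le 1/(\lambda_{\min}(\*A) + c_{\min})$, which uses the hypothesis $c_{\min} > -\lambda_{\min}(\*A)$. Specifically, $\*A^{-1/2}\*{\breve W}_\omega^{-1}\*A^{1/2} = \*A^{-1}\cdot\*A^{-1/2}(\*I - \*{\breve C}_\omega)\*A^{1/2}$ has norm at most $2\|\*A^{-1}\|_2$, since the inner factor has norm at most $2$ by the symmetry argument from the omega-case proof. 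For the second summand, the same proof shows $\|\*A^{1/2}\*{\breve W}_\omega^{-1}\*A^{1/2}\|_2 \le 2$, so after writing the term as $(\*A^{1/2}\*{\breve W}_\omega^{-1}\*A^{1/2})(\*A^{-1/2}(\partial_c \*{\breve W}_\omega)\*A^{-1/2})(\*A^{1/2}\*{\breve W}_\omega^{-1}\*A^{1/2})$, it suffices to control $\|\*A^{-1/2}(\partial_c \*{\breve W}_\omega)\*A^{-1/2}\|_2$ by $\BigO(1/(\lambda_{\min}(\*A) + c_{\min}))$.

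The main obstacle is the $\frac{\omega}{2-\omega}\|\*A^{-1/2}\*L\*D^{-2}\*L^T\*A^{-1/2}\|_2$ piece of this final bound. My plan is to dominate $\*D^{-2}$ by $\|\*D^{-1}\|_2\*D^{-1}$ (using that the diagonal entries of $\*A$ satisfy $\*D_{ii}\ge\lambda_{\min}(\*A)$, so $\*D(c)\succeq(\lambda_{\min}(\*A)+c_{\min})\*I_n$), reducing the task to bounding $\|\*A^{-1/2}\*L\*D^{-1}\*L^T\*A^{-1/2}\|_2$. Here the identity $\*{\breve W}_\omega \succeq \frac{\omega}{2-\omega}\*L\*D^{-1}\*L^T$ (valid whenever the remaining $\omega$-dependent pieces of $\*{\breve W}_\omega$ contribute non-negative energy, which holds on the relevant range of $\omega$) lets me bound the quantity by $\frac{2-\omega}{\omega}\|\*A^{-1/2}\*{\breve W}_\omega\*A^{-1/2}\|_2$, and a final conjugation computation shows $\*A^{-1/2}\*{\breve W}_\omega\*A^{-1/2} = (\*I - \*A^{-1/2}\*{\breve C}_\omega\*A^{1/2})^{-1}$, whose spectral norm is bounded on the relevant regime. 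Collecting all $\omega$-dependent multiplicative constants (uniformly bounded on $\Omega$) and triangle-inequality overheads into the single numerical constant~$10$ yields the stated Lipschitz constant. The trickiest step I anticipate is ensuring the last bound does not introduce additional $1/(1-\rho(\*{\breve C}_\omega))$-type factors, which may require using a direct PSD comparison $\*{\breve W}_\omega \preceq C(\omega)\*A + (\text{bounded remainder})$ rather than an inversion argument.
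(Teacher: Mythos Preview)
Your opening reduction via the chain rule to the quantity $\|\*A^{-1/2}(\partial_c\*{\breve C}_\omega)\*A^{1/2}\|_2$ is correct and identical to the paper's, as is your bound on the first summand $\|\*A^{-1/2}\*{\breve W}_\omega^{-1}\*A^{1/2}\|_2\le 2\|\*A^{-1}\|_2$. Your formula $\partial_c\*{\breve W}_\omega=\frac{\*I_n}{\omega(2-\omega)}-\frac{\omega}{2-\omega}\*L\*D^{-2}\*L^T$ is also correct.

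The gap is in the second summand. The PSD comparison you invoke, $\*{\breve W}_\omega\succeq\frac{\omega}{2-\omega}\*L\*D^{-1}\*L^T$, is equivalent to $\frac{1}{2-\omega}\bigl(\*D/\omega+\*L+\*L^T\bigr)\succeq 0$, i.e.\ to $\*A\succeq(1-1/\omega)\*D$. Since $\lambda_{\min}(\*D^{-1/2}\*A\*D^{-1/2})=1-\beta$, this holds only for $\omega\le 1/\beta$. For the matrices of interest $\beta$ is typically close to~$1$ (e.g.\ $\beta\approx 0.99$ gives $1/\beta\approx 1.01$), so the inequality fails on essentially all of $[1,\omega_{\max}]$. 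Your fallback---bounding $\|\*A^{-1/2}\*{\breve W}_\omega\*A^{-1/2}\|_2=\|(\*I-\*A^{-1/2}\*{\breve C}_\omega\*A^{1/2})^{-1}\|_2$---does introduce exactly the $1/(1-\rho(\*{\breve C}_\omega))$ factor you are worried about, and there is no evident PSD comparison of the form $\*{\breve W}_\omega\preceq C(\omega)\*A+\text{(bounded)}$ that avoids it, since such a bound would again need control on $\*L\*D^{-1}\*L^T$ relative to $\*A$.

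The paper sidesteps this entirely by \emph{not} routing through $\partial_c\*{\breve W}_\omega$. It differentiates the explicit expression $\*{\breve C}_\omega=\*I_n-\frac{2-\omega}{\omega}\*A\*W_\omega^{-T}\*D\*W_\omega^{-1}$ directly in $c$, producing four terms of the form $\*{\breve W}_\omega^{-1}$, $\*A\*W_\omega^{-T}\*W_\omega^{-1}$, $\*A\*W_\omega^{-T}\*{\breve W}_\omega^{-1}$, and $\*A\*{\breve W}_\omega^{-1}\*W_\omega^{-1}$. Each of these is a product of \emph{inverses} only, and is bounded in the $\*A$-energy norm via $\|\*I_n-\*M_\omega\|_{\*A}\le 2$ and $\|\*I_n-\*{\breve M}_\omega\|_{\*A}\le 2$ together with a single factor of $\|\*A^{-1}\|_2\le 1/(\lambda_{\min}(\*A)+c_{\min})$. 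No bare $\*L$, $\*D$, or $\*{\breve W}_\omega$ ever appears, so no PSD comparison or $1/(1-\rho)$ factor is needed, and the constants sum to at most~$10$. Your factorization through $\partial_c\*{\breve W}_\omega$ is algebraically equivalent but groups terms in a way that re-introduces the forward operator $\*L$; to recover a clean bound you would essentially have to undo that grouping and arrive back at the paper's decomposition.
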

\begin{proof}
	We take the derivative as in the above proof of Lemma~\ref{lem:lipschitz-omega}:
	\begin{equation}
		|\partial_c\|\*{\breve C}_\omega^k(c)\*b\|_2|
		=\rho(\*{\breve C}_\omega(c))^{k-1}\|\*b\|_2k\|\*A^{-\frac12}(c)(\partial_c\*{\breve C}_\omega(c))\*A^\frac12(c)\|_2\sqrt{\kappa(\*A(c))}
	\end{equation}
	We then again apply the matrix calculus tool of \citet{laue2018computing} to get
	\begin{align}
		\begin{split}
			\partial_c\*{\breve C}_\omega(c)
			&=-\frac{2-\omega}\omega(\*D(c)/\omega+\*L^T)^{-1}\*D(c)(\*D(c)/\omega+\*L)^{-1}\\
			&\quad+\frac{2-\omega}{\omega^2}\*A(c)(\*D(c)/\omega+\*L^T)^{-2}\*D(c)(\*D(c)/\omega+\*L)^{-1}\\
			&\quad-\frac{2-\omega}\omega\*A(c)(\*D(c)/\omega+\*L^T)^{-1}(\*D(c)/\omega+\*L)^{-1}\\
			&\quad+\frac{2-\omega}{\omega^2}\*A(c)(\*D(c)/\omega+\*L^T)^{-1}\*D(c)(\*D(c)/\omega+\*L)^{-2}\\
			&=-\frac{2-\omega}\omega(\*{\breve W}_\omega^{-1}(c)+\*A(c)\*W_\omega^{-T}(c)\*W_\omega^{-1}(c))\\
			&\quad+\frac{2-\omega}{\omega^2}\*A(c)(\*W_\omega^{-T}(c)\*{\breve W}_\omega^{-1}(c)+\*{\breve W}_\omega^{-1}(c)\*W_\omega^{-1}(c))
		\end{split}
	\end{align}
	By symmetry of $\*{\breve W}_\omega^{-1}(c)$ we have
	\begin{align}
		\begin{split}
			\|\*A^{-\frac12}(c)\*{\breve W}_\omega^{-1}(c)\*A^\frac12(c)\|_2
			=\|\*{\breve W}_\omega^{-1}(c)\|_{\*A(c)}
			&\le\|\*{\breve W}_\omega^{-1}(c)\*A(c)\|_{\*A(c)}\|\*A^{-1}(c)\|_2\\
			&=\|\*I_n-\*{\breve M}_\omega(c)\|_{\*A(c)}\rho(\*A^{-1}(c))
			\le2\rho(\*A^{-1}(c))
		\end{split}
	\end{align}
	Furthermore
	\begin{align}
		\begin{split}
			\|\*A^\frac12(c)\*W_\omega^{-T}(c)\*W_\omega^{-1}(c)\*A^\frac12(c)\|_2
			&=\|\*A^{-\frac12}(c)(\*I_n-\*M_\omega^T(c))(\*I_n-\*M_\omega(c))\*A^{-\frac12}(c)\|_2\\
			&\le\|\*A^{-1}\|_2\|\*I_n-\*M_\omega(c)\|_{\*A(c)}^2
			\le4\rho(\*A^{-1}(c))
		\end{split}
	\end{align}
	and
	\begin{equation}
		\|\*A^\frac12(c)\*W_\omega^{-T}(c)\*{\breve W}_\omega^{-1}(c)\*A^\frac12(c)\|_2
		=\|\*A^\frac12(c)\*{\breve W}_\omega^{-1}(c)\* W_\omega^{-1}(c)\*A^\frac12(c)\|_2
		\le4\rho(\*A^{-1}(c))
	\end{equation}
	so by the lower bound of $\frac1{\lambda_{\min}(\*A)+c_{\min}}$ on $\rho(\*A^{-1}(c))$ we have the result.
\end{proof}

\begin{algorithm}[!t]
	\DontPrintSemicolon
	\KwIn{$\*A\in\R^{n\times n}$, $\*b\in\R^n$, parameter $\omega\in(0,2)$, initial vector $\*x\in\R^n$, tolerance $\varepsilon>0$}
	$\*D+\*L+\*L^T\gets\*A$ \tcp*{$\*D$ is diagonal, $\*L$ is strictly lower triangular}
	$\*{\breve W}_\omega\gets\frac\omega{2-\omega}(\*D/\omega+\*L)\*D^{-1}(\*D/\omega+\*L^T)$\tcp*{compute third normal form}
	$\*r_0\gets\*b-\*A\*x$\tcp*{compute initial residual}
	\For{$k=0,\dots$}{
		\If{$\|\*r_k\|_2>\varepsilon$}{
			\KwRet{$k$}\tcp*{return iteration count (for use in learning)}
		}
		$\*x=\*x+\*{\breve W}_\omega^{-1}\*r_k$\tcp*{solve two triangular systems and update vector}
		$\*r_{k+1}\gets\*b-\*A\*x$\tcp*{compute the next residual}
	}
	\KwOut{$k$}
	\caption{\label{alg:ssor}
		Symmetric successive over-relaxation with an absolute convergence condition.
	}
\end{algorithm}

\newpage
\subsection{Anti-concentration}

\begin{Lem}\label{lem:anti-concentration}
	Let $\*X\in\R^{n\times n}$ be a nonzero matrix and $\*b=m\*u$ be a product of independent random variables $m\ge0$ and $\*u\in\R^n$ with $m^2\in[0,n]$ a $\chi^2$-squared random variable with $n$ degrees of freedom truncated to the interval $[0,n]$ and $\*u$ distributed uniformly on the surface of the unit sphere.
	Then for any interval $I=(\varepsilon,\varepsilon+\Delta]\subset\R$ for $\varepsilon,\Delta>0$ we have that $\Pr(\|\*X\*b\|_2\in I)\le\frac{2\Delta}{\rho(\*X)}\sqrt{\frac2\pi}$.
\end{Lem}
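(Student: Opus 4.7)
The plan is to reduce the problem to a one-dimensional density bound via the SVD of $\*X$. Write $\*X = \*U \Sigma \*V^T$ with $\Sigma = \diag(\sigma_1, \ldots, \sigma_n)$ and $\sigma_1 = \|\*X\|_2 \geq \rho(\*X)$. Since $\*u$ is uniform on the unit sphere and independent of $m$, the law of $\*b = m\*u$ is rotationally invariant, so $\*V^T \*b$ has the same distribution as $\*b$; hence $\|\*X \*b\|_2 = \|\Sigma \*V^T \*b\|_2$ is distributed as $\|\Sigma \*b\|_2$. It therefore suffices to prove the bound for $\*X = \Sigma$, which is in fact stronger since $\sigma_1 \ge \rho(\*X)$ will appear in the denominator.

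Next, I would split $\*b = (b_1, \*b_{>1})$ and write $\|\Sigma \*b\|_2^2 = \sigma_1^2 b_1^2 + S$, where $S := \sum_{i > 1} \sigma_i^2 b_i^2$; the event $\|\Sigma \*b\|_2 \in (\varepsilon, \varepsilon + \Delta]$ then becomes $\sigma_1^2 b_1^2 \in ((\varepsilon^2 - S)_+, ((\varepsilon + \Delta)^2 - S)_+]$. Conditioned on $\*b_{>1}$, $b_1$ is a truncated standard normal on $[-M, M]$ with $M = \sqrt{(n - \|\*b_{>1}\|^2)_+}$; averaging out $\*b_{>1}$ and using $P_n := \Pr(\chi_n^2 \leq n) \geq 1/2$ to absorb a factor of at most $2$ from the truncation normalization, the marginal density of $b_1$ is bounded by $2\phi(0) = \sqrt{2/\pi}$, which is the source of the leading $2$ in the target bound.

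The change of variable $b_1 \mapsto y = \sqrt{\sigma_1^2 b_1^2 + S}$ on each sign branch (Jacobian $|dy/db_1| = \sigma_1 \sqrt{y^2 - S}/y$) then gives the conditional density bound $f_{Y \mid \*b_{>1}}(y) \leq \sqrt{2/\pi}\, y/(\sigma_1 \sqrt{y^2 - S})$ for $y > \sqrt{S}$. Integrating over $y \in (\varepsilon, \varepsilon + \Delta]$ and averaging over $\*b_{>1}$ yields
\begin{equation*}
\Pr(\|\*X \*b\|_2 \in I) \leq \frac{\sqrt{2/\pi}}{\sigma_1}\, \E\!\left[\sqrt{(\varepsilon + \Delta)^2 - S} - \sqrt{(\varepsilon^2 - S)_+}\right].
\end{equation*}
Using the identity $\sqrt{(\varepsilon+\Delta)^2 - s} - \sqrt{(\varepsilon^2 - s)_+} = \int_\varepsilon^{\varepsilon+\Delta} y\, \mathbf{1}\{y^2 > s\}/\sqrt{y^2 - s}\, dy$, I would swap the order of integration and argue that the resulting quantity is at most $2\Delta$, concluding via $\sigma_1 \geq \rho(\*X)$.

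The hard part is this final step. A pointwise-in-$S$ bound only gives $\sqrt{(\varepsilon+\Delta)^2 - S} - \sqrt{(\varepsilon^2 - S)_+} \leq \sqrt{\Delta(2\varepsilon + \Delta)}$ (attained at $S = \varepsilon^2$), which is loose by a factor of $\sqrt{(2\varepsilon + \Delta)/\Delta}$ relative to the target $\Delta$ when $\varepsilon \gg \Delta$. Recovering the correct $\Delta$-scaling requires using that $S$ does not concentrate near $\varepsilon^2$: since $\*b_{>1}$ is (a marginal of) a truncated Gaussian, $\sqrt{S}$ has a bounded density, and the Fubini swap converts the singularity of the integrand near $s = \varepsilon^2$ into an integral against this smooth density, yielding the desired linear dependence on $\Delta$.
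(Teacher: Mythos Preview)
Your overall route parallels the paper's: both pass from the truncated $\*b$ to an untruncated Gaussian $\*g$ at the cost of a factor of $2$ (via $\Pr(\chi_n^2\le n)\ge\tfrac12$), diagonalize so that $\|\*X\*g\|_2^2=\sum_i\Lambda_i\chi_i^2$ with $\Lambda_1=\|\*X\|_2^2\ge\rho(\*X)^2$, and then separate the top contribution $\Lambda_1\chi_1^2$ from the remainder $S=\sum_{i>1}\Lambda_i\chi_i^2$. One order-of-operations remark: you condition on $\*b_{>1}$ and then appeal to the \emph{marginal} density of $b_1$, but the conditional density of $b_1$ given $\*b_{>1}$ is $\phi(\cdot)/(2\Phi(M)-1)$, which is not uniformly bounded when $M=\sqrt{n-\|\*b_{>1}\|_2^2}$ is small. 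The clean version---and what you presumably intend---is to strip the truncation first and only then condition on $\*g_{>1}$, so that $g_1$ remains exactly standard normal with density $\le\phi(0)$.

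The real gap is your closing paragraph. The assertion that ``$\sqrt S$ has a bounded density'' is false for general $\*X$: if exactly one $\sigma_i$ with $i>1$ is nonzero then $\sqrt S=\sigma_i|g_i|$ has density $2\phi(0)/\sigma_i$, which blows up as $\sigma_i\to0$, and if all lower $\sigma_i$ vanish then $S\equiv0$ has no density at all. Worse, nothing in the hypotheses prevents $S$ from concentrating near $\varepsilon^2$ (e.g.\ take many $\sigma_i\approx\varepsilon/\sqrt{n-1}$), which is exactly the regime where your pointwise bound is saturated; and even after your Fubini swap, integrating $y/\sqrt{y^2-s}$ against a merely bounded density of $S$ or $\sqrt S$ still leaves an $\varepsilon$-dependent factor rather than the pure $\Delta$ you need. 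The paper avoids this by never changing variables back to $\|\*X\*g\|_2$: it stays with the \emph{squared} norm, writes its density as the convolution $h=h_1*h_{-1}$ of the densities of $\Lambda_1\chi_1^2$ and $S$, and bounds $\Pr\bigl(\|\*X\*g\|_2^2\in(\varepsilon^2,(\varepsilon+\Delta)^2]\bigr)$ by $\|u*h_1*h_{-1}\|_{L^\infty}\le\|u*h_1\|_{L^\infty}\|h_{-1}\|_{L^1}$ via Young's inequality, with $u$ the indicator of an interval of length $2\varepsilon\Delta+\Delta^2$. This removes $S$ in one stroke (using only $\|h_{-1}\|_{L^1}\le1$) and leaves the explicit integral $\int_a^{a+2\varepsilon\Delta+\Delta^2}e^{-x/(2\Lambda_1)}/\sqrt{2\pi\Lambda_1 x}\,dx$; the paper takes the supremum over $a\ge\varepsilon^2$ and uses that $\sqrt{a+2\varepsilon\Delta+\Delta^2}-\sqrt{a}$ is maximized there at $a=\varepsilon^2$ with value exactly $\Delta$. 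So the structural difference is: you push the square root inside and are then forced to control the law of $S$, whereas the paper keeps the square outside and lets the convolution structure peel $S$ off for free.
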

\begin{proof}
	Let $f$ be the p.d.f. of $\*b$ and $g$ be the p.d.f. of $\*g\sim\mathcal N(\*0_n,\*I_n)$.
	Then by the law of total probability and the fact that $\*b$ follows the distribution of $\*g$ conditioned on $\|\*b\|_2^2\le n$ we have that
	\begin{align}
		\begin{split}
			\Pr(\|\*X\*b\|_2\in I)
			&=\int_{\|\*x\|_2^2\le n}\Pr(\|\*X\*b\|_2\in I|\*b=\*x)df(\*x)\\
			&=\frac{\int_{\|\*x\|_2^2\le n}\Pr(\|\*X\*b\|_2\in I|\*b=\*x)dg(\*x)}{\int_{\|\*x\|_2^2>n}dg(\*x)}\\
			&\le2\int_{\|\*x\|_2^2\le n}\Pr(\|\*X\*g\|_2\in I|\*g=\*x)dg(\*x)\\
			&\le2\int_{\R^n}\Pr(\|\*X\*g\|_2\in I|\*g=\*x)dg(\*x)
			=2\Pr(\|\*X\*g\|_2\in I)
		\end{split}
	\end{align}
	where the second inequality uses the fact that a $\chi^2$ random variable with $n$ degrees of freedom has more than half of its mass below $n$.
	Defining the orthogonal diagonalization $\*Q^T\Lambda\*Q=\*X^T\*X$ and noting that $\*Q\*g\sim\mathcal N(\*0_n,\*I_n)$, we then have that
	\begin{equation}
		\|\*X\*g\|_2^2
		=(\*Q\*g)^T\Lambda\*Q\*g
		=\sum_{i=1}^n\Lambda_{[i,i]}\chi_i^2
	\end{equation}
	for i.i.d. $\chi_1,\dots,\chi_n\sim\mathcal N(0,1)$.
	Let $h$, $h_1$, and $h_{-1}$ be the densities of $\sum_{i=1}^n\Lambda_{[i,i]}\chi_i^2$, $\Lambda_{[1,1]}\chi_1^2$, and $\sum_{i=2}^n\Lambda_{[i,i]}\chi_i^2$, respectively, and let $u(a)$ be the uniform measure on the interval $(a,a+2\varepsilon\Delta+\Delta^2]$.
	Then since the density of the sum of independent random variables is their convolution, we can apply Young's inequality to obtain
	\begin{align}
		\begin{split}
			\Pr(\|\*X\*g\|_2\in I)
			&=\Pr(\|\*X\*g\|_2^2\in(\varepsilon^2,(\varepsilon+\Delta)^2])\\
			&\le\max_{a\ge\varepsilon^2}\int_{a}^{a+2\varepsilon\Delta+\Delta^2}h(x)dx\\
			&=\max_{a\ge\varepsilon^2}\int_{-\infty}^\infty u(x-a)h(x)dx\\
			&=\|u\ast h\|_{L^\infty([\varepsilon,\infty))}\\
			&=\|u\ast h_1\ast h_{-1}\|_{L^\infty([\varepsilon,\infty))}\\
			&\le\|u\ast h_1\|_{L^\infty([\varepsilon,\infty))}\|h_{-1}\|_{L^1([\varepsilon,\infty))}\\
			&\le\max_{a\ge\varepsilon^2}\int_a^{a+2\varepsilon\Delta+\Delta^2}h_1(x)dx\\
			&=\max_{a\ge\varepsilon}\int_a^{a+2\varepsilon\Delta+\Delta^2}\frac{e^{-\frac x{2\Lambda_{[1,1]}}}}{\sqrt{2\pi\Lambda_{[i,i]}x}}dx\\
			&\le\max_{a\ge\varepsilon^2}\sqrt{\frac{2(a+2\varepsilon\Delta+\Delta^2)}{\pi\Lambda_{[i,i]}}}-\sqrt{\frac{2a}{\pi\Lambda_{[i,i]}}}
			=\Delta\sqrt{\frac2{\pi\Lambda_{[i,i]}}}
		\end{split}
	\end{align}
	Substituting into the first equation and using $\Lambda_{[i,i]}=\|\*X\|_2^2\ge\rho(\*X)^2$ yields the result.
\end{proof}

\newpage
\subsection{Lipschitz expectation}

\begin{Lem}\label{lem:lipschitz-general}
	Suppose $\*b=m\*u$, where $m$ and $\*u$ are independent random variables with $\*u$ distributed uniformly on the surface of the unit sphere and $m^2\in[0,n]$ a $\chi^2$-squared random variable with $n$ degrees of freedom truncated to the interval $[0,n]$.
	Define $K$ as in Corollary~\ref{cor:ssor-iterations}, $\beta=\min_x\rho(\*I_n-\*D_x^{-1}\*A_x)$, and $\SSOR(x)=\min_{\|\*{\breve C}_x^k\*b\|_2\le\varepsilon}k$ to be the number of iterations to convergence when the defect reduction matrix depends on some scalar $x\in\X$ for some bounded interval $\X\subset\R$.
	If $\|\*{\breve C}_x^k\*b\|_2$ is $L\rho(\*{\breve C}_x)^{k-1}$-Lipschitz a.s. w.r.t. any $x\in\X$ then $\E\SSOR$ is $\frac{32K^3L\sqrt{2/\pi}}{\beta^4}$-Lipschitz w.r.t. $x$.
\end{Lem}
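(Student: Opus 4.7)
The plan is to follow the two-ingredient recipe previewed in Section~\ref{sec:regularity}: reduce Lipschitzness of $\E\SSOR$ to the probability that $\SSOR(x)$ and $\SSOR(x')$ differ, then control that probability by combining the assumed per-iteration Lipschitzness of $\|\breve{\*C}_x^k\*b\|_2$ with the anti-concentration estimate of Lemma~\ref{lem:anti-concentration}. Concretely, since $\SSOR$ takes integer values in $[0,K]$ by Corollary~\ref{cor:ssor-iterations}, Jensen's inequality gives
\begin{equation}
	|\E\SSOR(x)-\E\SSOR(x')|
	\le\E|\SSOR(x)-\SSOR(x')|
	\le K\Pr(\SSOR(x)\ne\SSOR(x')).
\end{equation}
The event $\{\SSOR(x)\ne\SSOR(x')\}$ is contained in the union over $k\in[K]$ of the event $E_k$ that exactly one of $\|\breve{\*C}_x^k\*b\|_2$ and $\|\breve{\*C}_{x'}^k\*b\|_2$ is $\le\varepsilon$, so a union bound reduces the task to controlling each $\Pr(E_k)$.

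Next, I would fix $k$ and assume WLOG $\|\breve{\*C}_x^k\*b\|_2\le\varepsilon<\|\breve{\*C}_{x'}^k\*b\|_2$. The hypothesis of the lemma then gives, via the mean value theorem applied a.s. to the (continuous) map $y\mapsto\|\breve{\*C}_y^k\*b\|_2$,
\begin{equation}
	0<\|\breve{\*C}_{x'}^k\*b\|_2-\varepsilon\le L\rho(\breve{\*C}_\xi)^{k-1}|x-x'|
\end{equation}
for some $\xi$ between $x$ and $x'$, so $\|\breve{\*C}_{x'}^k\*b\|_2$ lies in an interval of length $\Delta_k=L\rho(\breve{\*C}_\xi)^{k-1}|x-x'|$ above $\varepsilon$. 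Lemma~\ref{lem:anti-concentration} applied to the matrix $\breve{\*C}_{x'}^k$ (whose spectral radius is $\rho(\breve{\*C}_{x'})^k$) then yields $\Pr(E_k)\le2\Delta_k/\rho(\breve{\*C}_{x'})^k\sqrt{2/\pi}$; the symmetric case is handled identically, contributing an extra factor of two. Summing over $k\le K$ produces the core bound
\begin{equation}
	\Pr(\SSOR(x)\ne\SSOR(x'))
	\le 4L|x-x'|\sqrt{\tfrac2\pi}\sum_{k=1}^K\frac{\rho(\breve{\*C}_\xi)^{k-1}}{\rho(\breve{\*C}_{x'})^k}.
\end{equation}

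The main obstacle is step three: controlling the sum $\sum_{k=1}^K\rho(\breve{\*C}_\xi)^{k-1}/\rho(\breve{\*C}_{x'})^k$ in terms of $\beta$. The idea is that for SSOR under the standing assumptions the spectral radius of the defect reduction matrix is lower-bounded in terms of the Jacobi spectral radius---for the consistently-ordered SPD setting of Section~\ref{sec:setup} one can show $\rho(\breve{\*C}_\omega)\gtrsim\beta^2$ on the operating range---so each term can be bounded by $\rho(\breve{\*C}_\xi)^{k-1}/\rho(\breve{\*C}_{x'})^k\le\rho_{\max}^{k-1}/\rho_{\min}^k$, and a uniform comparison of $\rho(\breve{\*C}_\xi)$ to $\rho(\breve{\*C}_{x'})$ lets the geometric-like series telescope to $\BigO(K/\beta^4)$ (two factors of $\beta^2$: one from the $1/\rho(\breve{\*C}_{x'})$ pulled out, one from the worst-case comparison of $\rho$'s). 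Combining this with the outer factor of $K$ from Jensen and multiplying by $4L\sqrt{2/\pi}|x-x'|$ gives the claimed Lipschitz constant $32K^3L\sqrt{2/\pi}/\beta^4$. The subtle step here---verifying the uniform lower bound $\rho(\breve{\*C}_x)\gtrsim\beta^2$ and the comparability of $\rho(\breve{\*C}_\xi)$ with $\rho(\breve{\*C}_{x'})$ across the interval---is what quantitatively delivers the stated $\beta^{-4}$ dependence and is the only non-formulaic ingredient in the argument.
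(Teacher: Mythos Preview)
Your overall scaffold---Jensen, then per-iteration Lipschitzness placing the residual in a short interval, then anti-concentration---matches the paper. The gap is exactly the step you flag as ``the main obstacle,'' and your proposed resolution does not work.

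After the mean-value step your interval length is $\Delta_k=L\rho(\*{\breve C}_\xi)^{k-1}|x-x'|$, but $\xi$ depends on both $k$ and on the random vector $\*b$; what you actually control is $\Delta_k\le L\rho_{\max}^{k-1}|x-x'|$ with $\rho_{\max}=\max_{y\in[x,x']}\rho(\*{\breve C}_y)$. Anti-concentration is applied to $\*{\breve C}_{x'}^k$ (or $\*{\breve C}_x^k$), so the summand is $\rho_{\max}^{k-1}/\rho(\*{\breve C}_{x'})^k$, which blows up geometrically in $k$ whenever $\rho_{\max}>\rho(\*{\breve C}_{x'})$. Nothing in the hypotheses of the lemma gives you the ``uniform comparison'' of $\rho(\*{\breve C}_\xi)$ to $\rho(\*{\breve C}_{x'})$ you invoke; continuity of $x\mapsto\rho(\*{\breve C}_x)$ is neither assumed nor derived, and even if it were, you would need a quantitative modulus to absorb the factor $(\rho_{\max}/\rho_{x'})^{K}$. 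The paper avoids this entirely by a small but decisive trick: for nearby $x_1<x_2$ it introduces the auxiliary point $x'=\argmax_{y\in[x_1,x_2]}\rho(\*{\breve C}_y)$ and bounds $|\E\SSOR(x_i)-\E\SSOR(x')|$ for $i=1,2$ separately, always applying Lemma~\ref{lem:anti-concentration} to $\*{\breve C}_{x'}^k$. Then the Lipschitz numerator $L\rho_{x'}^{k-1}$ and the anti-concentration denominator $\rho(\*{\breve C}_{x'})^k=\rho_{x'}^k$ share the same base, collapsing each term to $1/\rho_{x'}$ independent of $k$; the triangle inequality recovers the bound for $x_1,x_2$.

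Your $\beta$-accounting is also off. The $\beta^{-4}$ does not come from ``two factors of $\beta^2$''; it comes from a single lower bound $\rho(\*{\breve C}_x)\ge\rho(\*M_x)^2\ge\bigl(\beta/(1+\sqrt{1-\beta^2})\bigr)^4\ge\beta^4/16$ (using similarity and Young's formula for $\rho(\*M_\omega)$), so that $1/\rho_{x'}\le16/\beta^4$. There is no second comparison-of-$\rho$'s factor once you use the argmax device.
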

\begin{proof}
	First, note that by \citet[Theorem~6.26]{hackbusch2016iterative}
	\begin{equation}
		\rho(\*{\breve C}_x)
		=\rho(\*{\breve M}_x)
		=\|\*{\breve M}_x^2\|_{\*A_x}
		\ge\rho(\*M_x)^2
		\ge\left(\frac\beta{1+\sqrt{1-\beta^2}}\right)^4
		\ge\frac{\beta^4}{16}
	\end{equation}
	Now consider any $x_1,x_2\in\X$ s.t. $|x_1-x_2|\le\frac{\varepsilon\beta^4\sqrt{\pi/2}}{2K^3L}$, assume w.l.o.g. that $x_1<x_2$, and pick $x'\in[x_1,x_2]$ with maximal $\rho(\*{\breve C}_x)$.
	Then setting $\rho_{x'}=\rho(\*{\breve C}_{x'})$ we have that $\|\*{\breve C}_{x_i}^k\*b\|_2$ is $L\rho_{x'}^{k-1}$-Lipschitz for both $i=1,2$ and all $k\in[K]$.
	Therefore starting with Jensen's inequality we have that
	\begin{align}
		\begin{split}
			&|\E\SSOR(x_i)-\E\SSOR(x')|\\
			&\le\E|\SSOR(x_i)-\SSOR(x')|\\
			&=\sum_{k=1}^K\sum_{l=1}^K|k-l|\Pr(\SSOR(x_i)=k\cap\SSOR(x')=l)\\
			&\le K\sum_{k=1}^K\left(\sum_{l<k}\Pr(\|\*{\breve C}_{x_i}^l\*b\|_2>\varepsilon\cap\|\*{\breve C}_{x'}^l\*b\|_2\le\varepsilon)+\sum_{l>k}\Pr(\|\*{\breve C}_{x_i}^k\*b\|_2\le\varepsilon\cap\|\*{\breve C}_{x'}^k\*b\|_2>\varepsilon)\right)\\
			&\le K\sum_{k=1}^K\sum_{l<k}\Pr(\|\*{\breve C}_{x'}^l\*b\|_2\in(\varepsilon-L\rho_{x'}^{l-1}|x_i-x'|,\varepsilon])\\
			&\quad+K\sum_{k=1}^K\sum_{l>k}\Pr(\|\*{\breve C}_{x'}^k\*b\|_2\in(\varepsilon,\varepsilon+L\rho_{x'}^{k-1}|x_i-x'|])\\
			&\le K\sum_{k=1}^K\left(\sum_{l<k}\frac{2L\rho_{x'}^{l-1}\sqrt{2/\pi}}{\rho(\*{\breve C}_{x'}^l)}|x_i-x'|+\sum_{k=1}^K\sum_{l>k}\frac{2L\rho_{x'}^{k-1}\sqrt{2/\pi}}{\rho(\*{\breve C}_{x'}^k)}|x_i-x'|\right)\\
			&\le\frac{2K^3L\sqrt{2/\pi}}{\rho_{x'}}|x_i-x'|
			\le\frac{32K^3L\sqrt{2/\pi}}{\beta^4}
		\end{split}
	\end{align}
	where the second inequality follows by the definition of $\SSOR$, the third by Lipschitzness, and the fourth by the anti-concentration result of Lemma~\ref{lem:anti-concentration}.
	Since this holds for any nearby pairs $x_1<x_2$, taking the summation over the interval $\X$ completes the proof.
\end{proof}

\begin{Cor}\label{cor:lipschitz-omega}
	Under the assumptions of Lemma~\ref{lem:lipschitz}, the function $\E_{\*b}\SSOR(\*A,\*b,\omega)$ is $\frac{32K^4\sqrt{2n\kappa(\*A)/\pi}}{\beta^4}\left(\frac1{2-\omega_{\max}}+2\rho(\*D\*A^{-1})\right)$-Lipschitz w.r.t. $\omega\in[1,\omega_{\max}]\subset(0,2)$.
\end{Cor}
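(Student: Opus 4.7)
The plan is to derive the corollary as a direct combination of Lemma~\ref{lem:lipschitz-omega} and the general-purpose Lemma~\ref{lem:lipschitz-general}. The role of Lemma~\ref{lem:lipschitz-omega} is to supply the per-iteration Lipschitz constant of $\|\*{\breve C}_\omega^k\*b\|_2$ in $\omega$, while Lemma~\ref{lem:lipschitz-general} packages together the anti-concentration argument (Lemma~\ref{lem:anti-concentration}), Jensen's inequality, and the union over pairs of iteration counts to upgrade this to Lipschitzness of the expected iteration count $\E_\*b\SSOR(\*A,\*b,\omega)$.

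First I would identify the quantity $L$ appearing in the hypothesis of Lemma~\ref{lem:lipschitz-general}, i.e.\ the constant such that $\|\*{\breve C}_\omega^k\*b\|_2$ is $L\rho(\*{\breve C}_\omega)^{k-1}$-Lipschitz in $\omega$ almost surely. Lemma~\ref{lem:lipschitz-omega} gives the per-$k$ Lipschitz constant $k\|\*b\|_2\sqrt{\kappa(\*A)}\bigl(\tfrac{1}{2-\omega_{\max}}+2\rho(\*D\*A^{-1})\bigr)\cdot\rho(\*{\breve C}_\omega)^{k-1}$, so I need to bound the two factors $k$ and $\|\*b\|_2$ uniformly. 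For $k$, since SSOR terminates in at most $K$ iterations (Corollary~\ref{cor:ssor-iterations}), we have $k\le K$. For $\|\*b\|_2$, the distributional assumption $\*b=m\*u$ with $m^2$ truncated to $[0,n]$ and $\|\*u\|_2=1$ gives $\|\*b\|_2=m\le\sqrt{n}$ almost surely. This yields the constant
\begin{equation}
L=K\sqrt{n\kappa(\*A)}\left(\frac{1}{2-\omega_{\max}}+2\rho(\*D\*A^{-1})\right)
\end{equation}
valid for all $k\in[K]$ with probability one.

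Next I would simply invoke Lemma~\ref{lem:lipschitz-general} with $x=\omega$ and $\X=[1,\omega_{\max}]$; the hypotheses are satisfied because the $\*b$-distribution is exactly the truncated Gaussian of the lemma, the quantity $\beta=\min_\omega\rho(\*I_n-\*D^{-1}\*A)$ does not depend on $\omega$ (since $\*D$ and $\*A$ are fixed), and we have just verified the a.s.\ Lipschitz bound. The conclusion gives that $\E_\*b\SSOR$ is $\tfrac{32K^3L\sqrt{2/\pi}}{\beta^4}$-Lipschitz in $\omega$; substituting the expression for $L$ above produces exactly the claimed constant $\tfrac{32K^4\sqrt{2n\kappa(\*A)/\pi}}{\beta^4}\bigl(\tfrac{1}{2-\omega_{\max}}+2\rho(\*D\*A^{-1})\bigr)$.

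There is no real obstacle here since both ingredients are already proved; the only subtlety worth double-checking is that the $k$-dependent per-iteration Lipschitz constant in Lemma~\ref{lem:lipschitz-omega} really may be replaced by its worst case over $k\le K$ without breaking the proof of Lemma~\ref{lem:lipschitz-general}. This is fine because the latter's bound on $\E|\SSOR(x_i)-\SSOR(x')|$ only sums over $k,l\le K$, so using the uniform-in-$k$ constant $L$ is exactly what the telescoping/anti-concentration step requires. The truncation of $m^2$ is also what makes the almost-sure bound $\|\*b\|_2\le\sqrt{n}$ available, which is essential---without it, $L$ would have to scale with a random $\|\*b\|_2$ and the argument would require extra care with conditional expectations.
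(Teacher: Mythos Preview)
Your proposal is correct and follows essentially the same approach as the paper: combine Lemma~\ref{lem:lipschitz-omega} (per-iteration Lipschitz bound on $\|\*{\breve C}_\omega^k\*b\|_2$) with Lemma~\ref{lem:lipschitz-general} (Lipschitz expectation from anti-concentration), using $\|\*b\|_2\le\sqrt n$ from the truncation and $k\le K$ to absorb the $k$-factor into $L$. The paper's own proof is a one-liner invoking the same two lemmas and the same $\|\*b\|_2\le\sqrt n$ observation; your write-up simply makes explicit the substitution $L=K\sqrt{n\kappa(\*A)}\bigl(\tfrac1{2-\omega_{\max}}+2\rho(\*D\*A^{-1})\bigr)$ and the check that the $k$-dependent constant can be uniformized over $k\in[K]$.
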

\begin{proof}
	Apply Lemmas~\ref{lem:lipschitz-omega} and~\ref{lem:lipschitz}, noting that $\|\*b\|_2\le\sqrt n$ by definition.
\end{proof}

\begin{Cor}\label{cor:lipschitz-c}
	Under the assumptions of Lemma~\ref{lem:lipschitz}, the function $\E_{\*b}\SSOR(\*A(c),\*b,\omega)$ is $\max_c\frac{320K^4\sqrt{2n\kappa(\*A(c))/\pi}}{\beta^4(\lambda_{\min}(\*A)+c_{\min})}$-Lipschitz w.r.t. $c\ge c_{\min}>-\lambda_{\min}$.
\end{Cor}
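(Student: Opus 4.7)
The plan is to mirror the proof of Corollary~\ref{cor:lipschitz-omega}, only swapping in the shift-variant iterate bound. Specifically, I would combine Lemma~\ref{lem:lipschitz-c} (Lipschitzness in $c$ of the residual norm at each iteration) with the generic reduction in Lemma~\ref{lem:lipschitz-general}, which converts any such pointwise Lipschitz bound into a Lipschitz constant for $\E\SSOR$. Lemma~\ref{lem:lipschitz-general} was phrased with an abstract scalar parameter $x$, so it applies to $x=c$ just as readily as to $x=\omega$; what changes is only the concrete $L$ that gets plugged in.

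First I would read off the effective $L$ to supply to Lemma~\ref{lem:lipschitz-general}. Lemma~\ref{lem:lipschitz-c} says $\|\breve{\*C}_\omega^k(c)\*b\|_2$ is Lipschitz in $c$ with constant $\tfrac{10 k \|\*b\|_2 \sqrt{\kappa(\*A(c))}}{\lambda_{\min}(\*A)+c_{\min}}\,\rho(\breve{\*C}_\omega(c))^{k-1}$, which already has the required $\rho(\breve{\*C}_x)^{k-1}$ factorization. The stray factor $k$ is bounded by $K$ because only iterations up to $K$ ever occur (Corollary~\ref{cor:ssor-iterations}), and the radial-truncation hypothesis on $\*b$ forces $\|\*b\|_2 \le \sqrt{n}$ almost surely, so the relevant constant is $L = \tfrac{10 K \sqrt{n\,\kappa(\*A(c))}}{\lambda_{\min}(\*A)+c_{\min}}$. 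Substituting this into Lemma~\ref{lem:lipschitz-general}'s conclusion $\tfrac{32 K^3 L \sqrt{2/\pi}}{\beta^4}$ produces exactly $\tfrac{320 K^4 \sqrt{2 n\,\kappa(\*A(c))/\pi}}{\beta^4(\lambda_{\min}(\*A)+c_{\min})}$, and taking the maximum over $c$ (to absorb the leftover $c$-dependence hidden in $\kappa(\*A(c))$) gives one uniform constant valid on the whole interval.

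There is essentially no new analytic obstacle; the real work has already been done in Lemma~\ref{lem:lipschitz-c}, where the matrix-calculus derivative of $\breve{\*C}_\omega(c)$ in $c$ was controlled, and in Lemma~\ref{lem:lipschitz-general}, which carries the Young-inequality / anti-concentration chain from Lemma~\ref{lem:anti-concentration}. The only care required is bookkeeping: verifying that the $\beta$ appearing in the statement matches the one Lemma~\ref{lem:lipschitz-general} expects, namely the infimum of $\rho(\*I_n - \*D(c)^{-1}\*A(c))$ taken over the admissible shifts $c$ rather than over $\omega$, and checking that the distributional hypotheses on $\*b$ inherited through the reference to Lemma~\ref{lem:lipschitz} simultaneously deliver both $\|\*b\|_2\le\sqrt{n}$ and the applicability of the anti-concentration lemma inside Lemma~\ref{lem:lipschitz-general}.
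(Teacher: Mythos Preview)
Your proposal is correct and matches the paper's approach exactly: the paper's proof is the one-liner ``Apply Lemma~\ref{lem:lipschitz-c} and~\ref{lem:lipschitz-general}, noting that $\|\*b\|_2\le\sqrt n$ by definition,'' and you have correctly unpacked the bookkeeping (bounding $k\le K$, $\|\*b\|_2\le\sqrt n$, substituting the resulting $L$ into the $\tfrac{32K^3L\sqrt{2/\pi}}{\beta^4}$ conclusion, and taking the maximum over $c$). The only minor slip is that the distributional hypotheses on $\*b$ come from Lemma~\ref{lem:lipschitz-general}, not from Lemma~\ref{lem:lipschitz}; the reference to Lemma~\ref{lem:lipschitz} in the statement is just for the diagonal-shift setup $\*A(c)=\*A+c\*I_n$ with $c_{\min}>-\lambda_{\min}(\*A)$.
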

\begin{proof}
	Apply Lemma~\ref{lem:lipschitz-c} and~\ref{lem:lipschitz}, noting that $\|\*b\|_2\le\sqrt n$ by definition.
\end{proof}

\newpage
\section{Experimental details}\label{app:experimental-details}

\begin{figure}[!t]
	\centering
	\includegraphics[trim={8mm 8mm 8mm 8mm}, width=0.32\linewidth]{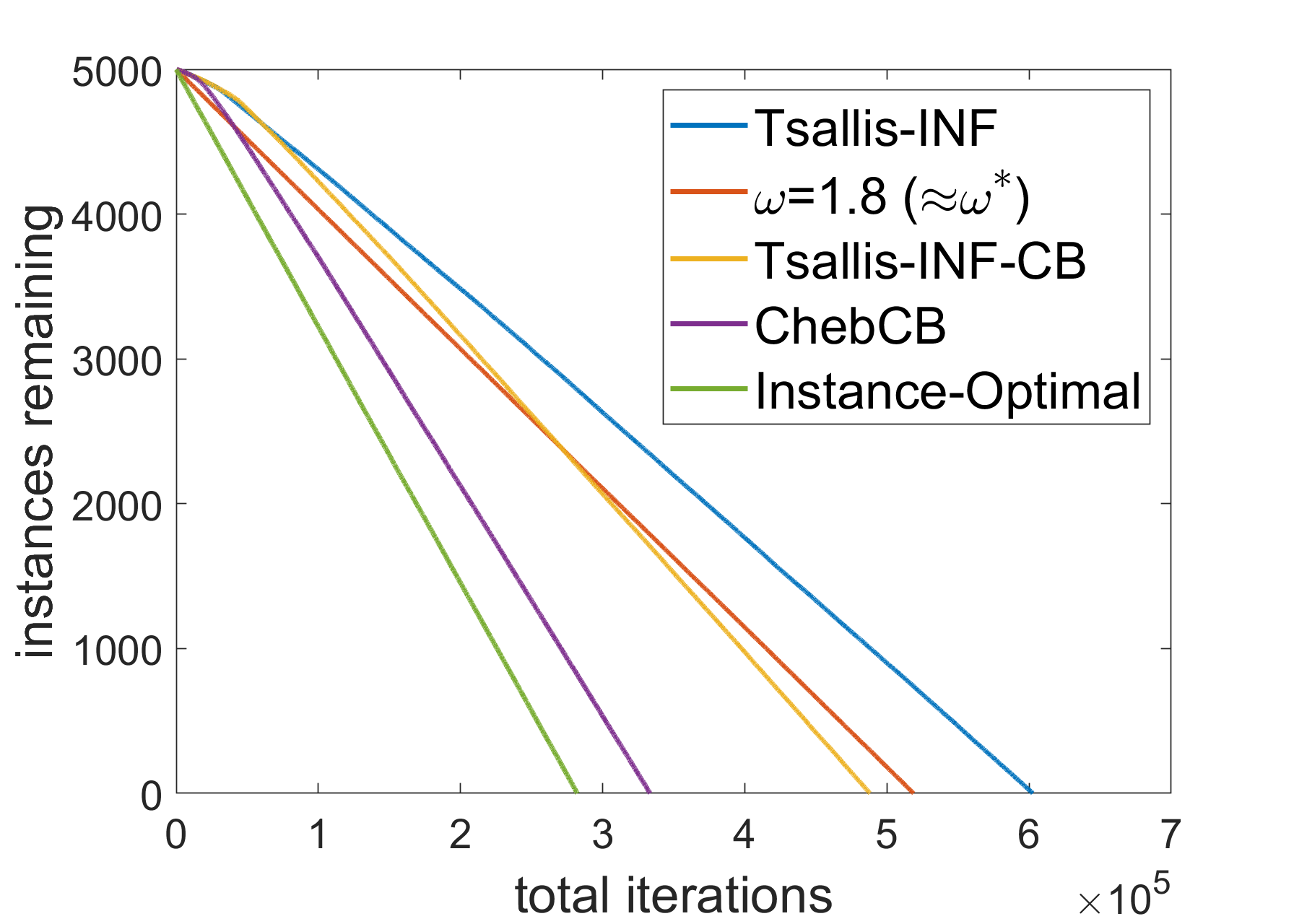}
	\includegraphics[trim={8mm 8mm 8mm 8mm}, width=0.32\linewidth]{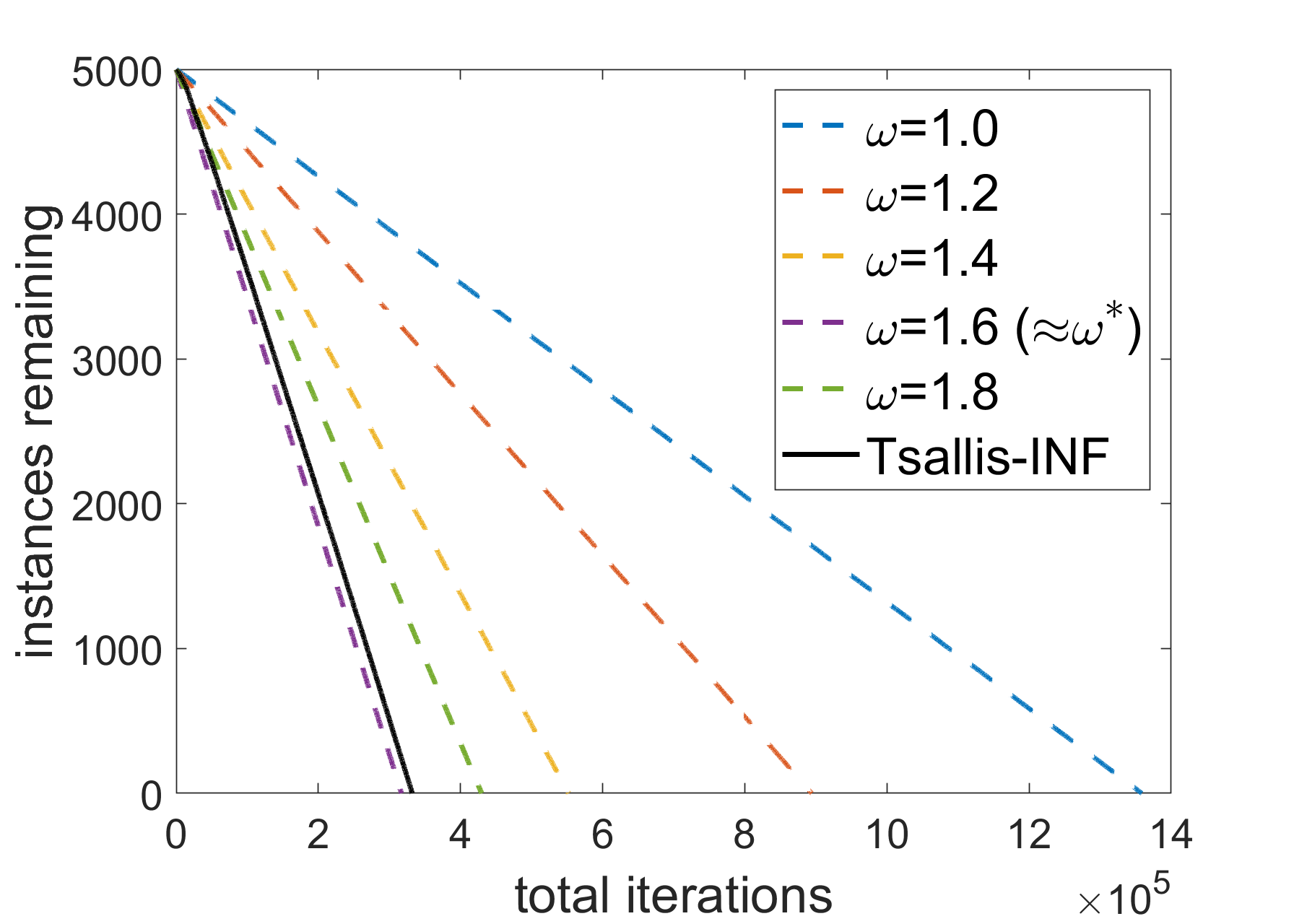}
	\includegraphics[trim={8mm 8mm 8mm 8mm}, width=0.32\linewidth]{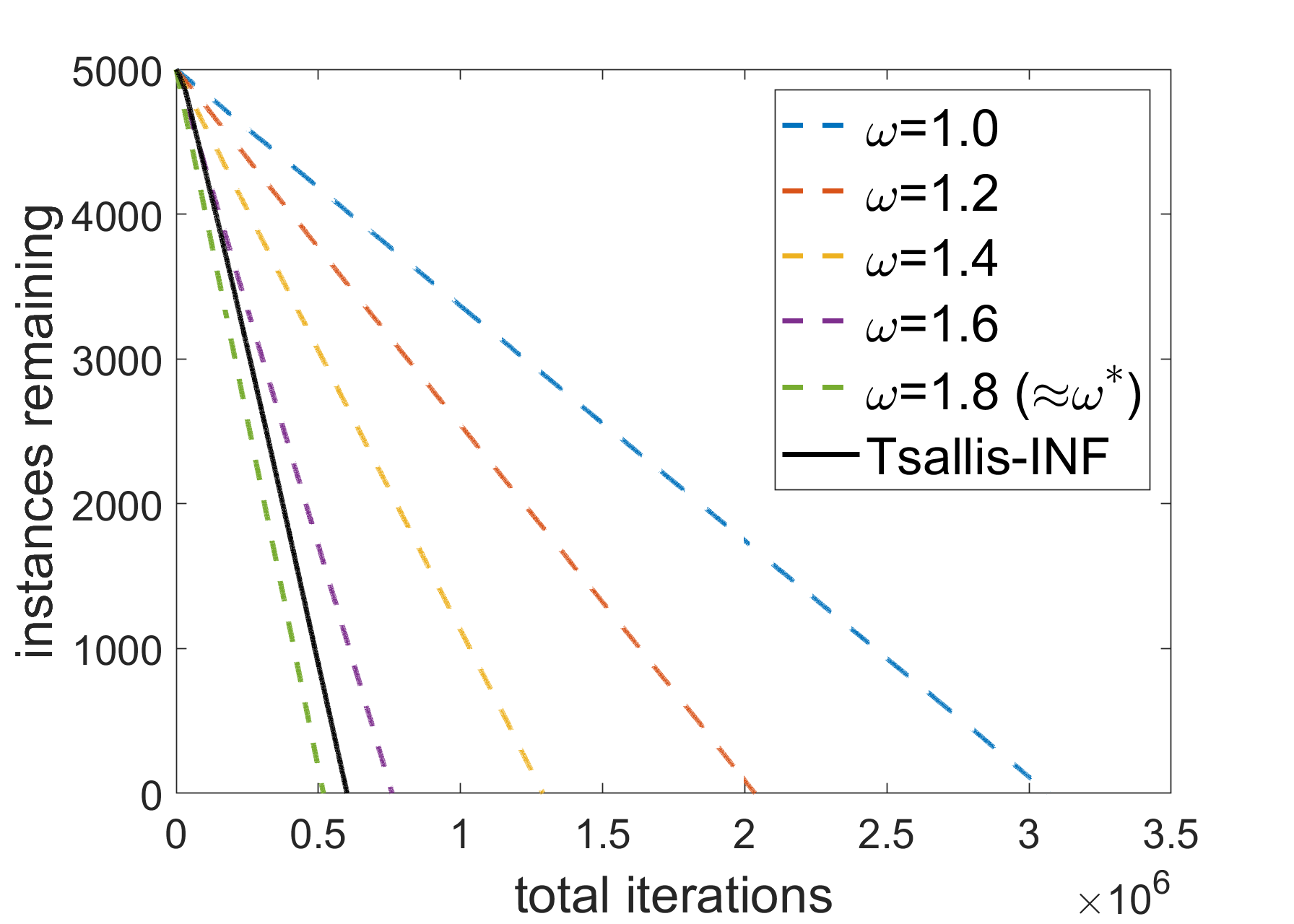}
	\caption{\label{fig:learning-appendix}
		Average across forty trials of the time needed to solve 5K diagonally shifted systems with $\*A_t=\*A+\frac{12c-3}{20}\*I_n$ for $c\sim$ Beta$(\frac12,\frac32)$ (center) and $c\sim$ Beta$(2,6)$ (otherwise).\looseness-1
	}
\end{figure}

All numerical results were generated in MATLAB on a laptop and can be re-generated by running the scripts available at \url{https://github.com/mkhodak/learning-to-relax}.
Note that, since we do not have access to problem parameters, we experimented with a few approaches to setting them automatically or heuristically on the simplest (low variance) setting below and then used the same settings for the rest of the experiments (high variance and heat equation).
Furthermore, because the default step-size/learning rate settings in both algorithms are rather pessimistic, we use more aggressive time-varying approaches in practice.
For Tsallis-INF we set $\eta_t=2/\sqrt t$, which is what is used in the anytime variant~\citep{zimmert2021tsallis}.
As for ChebCB, we use an increasing schedule $\eta_t=\BigO(t)$;
note that~\citet{simchi-levi2021bypassing} also use an increasing learning rate schedule for setting inverse gap-weighted probabilities.\looseness-1

\subsection{Basic experiments}

For the experiments in Figure~\ref{fig:learning} (center-left), we sample $T=$ 5K scalars $c_t\sim$ Beta$(2,6)$ and run Tsallis-INF, Tsallis-INF-CB ChebCB, the instance optimal policy $\omega^\ast(c)$, and five values of $\omega$---evenly spaced on $[1,1.8]$---on all instances $\*A_t=\*A+\frac{12c_t-3}{20}\*I_n$, in random order.
Note that Figure~\ref{fig:learning-appendix}~(left) contains results of the same setup, except with $c_t\sim$ Beta$(\frac12,\frac32)$, the higher-variance setting from Figure~\ref{fig:asymptotic}.
The center and right figures contain results for the sub-optimal fixed $\omega$ parameters, compared to Tsallis-INF.
For both experiments the matrix $\*A$ is again the $100\times100$ Laplacian of a square-shaped domain generated in MATLAB, and the targets $\*b$ are re-sampled at each instance from the Gaussian truncated radially to have norm $\le n$.
The reported results are averaged of forty trials.\looseness-1

\subsection{Accelerating a 2D heat equation solver}

We then consider applying our methods to the task of numerical simulation of the 2D heat equation 
\begin{equation}
\partial_tu(t,\*x)=\kappa(t)\Delta_\*xu(t,\*x)+f(t,\*x)
\end{equation}
over the domain $\*x\in[0,1]^2$ and $t\in[0,5]$.
We use a five-point finite difference discretization with size denoted $n_\*x=1/\Delta_\*x$, so that when an implicit time-stepping method such as Crank-Nicolson is applied with timestep $\Delta_t$ the numerical simulation requires sequentially solving a sequence of linear systems $(\*A_t,\*b_t)$ with $\*A_t=\*I_{(n_\*x-1)^2}-\kappa((t+1/2)\Delta_t)\*A$ for a fixed matrix $\*A$ (that depends on $\Delta_t$ and $\Delta_{\*x}$) corresponding to the discrete Laplacian of the system~\citep[Equation~12.29]{leveque2007finite}.
Each $\*A_t$ is positive definite, and moreover note that mathematically the setting is equivalent to an instantiation of the diagonal offset setting introduced in Section~\ref{sec:diagonally}, since the linear system is equivalent to $c_t\*I_{(n_\*x-1)^2}-\*A=c_t\*b_t$ for $c_t=1/\kappa((t+1/2)\Delta_t)$.
However, for simplicity we will simply pass $\kappa((t+1/2\Delta_t))$ as contexts to CB methods.

To complete the problem specification, define the {\em bump function} $b_{\*c,r}(\*x)$ centered at $\*c\in\R^2$ with radius $r>0$ to be $\exp\left(-\frac1{1-\|\*x-\*c\|_2^2/r^2}\right)$ if $\|\*x-\*c\|_2<r$ and $0$ otherwise.
We set the initial condition $u(0,\*x)=\*b_{(\frac12~\frac12),\frac14}(\*x)$, forcing function $f(t,\*x)=32\*b_{(\frac12+\cos(16\pi t)/4,\frac12+\cos(16\pi t)/4),1/8}(\*x)$, and diffusion coefficient $\kappa(t)=\max\{0.01\sin(2\pi t)),-10\sin(2\pi t)\}$.
The forcing function---effectively a bump circling around the center of the domain---is chosen to ensure that the linear system solutions are not too close to each other or to zero, and the diffusion coefficient function---plotted in Figure~\ref{fig:heat-appendix} (left)---is chosen to make the instance-optimal $\omega$ behave roughly periodically (c.f. Figure~\ref{fig:learning} (right)).\looseness-1

We set $\Delta_t=10^{-3}$, thus making $T=5000$, and evaluate our approach across five spatial discretizations: $n_\*x=25,50,100,200,400$.
The resulting linear systems have size $n=(n_\*x-1)^2$.
At each timestep, we solve each linear system using CG to relative precision $\varepsilon=10^{-8}$.
The baselines we consider are vanilla (unpreconditioned) CG and SSOR-CG with $\omega=1$ or $\omega=1.5$;
as comparators we also evaluate performance when using the best fixed $\omega$ in hindsight at each round, and when using the {\em instance-optimal} $\omega$ at each round.
Recall that we showed that Tsallis-INF has sublinear regret w.r.t. the surrogate cost of the best fixed $\omega$ of SSOR-CG (Theorem~\ref{thm:cg}), and that ChebCB has sublinear regret w.r.t. the instance-optimal $\omega$ for SOR in the semi-stochastic setting of Section~\ref{sec:stochastic}.
Since both methods are randomized, we take the average of three runs.

In Figure~\ref{fig:learning} (center-right) we show that both methods substantially outperform all three baselines, except at $n_\*x=25$ and $n_\*x=50$, when $\omega=1.5$ almost recovers the best fixed parameter in hindsight;
furthermore, ChebCB does better then the best fixed $\omega$ in hindsight in most cases.
In Figure~\ref{fig:heat-appendix} (right) we also show that---at high-enough dimensions---this reduction in the number of iterations leads to an overall improvement in the {\em runtime} of the simulation.
Several other pertinent notes include:
\begin{enumerate}
	\item At lower dimensions the learning-based approaches have slower overall runtime because of overhead associated with learning;
	ChebCB in particular solves a small constrained linear regression at each step.
	However, this overhead does {\em not} scale with matrix dimension, and we expect data-driven approaches to have the greatest impact in higher dimensions.
	\item Vanilla (unpreconditioned) CG is faster than SSOR-preconditioned CG with $\omega=1$ despite having more iterations because each iteration is more costly.
	\item To get a comparative sense of the scale of the improvement, we can consider the results in \citet[Table 1]{li2023learning}, who learn a (deep-learning-based) preconditioner for CG to simulate the 2D heat equation.
	In the precision $10^{-8}$ case their solver takes 2.3 seconds, while Gauss-Seidel (i.e. SSOR with $\omega=1$) takes 2.995 seconds, a roughly 1.3x improvement. In our most closely comparable setting, Tsallis-INF and ChebCB are roughly 2.4x and 3.5x faster than Gauss-Seidel, respectively (and have other advantages such as simplicity and being deployable in an online fashion without pretraining).
	We caveat this comparison by noting that \citet{li2023learning} consider a statistical, not online, learning setup, and their matrix structure may be significantly different---it results from a finite element method rather than finite differences.
	The only way to achieve a direct comparisons is via access to code;
	as of this writing it is not public.\looseness-1
\end{enumerate}

Lastly, we give additional details for the plot in Figure~\ref{fig:learning} (right), which shows the actions taken by the various algorithms for a simulation at $n_\*x=100$.
For clarity all lines are smoothed using a moving average with a window of 25, and for Tsallis-INF and ChebCB we also shade $\pm$ one standard deviation computed over this window.
The plot shows that Tsallis-INF converges to an action close to the best fixed $\omega$ in hindsight, and that ChebCB fairly quickly follows the instance-optimal path, with the standard deviation of both decreasing over time.

\begin{figure}[!t]
	\centering
	\includegraphics[trim={6mm 6mm 6mm 6mm}, width=0.32\linewidth]{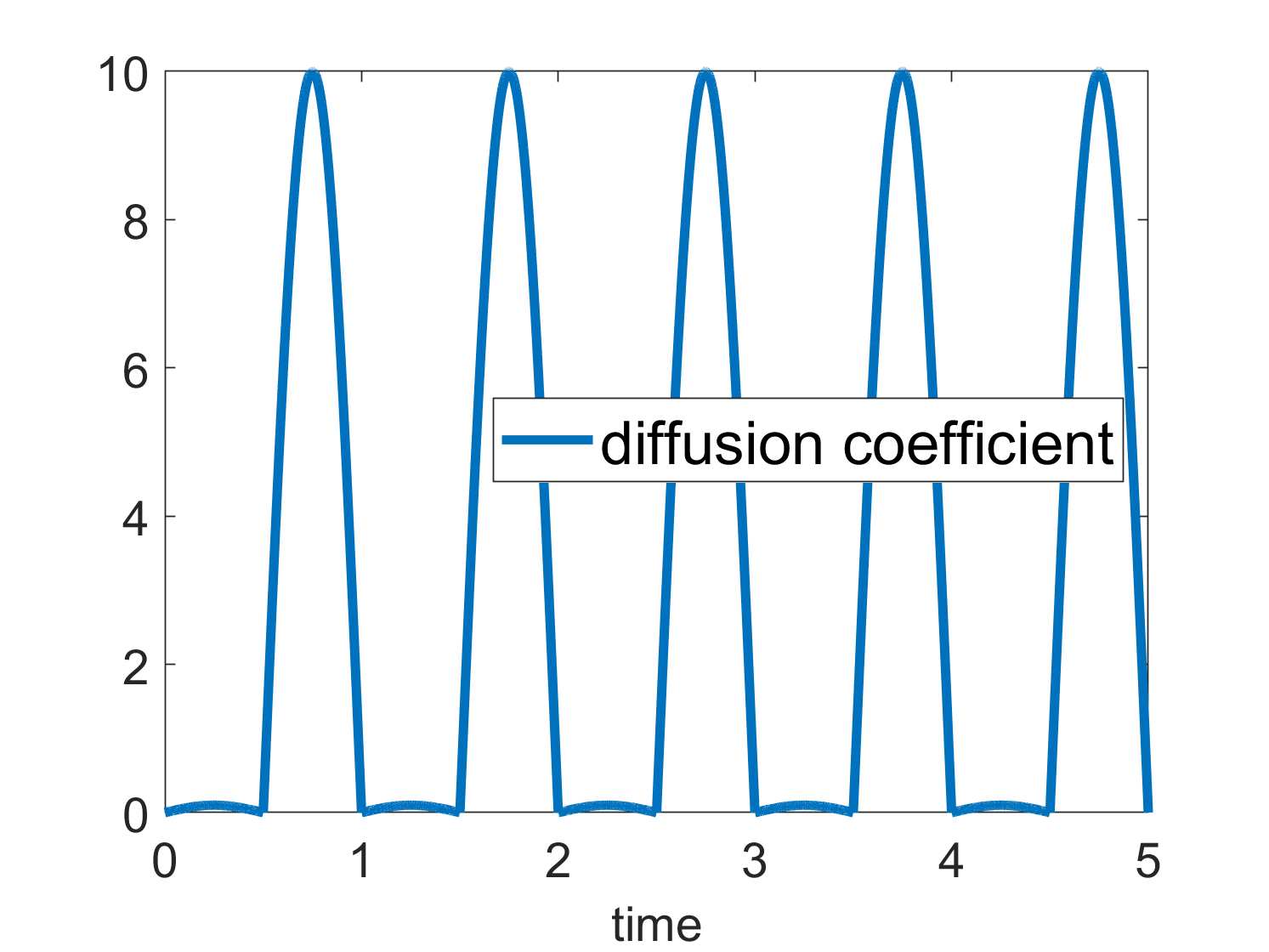}
	\includegraphics[trim={6mm 6mm 6mm 6mm}, width=0.32\linewidth]{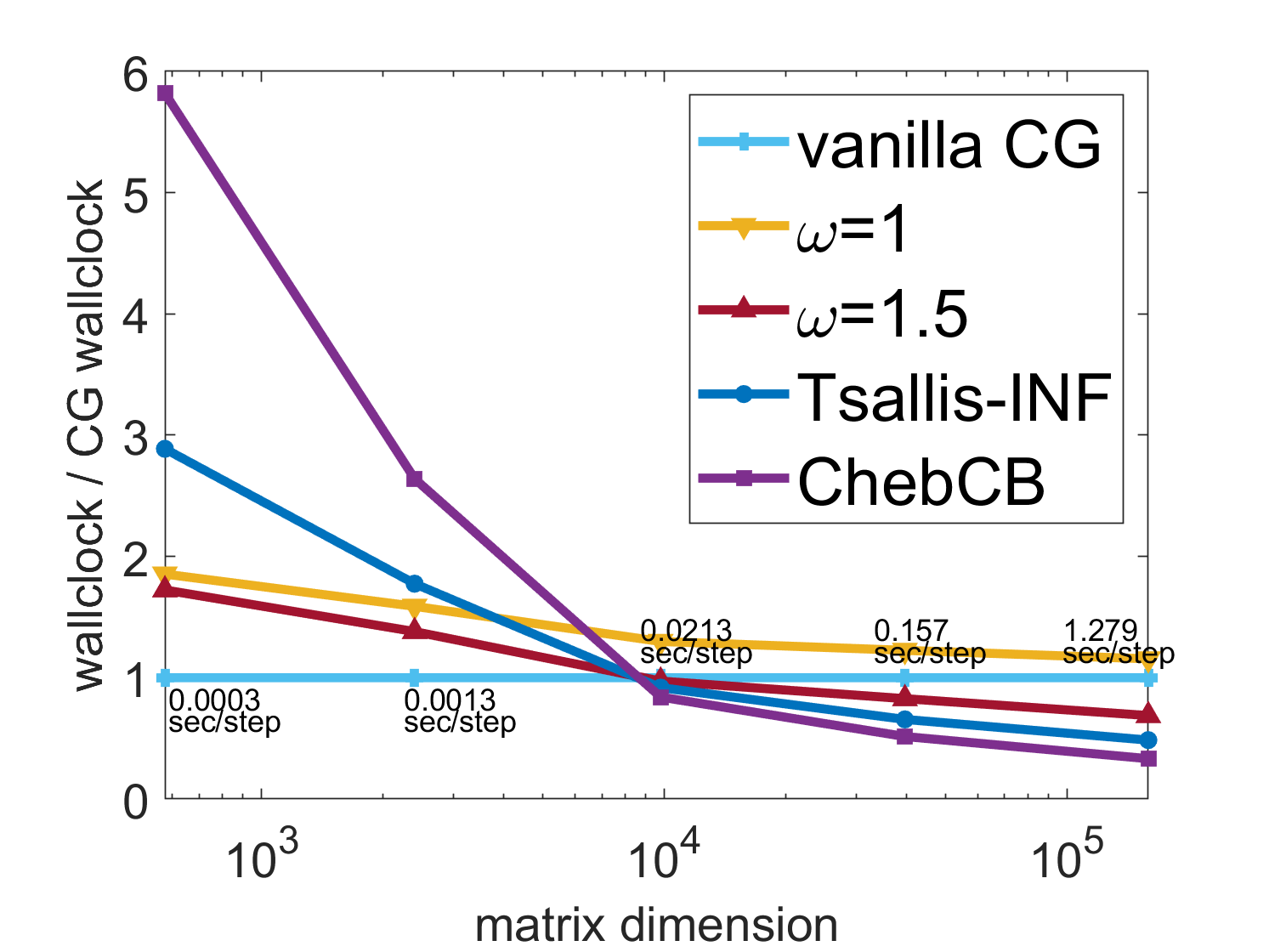}
	\includegraphics[trim={6mm 6mm 6mm 6mm}, width=0.32\linewidth]{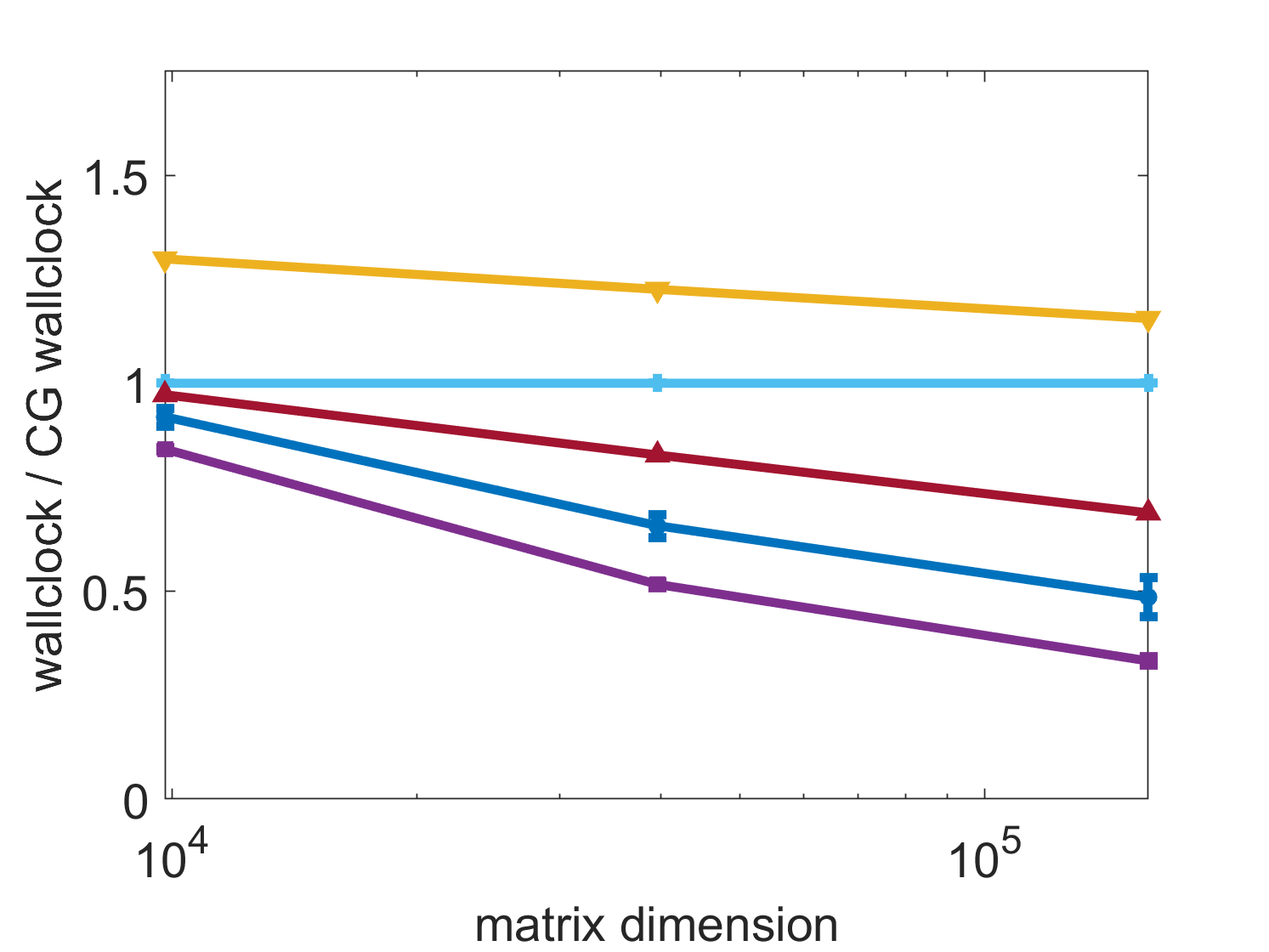}
	\caption{\label{fig:heat-appendix}
		Diffusion coefficient as a function of time (left) and normalized total wallclock time required to run 5K steps of the numerical simulation (center and right).
		The numbers within the middle plot corresponding to the average number of seconds required to run a step of the simulation using vanilla CG.
		The right-hand plot shows 95\% confidence intervals across the three trials for Tsallis-INF and ChebCB at the three higher-dimensional evaluations.
		\looseness-1
	}
\end{figure}

\end{document}